\documentclass[11pt]{article}
\usepackage[utf8x]{inputenc}
\usepackage[a4paper, left=3cm, right=3cm, top=3cm, bottom=3cm]{geometry}%

\usepackage{xr}
\usepackage{times}
\usepackage{dsfont}
\usepackage{amsmath,amsfonts,amssymb}
\usepackage{amsthm}
\usepackage{mathtools}
\usepackage{color}
\usepackage{natbib}
\bibliographystyle{chicago.bst}

\RequirePackage{fix-cm}

\usepackage{mathrsfs}
\usepackage{relsize}
\usepackage{algorithmic}
\usepackage{algorithm}
\usepackage{appendix}

\usepackage{booktabs}

\usepackage{tikz} 
\usetikzlibrary{shapes,arrows,calc,decorations.pathreplacing}
\usetikzlibrary{arrows.meta}

\externaldocument{supp_materials}

\definecolor{myred}{rgb}{0.77, 0.0, 0.1}
\definecolor{newgreen}{RGB}{0,153,0}
\definecolor{myturq}{rgb}{0.1, 0.7, 0.7}

\usepackage{hyperref}

\hypersetup{
    colorlinks,
    linkcolor=myred, 
    citecolor=blue, 
    linktoc=all 
  }

\renewcommand{\leq}{\leqslant}
\renewcommand{\geq}{\geqslant}

\newcommand{\wt}{\widetilde}
\newcommand{\wh}{\widehat}
\newcommand{\pp}{\; : \; }

\newcommand{\N}{\mathbf N}

\newcommand{\R}{\mathbf R}

\newcommand{\F}{\mathscr F}

\newcommand{\Y}{\mathcal Y}


\newcommand{\G}{\mathscr{G}}

\usepackage{xspace}
\newcommand{\ie}{\textit{i.e.}\@\xspace} 

\newcommand{\iid}{i.i.d.\@\xspace}


\newcommand{\E}{\mathbb E}
\renewcommand{\P}{\mathbb P}

\newcommand{\cond}{\,|\,}

\newcommand{\indic}[1]{\mathbf{1}( #1 )}
\newcommand{\probas}{\mathcal{P}}


\newcommand{\expdist}{\mathsf{Exp}}

\newcommand{\dirichletdist}{\mathsf{Dir}}
\newcommand{\uniformdist}{\mathcal{U}}

\newcommand{\prior}{\pi}

\newcommand{\cumloss}{L}
\newcommand{\risk}{R}

\newcommand{\predspace}{\widehat{\Y}}
\newcommand{\pred}{\widehat{y}}
\newcommand{\predrule}{f}

\newcommand{\Experts}{\mathcal{E}}

\newcommand{\dataset}{\mathscr D}

\newcommand{\MP}{\mathop{\mathsf{MP}}} 
\newcommand{\splits}{\Sigma} 
\newcommand{\asplit}{\sigma} 
\newcommand{\node}{\mathbf{v}} 
\newcommand{\othernode}{\mathbf{w}}
\newcommand{\nodes}{\mathcal N} 
\newcommand{\inodes}{\mathcal N^{\circ}} 
\newcommand{\leaves}{\mathcal L} 

\newcommand{\leaf}{\node}

\newcommand{\cell}{C}



\renewcommand{\root}{\epsilon} 

\newcommand{\cellrange}{R}
\newcommand{\birth}{\tau} 
\newcommand{\globaltree}{\mathbf{T}} 
\newcommand{\tree}{\mathcal{T}} 
\newcommand{\mondrian}{\Pi} 
\newcommand{\parent}[1]{\mathtt{parent} (#1)} 
\newcommand{\prednode}{\pred}
\newcommand{\pathpoint}{\mathtt{path}} 
\newcommand{\wbar}{\overline{w}} 

\newcommand{\wpred}{\widehat{w}}

\newcommand{\nbtrees}{M} 
\newcommand{\idtree}{m} 


\newcommand{\ExtendCell}{\mathtt{NodeUpdate}}

\newcommand{\MondrianUpdate}{\mathtt{AmfUpdate}}
\newcommand{\Predict}{\mathtt{AmfPredict}}
\newcommand{\MondrianForestPredict}{\mathtt{MondrianPredict}}
\newcommand{\MondrianForestUpdate}{\mathtt{MondrianUpdate}}

\newcommand{\SampleMondrian}{\mathtt{SampleMondrian}}


\usepackage{listings}

\usepackage{todonotes}

\definecolor{codegray}{rgb}{0.96,0.96,0.96}
\definecolor{codepurple}{rgb}{0.58,0,0.82}
\definecolor{codeblue}{rgb}{0.112,0.148,0.195}
\definecolor{codered}{rgb}{0.67,0.29,0.15}
\definecolor{codegreen}{rgb}{0.42,0.52,0.33}

\definecolor{lightblue}{rgb}{0.145,0.6666,1} 

\newcommand\pythonstyle{\lstset{
  language=Python,
  basicstyle=\ttfamily\footnotesize,
  breaklines=true,
  frame=lines,
  backgroundcolor=\color{codegray},
  showstringspaces=false
}}

\lstnewenvironment{python}[1][]{
\pythonstyle
\lstset{#1}}{}


\newtheorem{proposition}{Proposition}
\newtheorem{theorem}{Theorem}
\newtheorem{lemma}{Lemma}
\newtheorem{corollary}{Corollary}
\newtheorem{definition}{Definition}
\newtheorem{remark}{Remark}

\title{AMF: Aggregated Mondrian Forests for Online Learning}

\author{
  Jaouad Mourtada%
  \thanks{CMAP, UMR 7641, \'Ecole Polytechnique CNRS, Paris, France}\\ 
  \and 
  St\'ephane Ga\"iffas%
  \thanks{LPSM, UMR 8001, Universit\'e de Paris, Paris, France}
  \thanks{DMA, CNRS UMR 8553, Ecole normale sup\'erieure, Paris, France}
  \and
  Erwan Scornet%
  \footnotemark[1]
}





\begin{document}

\maketitle

\begin{abstract}
  Random Forests (RF) is one of the algorithms of choice in many supervised learning applications, be it classification or regression.
  The appeal of such tree-ensemble methods comes from a combination of several characteristics: a remarkable accuracy in a variety of tasks, a small number of parameters to tune, robustness with respect to features scaling, a reasonable computational cost for training and prediction, and their suitability in high-dimensional settings.
  The most commonly used RF variants however are ``offline'' algorithms, which require the availability of the whole dataset at once.
  In this paper, we introduce AMF, an online random forest algorithm based on Mondrian Forests. 
  Using a variant of the Context Tree Weighting algorithm, we show that it is possible to efficiently perform an exact aggregation over all prunings of the trees; in particular, this enables to obtain a truly online parameter-free algorithm which is competitive with the optimal pruning of the Mondrian tree, and thus adaptive to the unknown regularity of the regression function. Numerical experiments show that AMF is competitive with respect to several strong baselines on a large number of datasets for multi-class classification.

  \medskip
  \noindent
  \emph{Keywords.} Online regression trees, Online learning, Adaptive regression, Nonparametric methods

\end{abstract}


\section{Introduction}
\label{sec:introduction}

In this paper, we consider the \emph{online supervised learning} problem in which we assume that the dataset is not fixed in advance. 
In this scenario, we are given an \iid sequence $(x_1, y_1),$ $ (x_2, y_2), \dots$ of $[0, 1]^d \times \Y$-valued random variables that come sequentially, such that each $(x_t, y_t)$ has the same distribution as a generic pair $(x, y)$. 
Our aim is to design an \emph{online} algorithm that can be updated ``on the fly'' given new sample points, that is, at each step $t \geq 1$, a \emph{randomized prediction function}
\begin{equation*}
  \widehat \predrule_t ( \cdot, \boldsymbol \mondrian_t , \dataset_t) : [0, 1]^d \to \predspace \, ,
\end{equation*}
where $\dataset_t = \{ (x_1, y_1) , \dots , (x_t, y_t) \}$ is the dataset available at step $t$, where $\boldsymbol \mondrian_t$ is a random variable that accounts for the randomization procedure and $\predspace$ is a prediction space.
In the rest of the paper, we omit the explicit dependence in $\dataset_t$. 

This paper introduces the AMF algorithm (Aggregated Mondrian Forests), the main contributions of the paper and the main advantages of the AMF algorithm, are as follows:
\begin{itemize}
  \item AMF maintains and updates at each step (each time a new sample is available) a fixed number of decision trees in an \emph{online} fashion. The predictions of each tree is computed as a weighted average of all the predictions given by \emph{all} the prunings of this tree. These predictions are then averaged over all trees to obtain the final prediction of AMF.
  This makes the algorithm \emph{purely online} and therefore better than repeated calls to batch methods on ever increasing samples. An open source implementation of AMF is available in the \texttt{onelearn} Python package, available on \texttt{GitHub} and as a \texttt{PyPi} repository, together with a documentation which explains, among other things, how the experiments from the paper can be reproduced%
  \footnote{The source code of \texttt{onelearn} is available at \url{https://github.com/onelearn/onelearn} and it can be easily installed by typing \texttt{pip install onelearn} in a terminal. The documentation of \texttt{onelearn} is available at \url{https://onelearn.readthedocs.io}.}.

  \item The online training of AMF and the computations involved in its predictions are \emph{exact}, in the sense that no approximation is required.
  We are able to \emph{compute exactly the posterior distribution} thanks to a particular choice of prior, combined with an adaptation of Context Tree Weighting \citep{willems1995context-basic,willems1998context-extensions,helmbold1997pruning,catoni2004statistical}, commonly used in lossless compression to aggregate all subtrees of a prespecified tree, which is both computationally efficient and theoretically sound.
  Our approach is, therefore, drastically
  different from Bayesian trees \citep{chipman1998bayesiancart,denison1998bayesiancart,taddy2011dynamictrees} and from BART \citep{chipman2010bart} which implement MCMC methods to approximate posterior distributions on trees. It departs also from hierarchical Bayesian smoothing involved in~\citet{lakshminarayanan2014mondrianforests} for instance, which requires also approximations.

  \item This paper provides strong theoretical guarantees for AMF, that are \emph{valid for any dimension~$d$}, and \emph{minimax optimal}, while  previous theoretical guarantees for Random Forest type of algorithms propose only suboptimal convergence rates, see for instance~\citep{wager2015adaptive,duroux2016impact}. In a batch setting, adaptive minimax rates are obtained in~\citet{mourtada2018mondrian} in arbitrary dimension for the batch Mondrian Forests algorithm. This paper provides similar results in an online setting, together with control on the online regret.
\end{itemize}

\subsection{Random Forests}
 
We let $\widehat \predrule_t(x, \mondrian_t^{(1)}), \dots, \widehat  \predrule_t( x, \mondrian_t^{(\nbtrees)})$ be randomized tree predictors at a point $x \in [0, 1]^d$ at time $t$, associated to the \emph{random tree partitions} $(\mondrian_t^{(m)})_{1 \leq m \leq M}$  of $[0, 1]^d$, where $\mondrian_t^{(1)}, \hdots , \mondrian_t^{(M)}$ are \iid.
Setting $\boldsymbol \mondrian_t^{(M)} = (\mondrian_t^{(1)}, \dots, \mondrian_t^{(\nbtrees)})$, the \emph{random forest estimate} $\widehat \predrule_t^{(\nbtrees)}(x, \boldsymbol \mondrian_t^{(M)})$ is then defined by
\begin{equation}
\label{eq_def_RF}
\widehat \predrule_t^{(M)}(x, \boldsymbol \mondrian_t^{(M)}) = \frac 1\nbtrees \sum_{\idtree=1}^\nbtrees \widehat \predrule_t(x, \mondrian_t^{(\idtree)}),
\end{equation}
namely taking the average over all tree predictions $\widehat \predrule_t(x, \mondrian_t^{(\idtree)})$.
The online training of each tree can be done in parallel, since they are fully independent of each other, and each of them follow the exact same randomized construction.
Therefore, we describe only the construction of a single tree (and its associated random partition and prediction function) and omit from now on the dependence on $m=1, \ldots, M$.
An illustration of the decision functions of $M=10$ trees and the corresponding forest is provided in Figure~\ref{fig:forest-effect}.

\begin{figure}[htbp]
  \centering
  \includegraphics[width=0.95\textwidth]{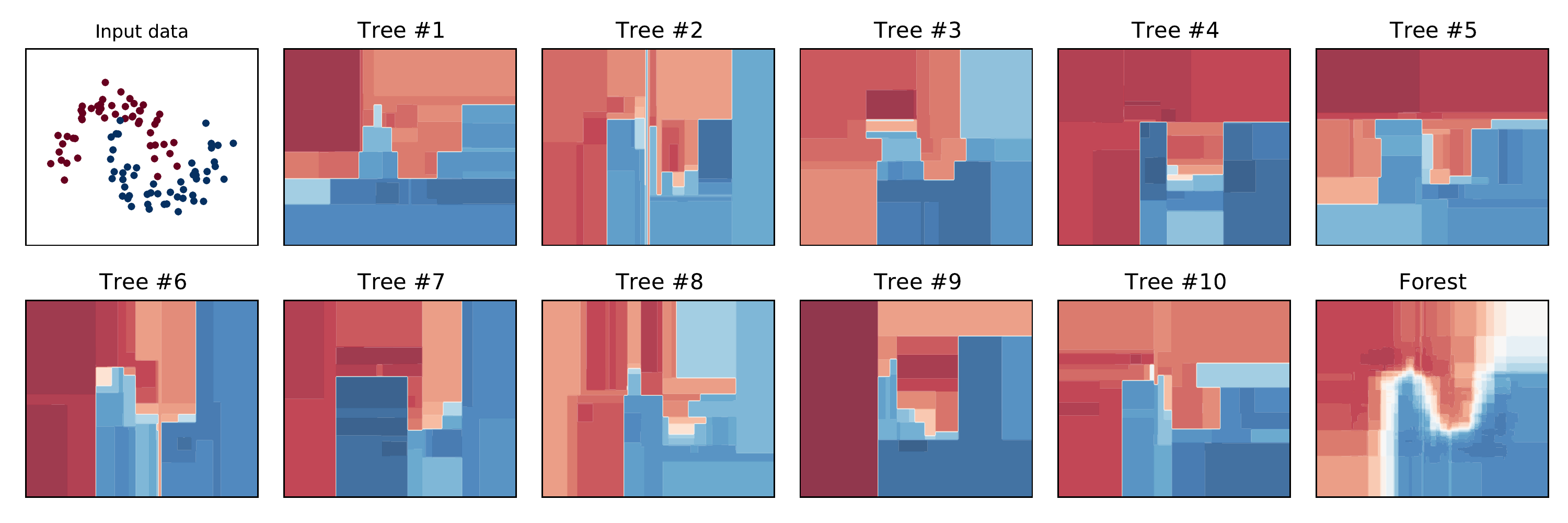}
  \caption{Decision functions of 10 trees and of the corresponding forest. Each tree is grown in parallel, following the same randomized mechanism.}
  \label{fig:forest-effect}
\end{figure}

Random tree partitions are given by $\mondrian_t = (\tree_t, \splits_t)$, where $\tree_t$ is a binary tree and $\Sigma_t$ contains information about each node in $\tree_t$ (such as splits).
These objects are introduced in Section~\ref{subsec:tree} and Section~\ref{subsec:mondrian}, in which, for simplicity, we assume that $t$ is fixed, and remove the explicit dependence on $t$.

\subsection{Random tree partitions}
\label{subsec:tree}

Let $\cell = \prod_{j=1}^d [a_j, b_j] \subseteq [0, 1]^d$ be a hyper-rectangular box.
A \emph{tree partition} (or \emph{$k$d tree}, \emph{guillotine partition}) of $\cell$ is a pair $(\tree, \splits)$, where $\tree$ is a \emph{finite ordered binary tree} and $\splits$ is a \emph{family of splits} at the interior nodes of $\tree$.

\paragraph{Finite ordered binary trees.} A finite ordered binary tree $\tree$ is represented as a finite subset of the set $\{ 0, 1 \}^* = \bigcup_{n \geq 0} \{ 0, 1 \}^n$ of all finite words on $\{ 0, 1\}$.
The set $\{ 0, 1\}^*$ is endowed with a tree structure (and called the complete binary tree): the empty word $\root$ is the root, and for any $\node \in \{ 0, 1\}^*$, the left (resp. right) child of $\node$ is $\node 0$ (resp. $\node 1$), obtained by adding a $0$ (resp. $1$) at the end of $\node$.
We denote by $\inodes (\tree) = \{ \node \in \tree : \node 0, \node 1 \in \tree \}$ the set of its \emph{interior nodes} and by $\leaves (\tree) = \{ \node \in \tree : \node 0, \node 1 \not\in \tree \}$ the set of its leaves, which are disjoint by definition. 

\paragraph{Family of splits.} Each split $\sigma_\node = (j_\node, s_\node)$ in the family $\splits = (\asplit_\node)_{\node \in \inodes (\tree)}$ of splits is characterized by its split dimension $j_\node \in \{ 1, \dots, d \}$ and its threshold $s_\node \in [0, 1]$.

\medskip
\noindent
One can associate to $(\tree, \splits)$ a partition $(\cell_{\leaf})_{\leaf \in \leaves  (\tree)}$ of $[0, 1]^d$ as follows.
For each node $\node \in \tree$, its \emph{cell} $\cell_\node$ is 
a hyper-rectangular region $\cell_\node \subseteq [0, 1]^d$ defined recursively: the cell associated to the root $\root$ of $\tree$ is $[0, 1]^d$, and, for each $\node \in \inodes(\tree)$, we define
\begin{equation}
  \label{eq:cell-split-definition}
  \cell_{\node 0} := \{ x \in \cell_\node : x_{j_\node} \leq s_{j_\node}  \} \quad \text{and} \quad \cell_{\node 1} := \cell_\node \setminus \cell_{\node 0}.
\end{equation}
Then, the leaf cells $(\cell_{\leaf})_{\leaf \in \leaves (\tree)}$ form a partition of $[0, 1]^d$ by construction.
We consider a random partition given by the Mondrian process~\citep{roy2009mondrianprocess}, following the construction of Mondrian forests~\citep{lakshminarayanan2014mondrianforests}.

%

\subsection{Mondrian random partitions}
\label{subsec:mondrian}

Mondrian random partitions are a specific family of random tree partitions.
An infinite Mondrian partition $\mondrian$ of $[0, 1]^d$ can be sampled from the infinite Mondrian process, denoted $\MP$ from now on, using the procedure $\SampleMondrian([0, 1]^d, \tau=0)$ described in Algorithm~\ref{alg:mondrian}.
If $C = \prod_{j=1}^d C^j$ with intervals $C^j = [a_j, b_j]$, we denote $|C^j| = b_j - a_j$ and $| C | = \sum_{j=1}^d |C^j|$. 
We denote by $\expdist(\lambda)$ the exponential distribution with intensity $\lambda > 0$ and by $\uniformdist ([a, b])$ the uniform distribution on a finite interval $[a, b]$.

\begin{algorithm}[htbp]
  \small
   \caption{$\SampleMondrian (\cell_\node, \tau_\node)$: sample a Mondrian starting from a cell $C_\node$ and time $\tau_\node$}
  \label{alg:mondrian} 
\begin{algorithmic}[1]
  \STATE \textbf{Inputs:} The cell $\cell_\node = \prod_{1\leq j \leq d} C_\node^j$ and
  creation time $\tau_\node$ of a node $\node$
  \STATE Sample a random variable $E \sim \expdist(|\cell_\node|)$ and put $\birth_{\node 0} = \birth_{\node 1} = \tau_\node + E$
  \STATE Sample a split coordinate $j_\node \in \{ 1,\dots, d \}$ with $\P (j_\node = j) 
  = |\cell_\node^j| / |\cell_\node|$
  \STATE Sample a split threshold $s_\node$ conditionally on $j_\node$ as $s_\node | j_\node \sim 
  \uniformdist (\cell_\node^{j_\node})$
  \STATE The split $(j_\node, s_\node)$ defines children cells $\cell_{\node 0}$ 
   and $C_{\node 1}$ following Equation~\eqref{eq:cell-split-definition}.
  \STATE \textbf{return} $\SampleMondrian (\cell_{\node 0}, \tau_{\node 0}) \cup \SampleMondrian (\cell_{\node 1}, \tau_{\node 1})$
\end{algorithmic}
\end{algorithm}

The call to $\SampleMondrian([0, 1]^d, \tau=0)$ corresponds to a call starting at the root node $\node = \root$, since $\cell_\root = [0, 1]^d$ and the birth time of $\root$ is $\birth_\root = 0$.
This random partition is built by iteratively splitting cells at some random time, which depends on the linear dimension $\cell_\node$ of the input cell $\cell_\node$.
The split coordinate $j_\node$ is chosen at random, with a probability of sampling $j$ which is proportional to the side length $|\cell_\node^j| / |\cell_\node|$ of the cell, and the split threshold is sampled 
uniformly in $\cell_\node^{j}$.
The number of recursions in this procedure is infinite, the Mondrian process $\MP$ is a distribution on infinite tree partitions of $[0, 1]^d$, see~\cite{roy2009mondrianprocess} and~\cite{roy2011phd} for a rigorous construction.

The \emph{birth times} $\birth_\node$ are not used in Algorithm~\ref{alg:mondrian} but will be used to define \emph{time prunings} of a Mondrian partition in Section~\ref{sub:regret-bounds} below, a notion which is necessary to prove that AMF has adaptation capabilities to the optimal time pruning.
Moreover, the birth times are used in the practical implementation of AMF 
described in Section~\ref{sec:practical_implementation}, since it is required to build \emph{restricted Mondrian partitions}, following~\cite{lakshminarayanan2014mondrianforests}.
Figure~\ref{fig:mondrian} below shows an illustration of a truncated Mondrian tree (where nodes with birth times larger than $\lambda$ have been removed), and its corresponding partition. 

\begin{figure}[h!]
	\centering    
	\begin{tikzpicture}[scale=0.4]

  \pgfmathsetmacro{\TR}{6.5}  
  \pgfmathsetmacro{\Tl}{4.3}  
  \pgfmathsetmacro{\Tr}{2.7}  
  \pgfmathsetmacro{\Tlr}{1.4}  
  \pgfmathsetmacro{\TL}{0.6}  
  \pgfmathsetmacro{\TE}{-0.4}
  
    \coordinate (BG) at (0,0) ;
    \coordinate (BD) at (0,1) ;
    \coordinate (HG) at (1,0) ;
    \coordinate (HD) at (1,1) ;
    \coordinate (A1) at (4,0) ;
    \coordinate (B1) at (4,10) ;
    \coordinate (C1) at (4,5) ;
    \coordinate (A2) at (0,3) ;
    \coordinate (B2) at (4,3) ;
    \coordinate (C2) at (2,3) ;
    \coordinate (A3) at (4,6) ;
    \coordinate (B3) at (10,6) ;
    \coordinate (C3) at (7,6) ;
    \coordinate (A4) at (0,8) ;
    \coordinate (B4) at (4,8) ;
    \coordinate (C4) at (2,8) ;

    
    \draw [thick] (0,0) rectangle (10,10) ; 
    \draw [thick,fill=lightblue!30] (0,0) rectangle (B2) ;
    \draw [thick,fill=yellow!30] (A4) rectangle (B2) ;
    \draw [thick,fill=red!30] (B1) rectangle (A4) ;
    \draw [thick,fill=green!30] (B1) rectangle (B3) ;
    \draw [thick,fill=magenta!30] (A1) rectangle (B3) ;
    
    \draw (A1) -- (B1) ; 
    \draw (A2) -- (B2) ; 
    \draw (A3) -- (B3) ;
    \draw (A4) -- (B4) ;
    \draw [dotted, fill=gray!0] (C1) circle (0.6) node {${\scriptstyle 1.3}$} ; 
    \draw [dotted, fill=gray!0] (C2) circle (0.6) node {${\scriptstyle 2.3}$} ;
    \draw [dotted,fill=gray!0] (C3) circle (0.6) node {${\scriptstyle 2.7}$} ;
    \draw [dotted,fill=gray!0] (C4) circle (0.6) node {${\scriptstyle 3.2}$} ;
    
    \coordinate (R) at (18,\TR 
    ) ;
    \coordinate (N0) at (15,\Tl 
    ) ;
    \coordinate (N1) at (21,\Tr 
    ) ;
    \coordinate (N00) at (13.5,\TL 
    ) ;
    \coordinate (N01) at (16.5,\Tlr 
    ) ;
    \coordinate (N10) at (19.5,\TL 
    ) ;
    \coordinate (N11) at (22.5,\TL 
    ) ;
    \coordinate (N010) at (15.5,\TL 
    ) ;
    \coordinate (N011) at (17.5,\TL 
    ) ;

    \draw (18,10) node {$\bullet$} -- (R) ; 
    \draw (15,\TR) -- (21, \TR) ;
    \draw (15,\TR) -- (N0) ;
    \draw (21,\TR) -- (N1) ;
    \draw (13.5,\Tl) -- (16.5, \Tl) ;
    \draw (13.5,\Tl) -- (N00) ;
    \draw (16.5,\Tl) -- (N01) ;
    \draw (15.5,\Tlr)  -- (17.5, \Tlr) ;
    \draw (15.5,\Tlr) -- (N010) ;
    \draw (17.5,\Tlr) -- (N011) ;
    \draw (19.5,\Tr) -- (22.5, \Tr) ;
    \draw (19.5,\Tr) -- (N10) ;
    \draw (22.5,\Tr) -- (N11) ;

    \coordinate (T) at (26,10) ; 
    \coordinate (TR) at (26,\TR) ; 
    \coordinate (T0) at (26,\Tl) ; 
    \coordinate (T1) at (26,\Tr) ; 
    \coordinate (T01) at (26,\Tlr) ; 
    \coordinate (TL) at (26,\TL) ;
    \coordinate (TE) at (26,\TE ) ;

    \draw[>=stealth,->] (T) -- (TE) ;
    \draw (T) node {$-$} ;
    \draw (TR) node {$-$} ;
    \draw (T0) node {$-$} ;
    \draw (T1) node {$-$} ;
    \draw (T01) node {$-$} ;
    \draw (TL) node {$-$} ;

    \draw (T) node[right] {$
      {\scriptstyle
        0
      }
      $} ;
    \draw (TR) node[right] {${\scriptstyle 1.3}$} ;
    \draw (T0) node[right] {${\scriptstyle 2.3}$} ;
    \draw (T1) node[right] {${\scriptstyle 2.7}$} ;
    \draw (T01) node[right] {${\scriptstyle 3.2}$} ;
    \draw (TL) node[right] {${\scriptstyle \lambda = 3.4}$} ;
    \draw (TE) node[right] {{\small time}} ;    
    
    \draw [dotted] (18,10) -- (T) ;
    \draw [dotted] (R) -- (TR) ;
    \draw [dotted] (N0) -- (T0) ;
    \draw [dotted] (N1) -- (T1) ;
    \draw [dotted] (N01) -- (T01) ;
    \draw [dashed] (N00) -- (TL) ; 

    \draw [fill=gray!0] (R) circle (0.16) ;
    \draw [fill=gray!0] (N0) circle (0.16) ;
    \draw [fill=gray!0] (N1) circle (0.16) ;
    \draw [fill=gray!0] (N01) circle (0.16) ;        

    \draw [fill=lightblue!30] (N00) circle (0.25) ;
    \draw [fill=magenta!30] (N10) circle (0.25) ;
    \draw [fill=green!30] (N11) circle (0.25) ;
    \draw [fill=yellow!30] (N010) circle (0.25) ;
    \draw [fill=orange!30] (N011) circle (0.25) ;
    

  \end{tikzpicture}

	\caption{A Mondrian partition (left) with corresponding tree structure (right), which shows the evolution of the tree over time.
		The creation times $\birth_{\node}$ are indicated on the vertical axis, while the splits are denoted with bullets ($\circ$).
	}  
	\label{fig:mondrian}
\end{figure}
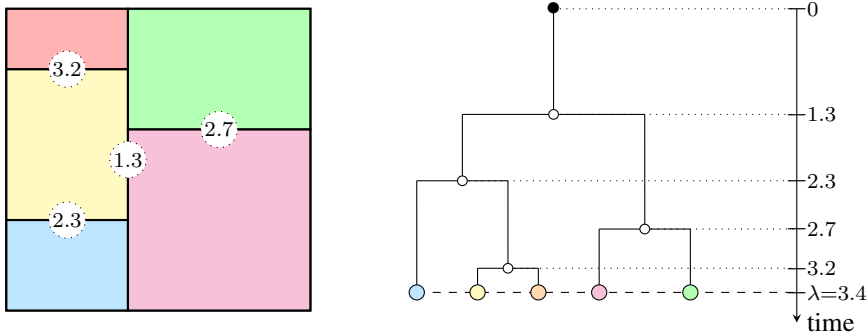

\subsection{Aggregation with exponential weights and prediction functions}
\label{sub:agg-ctw}

The prediction function of a tree in AMF is an aggregation of the predictions given by all finite subtrees of the infinite Mondrian partition $\MP$.
This aggregation step is performed in a purely online fashion, using an aggregation algorithm based on exponential weights, with a branching process prior over the subtrees.
This weighting scheme gives more importance to subtrees with a good predictive performance.

\paragraph{Node and subtree prediction.}

Let us assume that the realization of an infinite Mondrian partition 
$\Pi = (\tree^\mondrian, \Sigma^\mondrian) \sim \MP$ is available \emph{between} steps $t-1$ and $t$ (in the sense that the $t-1$-th sample has been revealed but not the $t$-th).
This partition is denoted $\Pi_t$, and is, by construction, independent of the observation $(x_t, y_t)$.
We will argue in Section~\ref{subsec:PartitionUpdate} that it suffices to store a finite partition $\Pi_t$, and show how to update it.
The definition of the prediction function used in AMF requires the notion of \emph{node} and \emph{subtree prediction}.
Given $\mondrian = (\tree^\mondrian, \Sigma^\mondrian) \sim \MP$, we define
\begin{equation*}
\pred_{\node, t} = h ( (y_s)_{1\leq s \leq t-1 \pp x_s \in \cell_\node} )
\quad \mbox{ and } \quad
L_{\node, t} = \sum_{1 \leq s \leq t \pp x_s \in \cell_\node} 
\ell (\pred_{\node, s}, y_s)
\end{equation*}
for each node $\node \in \tree^\mondrian$ (which defines a cell $\cell_\node \subseteq [0, 1]^d$ following Equation~\eqref{eq:cell-split-definition}) and each $t \geq 1$, where
 $h : \bigcup_{t \geq 0} \Y^t \to \predspace$ is a prediction algorithm used in each cell, with $\predspace$ its prediction space and $\ell : \predspace \times \Y \to \R$ a generic loss function.
The prediction between steps $t-1$ and $t$ of a finite subtree $\tree \subset \tree^\mondrian$ associated to some features vector $x \in [0, 1]^d$ is defined by
\begin{equation}
  \label{eq:expert-subtree}
 \pred_{\tree, t} (x) = \pred_{\node_{\tree} (x), t},
\end{equation}
where $\node_{\tree}(x)$ is the leaf of $\tree$ that contains $x$. 
Once again, note that the prediction $\pred_{\tree, t}$ uses samples from steps $1, \ldots, t-1$ but not $(x_t, y_t)$.
We define the cumulative loss of $\tree$ at step $t$ as
\begin{equation*}
L_{t} (\tree) = \sum_{s=1}^t \ell (\pred_{\tree, s} (x_s), y_s).
\end{equation*}
Note that the loss term $\ell (\pred_{\tree, s} (x_s), y_s)$ is evaluated at the sample $(x_s, y_s)$, which is \emph{not} used by the tree predictor $\pred_{\tree, s}$.
For regression problems, we use empirical mean forecasters
\begin{equation}
\label{eq:reg-predictor}
  \pred_{\node, t} = \frac{1}{n_{\node, t-1}} 
  \sum_{1 \leq s \leq t - 1\pp x_s \in \cell_\node} y_s,
\end{equation}
where $n_{\node, t} = | \{ 1 \leq s \leq t \pp x_s \in \cell_\node \} |$, and where we simply put $\pred_{\node, t} = 0$ if $\node$ is empty (namely, $C_\node$ contains no data point).
The loss is the \emph{quadratic loss} $\ell (\pred, y) = (\pred - y)^2$ for any $y \in \Y$ and $\pred \in \predspace$ where $\predspace = \Y = \R$.

For multi-class classification, we have labels $y_t \in \Y$ where $\Y$ is a finite set of label modalities (such as $\Y = \{ 1, \ldots, K \}$) and predictions are in $\predspace = \probas(\Y)$, the set of probability distributions on $\Y$. 
We use the \emph{Krichevsky-Trofimov} \textup(KT\textup) forecaster \textup(see \citealp{tjalkens1993sequential}\textup) in each node $\node$, which predicts
\begin{equation}
  \label{eq:kt-predictor}
  \pred_{\node, t} (y) = \frac{n_{\node, t-1} (y) + 1 / 2}{t -1 + |\Y | / 2},
\end{equation}
for any $y \in \Y$, where $n_{\node, t} (y) = | \{ 1 \leq s \leq t : x_s \in \cell_\node, y_s = y \} |$.        
For an empty $\node$, we use the uniform distribution on $\Y$. 
We consider the \emph{logarithmic loss} (also called \emph{cross-entropy} or \emph{self-information} loss) $\ell (\pred, y) = - \log \pred( y )$, where $\pred (y) = \pred (\{ y \}) \in [0, 1]$.

\begin{remark}
  The Krichevsky-Trofimov forecaster coincides with the
  Bayes predictive posterior with a prior on $\probas (\Y)$ equal to the Dirichlet distribution $\dirichletdist (\frac{1}{2}, \dots, \frac{1}{2})$\textup, namely the \emph{Jeffreys prior} on the multinomial model $\probas (\Y)$.
\end{remark}

\paragraph{The prediction function of AMF.}

Let $t\geq 1$ and $x \in [0, 1]^d$. 
The prediction function $\widehat f_t$ of AMF at step $t$ is given by
\begin{equation}
  \label{eq:exact-aggregation}
 \widehat f_t (x) = \frac{\sum_{\tree} \pi (\tree) e^{-\eta L_{t-1} (\tree)} \pred_{\tree, t} (x)}{\sum_{\tree} \pi (\tree) e^{-\eta L_{t-1} (\tree)}},
\end{equation}
where the sum is over all subtrees $\tree$ of $\tree^\mondrian$ and where the \emph{prior} $\pi$ on subtrees is the probability distribution defined by
\begin{equation}
\label{eq:ctw-prior}
\pi (\tree) = 2^{- | \tree |},
\end{equation}
where $|\tree|$ is the number of nodes in $\tree$ and $\eta > 0$ is a parameter called \emph{learning rate}.
Note that $\pi$ is the distribution of the branching process with branching probability $1 / 2$ at each node of $\tree^\mondrian$, with exactly two children when it branches; this branching process gives finite subtrees almost surely.
The learning rate $\eta$ can be optimally tuned following theoretical guarantees from Section~\ref{sec:theory}, see in particular Corollaries~\ref{cor:regret-best-pruning-log} and~\ref{cor:regret-best-pruning-square}.
This aggregation procedure is a \emph{non-greedy way to prune trees}: the weights do not depend only on the quality of one single split but rather on the performance of each subsequent split. An example of aggregated trees is provided in Figure~\ref{fig:algorithm-prediction}.


\begin{figure}[h!]
\begin{center}
\input{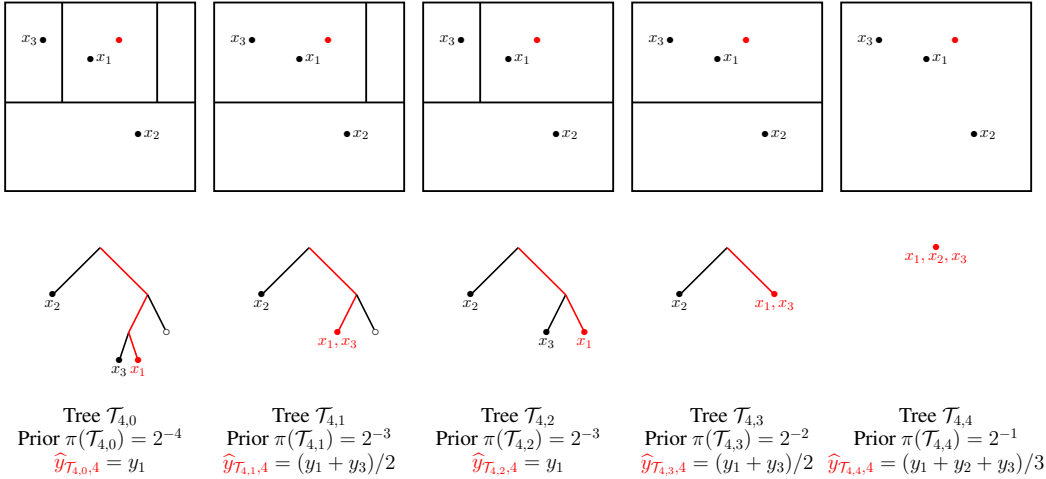}
\caption{Illustration of all subtrees involved in the prediction of a tree in AMF at the red query point between steps $t=4$ and $t=5$. The tree $\tree_{4} = \tree_{4, 0}$ is the one obtained with samples up to $t=4$ and subtrees $\tree_{4,j}$ for $j = 1, \hdots, 4$ are pruning of this tree. A tree in AMF produces a prediction given by Equation~\eqref{eq:exact-aggregation}, which is a convex combination of all five subtrees, weighted by their past performances through the aggregation weights.}
\label{fig:algorithm-prediction}  
\end{center}
\end{figure}

Let us stress that computing $\widehat f_t$ from Equation~\eqref{eq:exact-aggregation} seems computationally infeasible in practice as tree grows, since it involves a sum over all subtrees of $\tree^\mondrian$.
Besides, it requires to keep in memory one weight $e^{-\eta L_{t-1} (\tree)}$ for all subtrees $\tree$, which seems prohibitive as well.
Indeed, the number of subtrees of the minimal tree that separates $n$ points is exponential in the number of nodes, and hence \emph{exponential in $n$}.
However, it turns out that one can compute exactly and very efficiently $\widehat f_t$ using the prior choice from Equation~\eqref{eq:ctw-prior} together with an adaptation of Context Tree Weighting \citep{willems1995context-basic,willems1998context-extensions,helmbold1997pruning,catoni2004statistical}. This will be detailed in Section~\ref{sec:algorithm} below.

\subsection{Related previous works}

Introduced by \citet{breiman2001randomforests}, Random Forests (RF) is one of the algorithms of choice in many supervised learning applications.
The appeal of these methods comes from their remarkable accuracy in a variety of tasks, their reasonable computational cost at training and prediction time, and their suitability in high-dimensional settings \citep[][]{diaz2006gene,chen2012random}.

\paragraph{Online Random Forests.} 

Most commonly used RF algorithms, such as the original random forest procedure \citep{breiman2001randomforests}, extra-trees \citep{geurts2006extremely}, or conditional inference forest \citep{hothorn2010party} are batch algorithms, that require the whole dataset to be available at once. Several online random forests variants have been proposed to overcome this issue and handle data that come sequentially. 
\citet{utgoff1989incremental} was the first to extend Quinlan's ID3 batch decision tree algorithm \citep[see][]{quinlan1986induction} to an online setting. Later on, 
\citet{domingos2000hoeffdingtree} introduce Hoeffding Trees that can be easily updated:  since observations are available sequentially, a cell is split when $(i)$ enough observations have fallen into this cell, $(ii)$ the best split in the cell is statistically relevant (a generic Hoeffding inequality being used to assess the quality of the best split).
Since random forests are known to exhibit better empirical performances than individual decision trees, online random forests have been proposed \citep[see, e.g.,][]{saffari2009online-rf,denil2013online}. 
These procedures aggregate several trees by computing the mean of the tree predictions (regression setting) or the majority vote among trees (classification setting). The tree construction differs from one forest to another but share similarities with Hoeffding trees: a cell is to be split if $(i)$ and $(ii)$ are verified.

\paragraph{Mondrian Forests.} 

One forest of particular interest for this paper is the Mondrian Forest~\citep{lakshminarayanan2014mondrianforests} based on the Mondrian process~\citep{roy2009mondrianprocess}. 
Their construction differs from the construction described above since each new observation modifies the tree structure: instead of waiting for enough observations to fall into a cell in order to split it, the properties of the Mondrian process allow to update the Mondrian tree partition each time a sample is collected. 
Once a Mondrian tree is built, its prediction function uses a hierarchical prior on all subtrees and the average of predictions on all subtrees is computed with respect to this hierarchical prior using an approximation algorithm.

\paragraph{Context Tree Weighting.} 

The Context Tree Weighting algorithm has been applied to regression trees by \citet{blanchard1999progressive} in the case of a fixed-design tree, in which splits are prespecified.
This requires to split the dataset into two parts (using the first part to select the best splits and the second to compute the posterior distribution) and to have access to the whole dataset, since the tree structure needs to be fixed in advance.

\subsection{Organization of the paper} 

Section~\ref{sec:algorithm} provides a precise construction of the AMF algorithm.
A theoretical analysis of AMF is given in Section~\ref{sec:theory}, where
we establish regret bounds for AMF together with a minimax adaptive upper bound.
Section~\ref{sec:practical_implementation} introduces a modification of AMF which is used in all the numerical experiments of the paper, together with a guarantee and a discussion on its computational complexity.
Numerical experiments are provided in Section~\ref{sec:num-experiments}, in order to provide numerical insights on AMF, and to compare it with several strong baselines on many datasets.
Our conclusions are provided in Section~\ref{sec:conclusion}, while proofs are gathered in Section~\ref{sec:proofs}.

\section{AMF: Aggregated Mondrian Forests}
\label{sec:algorithm}

In an online setting, the number of sample points increases over time, allowing one to capture more details on the distribution of $y$ conditionally on $x$.
This means that the complexity of the decision trees should increase as samples are revealed as well.
We will therefore need to consider not just an individual, fixed tree partition $\mondrian$, but a sequence $(\mondrian_t)_{t \geq 1}$, indexed by ``time steps'' $t$ corresponding to the availability time of each new sample. 
Furthermore, AMF uses the aggregated prediction function from Equation~\eqref{eq:exact-aggregation}, independently within each tree $\mondrian_t^{(1)}, \ldots, \mondrian_t^{(M)}$ from the forest, see Equation~\eqref{eq_def_RF}.
When a new sample point $(x_t, y_t)$ becomes available, the algorithm does two things, in the following order:
\begin{itemize}
  \item \emph{Partition update \textup(Section~\ref{subsec:PartitionUpdate}\textup).} Using $x_t$, update the decision tree structure from 
  $\mondrian_{t} = (\tree_{t}, \Sigma_{t})$ to $\mondrian_{t+1} = (\tree_{t+1}, 
  \Sigma_{t+1})$, \ie sample new splits in order to ensure that each leaf in the tree \emph{contains at most one point} among $\{ x_1, \dots, x_t \}$. This update uses the recursive properties of Mondrian partitions;

  \item \emph{Prediction function update \textup(Section~\ref{subsec:PredictionUpdate}\textup).} Using $(x_t, y_t)$, update the prediction functions $\pred_{\node, t}$ and weights $w_{\node, t}$ and $\wbar_{\node, t}$ defined in Table~\ref{tab:notations}, that will allow us to compute $\widehat f_t$ from Equation~\eqref{eq:exact-aggregation}, thanks to a variant of Context Tree Weighting. These updates are \emph{local}, since they are performed only along the path of nodes leading to the leaf containing $x_t$, and are efficient. 
\end{itemize}

\begin{table}
  \caption{Notations and definitions used in AMF. The subscript $t$ used in these notations correspond to the \emph{step} or \emph{time}. For instance, $\pred_{\node, t}$ is the prediction of node $\node$ between times $t-1$ and $t$, namely when $t-1$ samples has been revealed, but not the $t$-th yet, while the cumulative loss $L_{\node, t}$ is computed once sample $(x_t, y_t)$ is revealed.}
  \label{tab:notations} 
  \fbox{
  \begin{tabular}{ll}
    \textbf{Notation or formula} & \textbf{Description} \\ \hline
    $\node \in \{ 0, 1\}^*$ & A node  \\
    $\tree \subset \{0,1\}^*$ & A tree \\
    $\node 0$~ (\textrm{resp.} ~ $\node 1$) & The left (resp. right) child of $\node$ \\
    $\tree_{\node}$ & A subtree rooted at $\node$ \\
    $\leaves (\tree)$ & The set of leaves of $\tree$ \\
    $\inodes (\tree)$ & The set of the interior nodes of $\tree$ \\
    $(\cell_{\leaf})_{\leaf \in \leaves (\tree)}$ & The cells of the partition defined by $\tree$ \\
    $\pred_{\node, t}$ & Prediction of a node $\node$ between $t-1$ and $t$ \\
    $L_{\node, t} = \sum_{1 \leq s \leq t \pp X_s \in \cell_\node} \ell (\pred_{\node, s}, y_s)$ & Cumulative loss of the node $\node$ at step $t$ \\
    $w_{\node, t} = \exp (-\eta \cumloss_{\node, t-1})$ & Weight stored in node $\node$ between $t-1$ and $t$ \\
    $\wbar_{\node, t} = \sum_{\tree_\node} 2^{-|\tree_\node |}\prod_{\node' \in \leaves(\tree_\node)} w_{\node', t}$ & Average weight stored in node $\node$  between $t-1$ and $t$ 
  \end{tabular}
}
\end{table}

Both updates can be implemented on the fly in a \emph{purely sequential manner}.
Training over a sequence $(x_1, y_1), \ldots, (x_t, y_t)$ means using each sample \emph{once} for training, and both updates are \emph{exact} and do not rely on an approximate sampling scheme.
Partition and prediction updates are precisely described in Algorithm~\ref{alg:mondrian-update} below and illustrated in Figure~\ref{fig:algorithm-unrestricted}.
The exact procedure computing the prediction after both updates have been performed is detailed in Algorithm~\ref{alg:predict} in Section~\ref{subsec:AmfPrediction}.
In order to ease the reading of this technical part of the paper, we gather in Table~1 
notations that are used in this Section.

\subsection{Partition update}
\label{subsec:PartitionUpdate}

Before seeing the point $(x_t, y_t)$, the algorithm maintains a partition $\mondrian_{t} = (\tree_t, \splits_t)$, which corresponds to the minimal subtree of the infinite Mondrian partition $\mondrian \sim \MP$ that separates all distinct sample points in $\{ x_1, \dots, x_{t-1} \}$.
This corresponds to the tree obtained from the infinite tree $\mondrian$ by removing all splits of ``empty'' cells (that do not contain any point among $\{ x_1, \dots, x_{t-1} \}$).
As $x_t$ becomes available, this tree is updated as follows (see
lines~2--11 in Algorithm~\ref{alg:mondrian-update} below):
\begin{itemize}
\item find the leaf in $\mondrian_{t}$ that contains $x_t$; it contains at most one point among $\{ x_1, \dots, x_{t-1} \}$;
\item if the leaf contains no point $x_s \neq x_t$, then let $\mondrian_{t+1} = \mondrian_{t}$. Otherwise, let $x_s$ be the unique point among $\{ x_1, \dots, x_{t-1} \}$ (distinct from $x_t$) in this cell. Splits of the cell containing $\{ x_s, x_t \}$ are successively sampled (following the recursive definition of the Mondrian distribution, see Section~\ref{subsec:mondrian}), until a split separates $x_s$ and $x_t$.
\end{itemize}

\subsection{Prediction function update}
\label{subsec:PredictionUpdate}

The algorithm maintains weights $w_{\node, t}$ and $\wbar_{\node, t}$ and predictions
 $\pred_{\node, t}$ in order to compute the aggregation over the tree structure (lines 12--18 in Algorithm \ref{alg:mondrian-update}).
 Namely, after round $t-1$ (after seeing sample $(x_{t-1}, y_{t-1}$)), each node $\node \in \tree_{t}$ has the following quantities in memory:
\begin{itemize}
\item the weight $w_{\node, t} = \exp (-\eta \cumloss_{\node, t-1})$, where
  $\cumloss_{\node, t} := \sum_{1\leq s \leq t \pp x_s \in \cell_\node } \ell (\pred_{\node, s}, y_s)$;
\item the averaged weight $\wbar_{\node, t} = \sum_{\tree_\node} 2^{-|\tree_\node |}\prod_{\node' \in \leaves(\tree_\node)} w_{\node', t}$, where the sum ranges over all subtrees $\tree_\node$ rooted at $\node$;
\item the forecast $\pred_{\node, t}$ in node $\node$ at time $t$.
\end{itemize}
Now, given a new sample point $(x_t, y_t)$, the update is performed as follows: we find the leaf $\node_t = \node_{\mondrian_{t+1}} (x_t)$ containing $x_t$ in $\mondrian_{t+1}$ (the partition has been updated with $x_t$ already, since the partition update is performed \emph{before} the prediction function update).
Then, we update the values of $w_{\node, t}, \wbar_{\node, t}, \pred_{\node, t}$ for each $\node$ along an \emph{upwards} recursion from $\node_t$ to the root, while \emph{the values of nodes outside of the path are kept unchanged}:
\begin{itemize}
\item $w_{\node, t+1} = w_{\node, t} \exp(- \eta \ell (\pred_{\node, t}, y_t) )$;
\item if $\node = \node_t$ then $\wbar_{\node, t+1} = w_{\node, t+1}$, otherwise
  \begin{equation*}
    \label{eq:update-avg-weights}
    \wbar_{\node, t+1} = \frac{1}{2} w_{\node, t+1} + \frac{1}{2} \wbar_{\node 0, t+1} \wbar_{\node 1, t+1};
  \end{equation*}  
\item $\pred_{\node, t+1} = h ( (y_s)_{1\leq s \leq t \pp x_s \in \cell_\node} )$ using the prediction algorithm $h : \bigcup_{t \geq 0} \Y^t \to \predspace$, see Section~\ref{sub:agg-ctw}. Note that the prediction algorithms given in Equation~\eqref{eq:reg-predictor} and \eqref{eq:kt-predictor} can be updated online using $y_t$ only and do not require to look back at the sequence $y_1, \ldots, y_{t-1}$.
\end{itemize}
The \emph{partition update} and \emph{prediction function update} correspond to the $\MondrianUpdate(x, y)$ procedure described in Algorithm~\ref{alg:mondrian-update} below.
\begin{algorithm}[htbp]
  \small
  \caption{$\MondrianUpdate(x, y):$ update AMF with a new sample
   $(x, y) \in [0, 1]^d \times \Y$}
  \label{alg:mondrian-update}   
  \begin{algorithmic}[1]
    \STATE \textbf{Input:} a new sample $(x, y) \in [0, 1]^d \times \Y$

    \STATE Let $\node(x)$ be the leaf such that $x \in C_{\node(x)}$ and put $\node = \node(x)$
    \WHILE{$\cell_\node$ contains some $x' \neq x$}
    \STATE Use Lines 1--5 from Algorithm~\ref{alg:mondrian} to split $\cell_\node$ and obtain children cells $C_{\node 0}$ and $C_{\node 1}$
    \IF {$\{ x, x' \} \subset \cell_{\node a}$ for some $a \in \{ 0, 1\}$}
    \STATE Put $\node = \node a$, $(w_{\node a}, \wbar_{\node a}, \pred_{\node a}) = (w_{\node}, \wbar_{\node}, \pred_{\node})$ and $(w_{\node (1-a)}, \wbar_{\node (1-a)}, \pred_{\node (1-a)}) = (1, 1,  h (\varnothing))$ ($h (\varnothing)$ is the default initial prediction described in Section~\ref{sub:agg-ctw})
    \ELSE 
    \STATE Let $a \in \{ 0, 1\}$ be such that $x \in \cell_{\node a}$ and $x' \in \cell_{\node (1 - a)}$. Put $\node = \node a$ and $(w_{\node a}, \wbar_{\node a}, \pred_{\node a}) = (1, 1,  h (\varnothing))$ and $(w_{\node (1-a)}, \wbar_{\node (1-a)}, \pred_{\node (1-a)}) = (w_{\node}, \wbar_{\node}, \pred_{\node})$
    \ENDIF
    \ENDWHILE
    \STATE Put $x_\node = x$ (memorize the fact that $\node$ contains $x$)
    \STATE Let $\mathtt{continueUp} \gets \mathtt{true}$
    \WHILE {$\mathtt{continueUp}$}
    \STATE Set $w_\node = w_\node \exp (- \eta \ell(\pred_\node, y))$
    \STATE Set $\wbar_\node = w_\node$ if $\node$ is a leaf and $\wbar_\node = \frac{1}{2} w_\node + \frac{1}{2} \wbar_{\node 0} \wbar_{\node 1}$ otherwise
    \STATE Update $\pred_\node$ using $y$ (following Section~\ref{sub:agg-ctw})
    \STATE If $\node \neq \root$ let $\node = \parent{\node}$, otherwise let $\mathtt{continueUp} = \mathtt{false}$
    \ENDWHILE
  \end{algorithmic}
\end{algorithm}
Training AMF over a sequence $(x_1, y_1), \ldots, (x_t, y_t)$ means using successive calls to $\MondrianUpdate(x_1, y_1), \ldots, \MondrianUpdate(x_t, y_t)$.
The procedure $\MondrianUpdate$ $(x, y)$ maintains in memory the current state of the Mondrian partition $\Pi = (\tree, \Sigma)$.
The tree $\tree$ contains the parent and children relations between all nodes $\node \in \tree$, while each $\sigma_\node \in \Sigma$ can contain
\begin{equation}
  \label{eq:data_in_node_unrestricted}
  \sigma_\node = (j_\node, s_\node, \pred_{\node}, w_{\node}, \wbar_{\node}, x_\node),
\end{equation}
namely the split coordinate $j_\node \in \{ 1, \ldots, d \}$ and split threshold $s_\node \in [0, 1]$ (only if $\node \in \inodes(\tree)$), the prediction function $\pred_\node \in \predspace$, aggregation weights $w_{\node}, \wbar_{\node} \in (0, +\infty)$ and a vector $x_\node \in [0, 1]^d$ if $\node \in \leaves(\tree)$.
An illustration of Algorithm~\ref{alg:mondrian-update} is given in Figure~\ref{fig:algorithm-unrestricted} below.

\begin{figure}[h!]
  \begin{minipage}{0.37\textwidth}
  	\begin{tikzpicture}[scale=0.25, every node/.style={scale=0.55}]

  
  \pgfmathsetmacro{\xmin}{0} 
  \pgfmathsetmacro{\xmax}{10}
  \pgfmathsetmacro{\ymin}{0}
  \pgfmathsetmacro{\ymax}{10}


  \pgfmathsetmacro{\xa}{2} 
  \pgfmathsetmacro{\ya}{8}
  \pgfmathsetmacro{\xb}{1.3} 
  \pgfmathsetmacro{\yb}{6.1}
  \pgfmathsetmacro{\xc}{4.5} 
  \pgfmathsetmacro{\yc}{7}
  \pgfmathsetmacro{\xd}{5} 
  \pgfmathsetmacro{\yd}{4}
  \pgfmathsetmacro{\xe}{7} 
  \pgfmathsetmacro{\ye}{3}

  \pgfmathsetmacro{\sy}{4.7} 
  \pgfmathsetmacro{\sxl}{4.2} 
  \pgfmathsetmacro{\sxr}{8}
  \pgfmathsetmacro{\sylr}{1.5}
  \pgfmathsetmacro{\sxrl}{3}
  \pgfmathsetmacro{\sxlrr}{6.3}
  \pgfmathsetmacro{\syrll}{6.8}

  \draw [line width=0.7pt] (\xmin,\ymin) rectangle (\xmax,\ymax) ; 

  \draw [line width=0.7pt] (\xmin,\sy) -- (\xmax, \sy) ; 
  \draw [line width=0.7pt] (\sxr,\sy) -- (\sxr, \ymax) ; 
  \draw [line width=0.7pt] (\sxrl,\sy) -- (\sxrl, \ymax) ; 
  \draw [line width=0.7pt] (\xmin,\syrll) -- (\sxrl, \syrll) ; 

  \draw (\xa,\ya) node {$\bullet$} node [left] {$x_3$} ;
  \draw (\xb,\yb) node {$\bullet$} node [below] {$x_4$} ;
  \draw (\xc,\yc) node {$\bullet$} node [right] {$x_1$} ;
  \draw (\xe,\ye) node {$\bullet$} ;
  \draw (\xe,\ye) node [right] {$x_2$} ;

  \pgfmathsetmacro{\trx}{16.5} 
  \pgfmathsetmacro{\try}{10}

  \pgfmathsetmacro{\ix}{2.5} 
  \pgfmathsetmacro{\ixx}{1}  
  \pgfmathsetmacro{\ixxx}{0.5}  

  \pgfmathsetmacro{\iy}{-2.5} 
  \pgfmathsetmacro{\iyy}{-2}
  \pgfmathsetmacro{\iyyy}{-1.5}

  \coordinate (E) at (\trx,\try) ; 
  \coordinate (L) at (\trx-\ix,\try+\iy) ;  
  \coordinate (R) at (\trx+\ix,\try+\iy) ;  
  \coordinate (RL) at (\trx+\ix-\ixx,\try+\iy+\iyy) ;
  \coordinate (RR) at (\trx+\ix+\ixx,\try+\iy+\iyy) ;
  \coordinate (RLL) at (\trx+\ix-\ixx-\ixxx,\try+\iy+\iyy+\iyyy) ;
  \coordinate (RLR) at (\trx+\ix-\ixx+\ixxx,\try+\iy+\iyy+\iyyy) ;
  \coordinate (RLLL) at (\trx+\ix-\ixx-\ixxx-\ixxx,\try+\iy+\iyy+\iyyy+\iyyy) ;
  \coordinate (RLLR) at (\trx+\ix-\ixx-\ixxx+\ixxx,\try+\iy+\iyy+\iyyy+\iyyy) ;
  \coordinate (LL) at (\trx-\ix-\ixx,\try+\iy+\iyy) ;  
  \coordinate (LR) at (\trx-\ix+\ixx,\try+\iy+\iyy) ;
  \coordinate (LRL) at (\trx-\ix+\ixx-\ixxx,\try+\iy+\iyy+\iyyy) ;
  \coordinate (LRR) at (\trx-\ix+\ixx+\ixxx,\try+\iy+\iyy+\iyyy) ;
  \coordinate (LRRL) at (\trx-\ix+\ixx+\ixxx-\ixxx,\try+\iy+\iyy+\iyyy+\iyyy) ;
  \coordinate (LRRR) at (\trx-\ix+\ixx+\ixxx+\ixxx,\try+\iy+\iyy+\iyyy+\iyyy) ;

  \draw [line width=0.6pt] (E) -- (L) ;
  \draw [line width=0.6pt] (E) -- (R) ;
  \draw [line width=0.6pt] (R) -- (RL) ;
  \draw [line width=0.6pt] (R) -- (RR) node {\textcolor{white}{$\bullet$}} node {$\circ$} ;
  \draw [line width=0.6pt] (RL) -- (RLL) ;
  \draw [line width=0.6pt] (RL) -- (RLR) node {$\bullet$} node [below] {$x_1$} ;
  \draw [line width=0.6pt] (RLL) -- (RLLL) node {$\bullet$} node [below] {$x_4$} ;
  \draw [line width=0.6pt] (RLL) -- (RLLR) node {$\bullet$} node [below] {$x_3$} ;
  \draw (L) node {$\bullet$} node [below] {$x_2$} ;

  \draw (\trx,\ymin+0.7) node {Tree $\tree_t$ $(t=5)$ 
    before seeing $x_t$};

\end{tikzpicture}    
  
  \end{minipage}
  \begin{minipage}{0.53\textwidth}
  	\begin{tikzpicture}[scale=0.25, every node/.style={scale=0.55}]

  
  \pgfmathsetmacro{\xmin}{0} 
  \pgfmathsetmacro{\xmax}{10}
  \pgfmathsetmacro{\ymin}{0}
  \pgfmathsetmacro{\ymax}{10}


  \pgfmathsetmacro{\xa}{2} 
  \pgfmathsetmacro{\ya}{8}
  \pgfmathsetmacro{\xb}{1.3} 
  \pgfmathsetmacro{\yb}{6.1}
  \pgfmathsetmacro{\xc}{4.5} 
  \pgfmathsetmacro{\yc}{7}
  \pgfmathsetmacro{\xd}{5} 
  \pgfmathsetmacro{\yd}{4}
  \pgfmathsetmacro{\xe}{7} 
  \pgfmathsetmacro{\ye}{3}

  \pgfmathsetmacro{\sy}{4.7} 
  \pgfmathsetmacro{\sxl}{4.2} 
  \pgfmathsetmacro{\sxr}{8}
  \pgfmathsetmacro{\sylr}{1.5}
  \pgfmathsetmacro{\sxrl}{3}
  \pgfmathsetmacro{\sxlrr}{6.3}
  \pgfmathsetmacro{\syrll}{6.8}

  \draw [line width=0.7pt] (\xmin,\ymin) rectangle (\xmax,\ymax) ; 

  \draw [line width=0.7pt] (\xmin,\sy) -- (\xmax, \sy) ; 
  \draw [red,line width=0.7pt] (\sxl,\ymin) -- (\sxl, \sy) ; 
  \draw [line width=0.7pt] (\sxr,\sy) -- (\sxr, \ymax) ; 
  \draw [red,line width=0.7pt] (\sxl,\sylr) -- (\xmax, \sylr) ; 
  \draw [line width=0.7pt] (\sxrl,\sy) -- (\sxrl, \ymax) ; 
  \draw [red,line width=0.7pt] (\sxlrr,\sylr) -- (\sxlrr, \sy) ; 
  \draw [line width=0.7pt] (\xmin,\syrll) -- (\sxrl, \syrll) ; 

  \draw (\xa,\ya) node {$\bullet$} node [left] {$x_3$} ;
  \draw (\xb,\yb) node {$\bullet$} node [below] {$x_4$} ;
  \draw (\xc,\yc) node {$\bullet$} node [right] {$x_1$} ;
  \draw [red] (\xd,\yd) node {$\bullet$} ;
  \draw [red] (\xd,\yd) node [right] {$x_5$} ;
  \draw (\xe,\ye) node {$\bullet$} ;
  \draw (\xe,\ye) node [right] {$x_2$} ;

  \pgfmathsetmacro{\trx}{16.5} 
  \pgfmathsetmacro{\try}{10}

  \pgfmathsetmacro{\ix}{2.5} 
  \pgfmathsetmacro{\ixx}{1}  
  \pgfmathsetmacro{\ixxx}{0.5}  

  \pgfmathsetmacro{\iy}{-2.5} 
  \pgfmathsetmacro{\iyy}{-2}
  \pgfmathsetmacro{\iyyy}{-1.5}

  \coordinate (E) at (\trx,\try) ; 
  \coordinate (L) at (\trx-\ix,\try+\iy) ;  
  \coordinate (R) at (\trx+\ix,\try+\iy) ;  
  \coordinate (RL) at (\trx+\ix-\ixx,\try+\iy+\iyy) ;
  \coordinate (RR) at (\trx+\ix+\ixx,\try+\iy+\iyy) ;
  \coordinate (RLL) at (\trx+\ix-\ixx-\ixxx,\try+\iy+\iyy+\iyyy) ;
  \coordinate (RLR) at (\trx+\ix-\ixx+\ixxx,\try+\iy+\iyy+\iyyy) ;
  \coordinate (RLLL) at (\trx+\ix-\ixx-\ixxx-\ixxx,\try+\iy+\iyy+\iyyy+\iyyy) ;
  \coordinate (RLLR) at (\trx+\ix-\ixx-\ixxx+\ixxx,\try+\iy+\iyy+\iyyy+\iyyy) ;
  \coordinate (LL) at (\trx-\ix-\ixx,\try+\iy+\iyy) ;  
  \coordinate (LR) at (\trx-\ix+\ixx,\try+\iy+\iyy) ;
  \coordinate (LRL) at (\trx-\ix+\ixx-\ixxx,\try+\iy+\iyy+\iyyy) ;
  \coordinate (LRR) at (\trx-\ix+\ixx+\ixxx,\try+\iy+\iyy+\iyyy) ;
  \coordinate (LRRL) at (\trx-\ix+\ixx+\ixxx-\ixxx,\try+\iy+\iyy+\iyyy+\iyyy) ;
  \coordinate (LRRR) at (\trx-\ix+\ixx+\ixxx+\ixxx,\try+\iy+\iyy+\iyyy+\iyyy) ;

  \draw [line width=1.5pt] (E) -- (L) ;
  \draw [line width=0.6pt] (E) -- (R) ;
  \draw [line width=0.6pt] (R) -- (RL) ;
  \draw [line width=0.6pt] (R) -- (RR) node {\textcolor{white}{$\bullet$}} node {$\circ$} ;
  \draw [line width=0.6pt] (RL) -- (RLL) ;
  \draw [line width=0.6pt] (RL) -- (RLR) node {$\bullet$} node [below] {$x_1$} ;
  \draw [line width=0.6pt] (RLL) -- (RLLL) node {$\bullet$} node [below] {$x_4$} ;
  \draw [line width=0.6pt] (RLL) -- (RLLR) node {$\bullet$} node [below] {$x_3$} ;
  \draw [red,line width=0.6pt] (L) -- (LL) node {\textcolor{white}{$\bullet$}} node {$\circ$} ;
  \draw [red,line width=1.5pt] (L) -- (LR) ;
  \draw [red,line width=0.6pt] (LR) -- (LRL) node {\textcolor{white}{$\bullet$}} node {$\circ$} ;
  \draw [red,line width=1.5pt] (LR) -- (LRR) ;
  \draw [red,line width=1.5pt] (LRR) -- (LRRL) node {$\bullet$} node [below] {$x_5$} ;
  \draw [red,line width=0.6pt] (LRR) -- (LRRR) ;
  \draw (LRRR) node {$\bullet$} node [below] {$x_2$} ;

  \draw (\trx,\ymin+0.7) node {Updated tree 
    \textcolor{red}{$\tree_{t+1}$} $(t=5)$};


  \pgfmathsetmacro{\xaxis}{22.5}
  \pgfmathsetmacro{\xpp}{0.5}
  \pgfmathsetmacro{\yaxismin}{2}
  \pgfmathsetmacro{\yaxismax}{10}
  \pgfmathsetmacro{\yaxisa}{7}
  \pgfmathsetmacro{\yaxisb}{6}
  \pgfmathsetmacro{\yaxisc}{5}
  \pgfmathsetmacro{\yaxisd}{4}

  \coordinate (Amin) at (\xaxis,\yaxismin) ;
  \coordinate (Amax) at (\xaxis,\yaxismax) ;
  
  \draw[-{latex[length=2mm, width=1mm]},dashed] (Amin) -- (Amax) ;
  \draw (\xaxis+\xpp,\yaxisa) node [right] {Updates along the \textbf{path} of $x_t$:} ;
  \draw (\xaxis+\xpp,\yaxisb) node [right] {${w_{\node, t+1}} = w_{\node, t} \exp(-\ell (\pred_{\node, t}, y_t))$} ;
  \draw (\xaxis+\xpp,\yaxisc) node [right] {${w_{\node, t+1}} = \frac{1}{2} w_{\node, t+1} + \frac{1}{2} \wbar_{\node 0, t+1} \wbar_{\node 1, t+1}$} ;
  \draw (\xaxis+\xpp,\yaxisd) node [right] {${\pred_{\node, t+1}} = \cdots$} ;

\end{tikzpicture}    
  
  \end{minipage}  
  \caption{Illustration of the $\MondrianUpdate(x_t, y_t)$ procedure from Algorithm~\ref{alg:mondrian-update}: update of the partition, weights and node predictions as a new data point $(x_t, y_t)$ for $t=5$ becomes available.
    \emph{Left}: tree partition $\mondrian_t$ before seeing $(x_t, y_t)$. \emph{Right}:
    update of the partition (in red) and new splits to separate $x_5$ from $x_2$.    
    Empty circles ($\circ$) denote empty leaves, while leaves containing a point are indicated by a filled circle ($\bullet$).
    The path of $x_t$ in the tree is indicated in bold.
    The updates of weights and predictions along the path are indicated, and are computed in an upwards recursion.}
  \label{fig:algorithm-unrestricted}  
\end{figure}
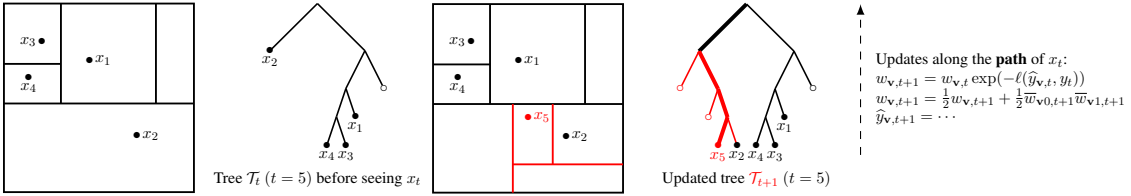

\subsection{Prediction}
\label{subsec:AmfPrediction}

At any point in time, one can ask AMF to perform prediction for an arbitrary features vector $x \in [0, 1]^d$.
Let us assume that AMF did already $t$ training steps on the $M$ trees it contains and
let us recall that the prediction produced by AMF is the average of their predictions, see Equation~\eqref{eq_def_RF}, where the prediction $\widehat f_t(x, \Pi_t^{(m)})$ of each decision tree $m=1, \ldots, M$ is computed in parallel following Equation~\eqref{eq:exact-aggregation}.

The prediction of a decision tree is performed through a call to the procedure $\Predict(x)$ described in Algorithm~\ref{alg:predict} below.
First, we perform a temporary partition update of $\mondrian$ using $x$, following Lines~2--10 of Algorithm~\ref{alg:mondrian-update}, so that we find or create a new leaf node $\node(x)$ such that $x \in C_{\node(x)}$.
Let us stress that this update of $\Pi$ using $x$ is discarded once the prediction for $x$ is produced, so that the decision function of AMF does not change after producing predictions. 
The prediction is then computed recursively, along an upwards recursion going from $\node(x)$ to the root $\root$, in the following way:
\begin{itemize}
\item if $\node = \node(x)$ we set $\wt y_{\node} = \pred_{\node}$;
\item if $\node \neq \node(x)$ (it is an interior node such that $x \in \cell_\node$), then 
assuming that $\node a$ ($a \in \{ 0 , 1\}$) is the child of $\node$ such that $x \in \cell_{\node a}$, we set
  \begin{equation*}
    \label{eq:update-avg-pred}
    \wt y_{\node}  = \frac{1}{2} \frac{w_{\node}}{\wbar_{\node}} \pred_{\node}
    + \frac{1}{2} \frac{\wbar_{\node 0} \wbar_{\node 1}}{\wbar_{\node}} \wt y_{\node a} \, .
  \end{equation*}
\end{itemize}
The prediction $\widehat f_t(x)$ of the tree is given by $\wt y_\root$, which is the last value obtained in this recursion.
Let us recall that this computes the aggregation with exponential weights of all the decision functions produced by all the prunings of the current Mondrian tree, as described in Equation~\eqref{eq:exact-aggregation} and stated in Proposition~\ref{prop:algorithm-implements-aggregation} above.
The prediction procedure is summarized in Algorithm~\ref{alg:predict} below.

\begin{algorithm}[htbp]
  \small
  \caption{$\Predict (x): $ predict the label of $x \in [0, 1]^d$}
  \label{alg:predict}
  \begin{algorithmic}[1]
  \STATE \textbf{Input:} a features vector $x \in [0, 1]^d$
    \STATE Follow Lines~2--10 of Algorithm~\ref{alg:mondrian-update} to do a temporary update of the current partition $\Pi$ using $x$ and let $\node(x)$ be the leaf such that $x \in C_{\node(x)}$
    \STATE Set $\wt y_\node = \pred_{\node(x)}$
    \WHILE{$\node \neq \root$}
    \STATE Let $(\node, \node a) = (\parent{\node}, \node)$ (for some $a\in \{ 0, 1\}$)
    \STATE Let
    $\wt y_\node = \frac{1}{2} \frac{w_\node}{\wbar_{\node}} \pred_{\node} + \frac{1}{2} \frac{\wbar_{\node (1-a)} \wbar_{\node a}}{\wbar_\node} \wt y_{\node a}$
    \ENDWHILE
    \STATE \textbf{Return} $\wt y_\root$
  \end{algorithmic}
\end{algorithm}

\subsection{AMF is efficient and exact} 
\label{sub:amf_does_what_is_claims}

As explained above, the aggregated predictions of all subtrees weighted by the prior $\pi$ from Equation~\eqref{eq:exact-aggregation} can be computed exactly using Algorithm~\ref{alg:predict}.
\begin{proposition}
  \label{prop:algorithm-implements-aggregation}
  Let $t\geq 1$ and $x \in [0, 1]^d$.
  The value $\widehat f_t(x)$ from Equation~\eqref{eq:exact-aggregation} can be computed exactly via the $\Predict$ procedure \textup(see Algorithms~\ref{alg:mondrian-update} and~\ref{alg:predict}\textup).
\end{proposition}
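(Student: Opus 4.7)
The strategy is to identify the quantities $\wbar_\node$ stored by the algorithm with the posterior-like sums over all subtrees of $\tree^\mondrian$ rooted at $\node$, and then to recognize the $\Predict$ recursion as the corresponding Context Tree Weighting computation for the numerator of Equation~\eqref{eq:exact-aggregation}. I first handle the denominator, and then adapt the argument to the numerator.

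The first step is to prove, by induction on the sequence of $\MondrianUpdate$ calls, that for every node $\node$ of the maintained tree
$\wbar_\node = \sum_{\tree_\node} 2^{-|\tree_\node|} \prod_{\node' \in \leaves(\tree_\node)} w_{\node', t}$, where the sum ranges over all finite subtrees $\tree_\node$ of the \emph{infinite} Mondrian $\tree^\mondrian$ rooted at $\node$. At interior nodes of the maintained tree this follows directly from the rule $\wbar_\node = \frac{1}{2} w_\node + \frac{1}{2}\wbar_{\node 0}\wbar_{\node 1}$ of Algorithm~\ref{alg:mondrian-update}, together with the factorisation $|\tree_\node| = 1 + |\tree_{\node 0}| + |\tree_{\node 1}|$ when $\tree_\node$ branches at $\node$. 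The main obstacle is the ``leaf collapse'' step: at a maintained leaf $\node$ the algorithm sets $\wbar_\node = w_\node$, whereas the identity to prove still involves infinitely many subtrees of $\tree^\mondrian$ descending below $\node$. The resolution comes from the invariant that $\cell_\node$ contains at most one distinct training feature vector: every descendant $\node'$ of $\node$ in $\tree^\mondrian$ is then either empty or inherits exactly the same training sample(s) as $\node$, so $w_{\node', t} = 1$ in the empty case and $w_{\node', t} = w_{\node, t}$ on the ``path'' of the data point. Since any $\tree_\node$ has at most one leaf on that path, one gets $\prod_{\node' \in \leaves(\tree_\node)} w_{\node', t} = w_{\node, t}$ for every $\tree_\node$, and $\sum_{\tree_\node} 2^{-|\tree_\node|} = 1$ by the total-mass property of the critical binary branching process, giving $\wbar_\node = w_{\node, t}$ as stored.

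For the prediction itself, I would introduce the companion quantity $N_\node(x) = \sum_{\tree_\node} 2^{-|\tree_\node|} \prod_{\node' \in \leaves(\tree_\node)} w_{\node', t}\,\pred_{\tree_\node, t}(x)$, and use the additivity $L_{t-1}(\tree) = \sum_{\node' \in \leaves(\tree)} L_{\node', t-1}$ together with $\pi(\tree) = 2^{-|\tree|}$ to rewrite Equation~\eqref{eq:exact-aggregation} as $\widehat f_t(x) = N_\root(x)/\wbar_\root$. Isolating the trivial subtree $\{\node\}$ from the branching subtrees, and noting that the leaf of $\tree_\node$ containing $x$ lies in $\tree_{\node a}$, where $a \in \{0,1\}$ is the child with $x \in \cell_{\node a}$, a direct computation gives the recursion $N_\node(x) = \tfrac{1}{2} w_\node \pred_\node + \tfrac{1}{2} \wbar_{\node(1-a)} N_{\node a}(x)$. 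Dividing by $\wbar_\node$ and setting $\widetilde y_\node := N_\node(x)/\wbar_\node$ reproduces exactly the recursion on line~6 of Algorithm~\ref{alg:predict}.

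The base case at the leaf $\node(x)$ produced by the temporary partition update is handled by the same leaf collapse: $\cell_{\node(x)}$ contains no training point different from $x$, so $\pred_{\tree_{\node(x)}, t}(x) = \pred_{\node(x), t}$ for every subtree $\tree_{\node(x)}$ of $\tree^\mondrian$ rooted at $\node(x)$, giving $N_{\node(x)}(x) = \pred_{\node(x)}\,\wbar_{\node(x)}$, which matches the initialization $\widetilde y_{\node(x)} = \pred_{\node(x)}$ in Algorithm~\ref{alg:predict}. A final induction upward along the path from $\node(x)$ to the root then shows that the algorithm returns $\widetilde y_\root = N_\root(x)/\wbar_\root = \widehat f_t(x)$, completing the proof.
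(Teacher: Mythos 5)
Your proof is correct and follows essentially the same route as the paper's: the paper first marginalizes the infinite sum over subtrees of $\tree^\mondrian$ down to a finite sum over subtrees of the minimal separating tree (with the collapsed prior $2^{-\|\tree\|}$) and then applies the Helmbold--Schapire sum--product recursion, whereas you keep the infinite sums throughout and perform the same marginalization locally at each maintained leaf via your ``leaf collapse'' step (the total mass of the critical branching process equals one, and all descendants of a single-point cell carry the same node weight and prediction). The resulting recursions for $\wbar_\node$ and $\wt y_\node$ are identical to the paper's, so the two arguments differ only in bookkeeping.
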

The proof of Proposition~\ref{prop:algorithm-implements-aggregation} is given in Section~\ref{sec:proofs}.
This prior choice enables to bypass the need to maintain one weight per subtree, and leads to the ``collapsed'' implementation described in Algorithms~\ref{alg:mondrian-update} and~\ref{alg:predict} that only requires to maintain one weight per node (which is exponentially smaller).
Note that this algorithm is \emph{exact}, in the sense that it does not require any approximation scheme. 
Moreover, this online algorithm corresponds to its batch counterpart, in the sense that there is no loss of information coming from the online (or streaming) setting versus the batch setting (where the whole dataset is available at once).


The proof of Proposition~\ref{prop:algorithm-implements-aggregation} relies on some standard identities that enable to efficiently compute sums of products over tree structures in a recursive fashion (from~\citealp{helmbold1997pruning}), recalled in Lemma~\ref{lem:ctw-sum-prod} in Section~\ref{sec:proofs}.
Such identities are at the core of the \emph{Context Tree Weighting} algorithm (CTW), which our online algorithm implements (albeit over an evolving tree structure), and which constitutes an efficient way to perform Bayesian mixtures of contextual tree models under a branching process prior. The CTW algorithm, based on a sum-product factorization,
is a state-of-the art algorithm used in lossless coding and compression.
We use a variant of the \emph{Tree Expert} algorithm \citep{helmbold1997pruning,cesabianchi2006PLG}, which is closely linked to CTW~\citep{willems1995context-basic,willems1998context-extensions,catoni2004statistical}.
We note that several extensions of CTW have been proposed in the framework of compression, to allow for bigger classes of models \citet{veness2012context} and leverage on the presence of irrelevant variables \citet{bellemare2014skip} just to name but a few.

\begin{table}
  \caption{Complexity of AMF versus Mondrian Forests.}
  \label{tab:complexity} 
  \centering
  \fbox{
    \begin{tabular}{cc}
      Algorithm & Complexity \\
      $\MondrianUpdate$ & $\Theta(\log n)$\\
      $\Predict$ & $\Theta(\log n)$\\
      $\MondrianForestUpdate$ & $\Theta( n)$\\
      $\MondrianForestPredict$ & $\Theta(\log n)$\\     
    \end{tabular}
  }
\end{table}

\begin{remark}
  \label{rem:complexity}
  The complexity of $\MondrianUpdate(x, y)$ is twice the depth of the tree at the moment it is called\textup, since it requires to follow a downwards path to a leaf\textup, and to go back upwards to the root.
  As explained in Proposition~\ref{prop:mondrian_depth_bound} from Section~\ref{sec:practical_implementation} below\textup, the depth of the Mondrian tree used in AMF is $\Theta(\log n)$ in expectation at step $n$ of training, which leads to a complexity $\Theta(\log n)$ both for Algorithms~\ref{alg:mondrian-update} and~\ref{alg:predict}\textup, where $\Theta(1)$ corresponds to the update complexity of a single node\textup, while the original MF algorithm uses an update with complexity that is linear in the number of leaves in the tree \textup(which is typically exponentially larger\textup).
  These complexities are summarized in Table~2. 
\end{remark}

\section{Theoretical guarantees}
\label{sec:theory}

In addition to being efficiently implementable in a streaming fashion, AMF is amenable to a thorough end-to-end theoretical analysis.
This relies on two main ingredients: $(i)$ a precise control of the geometric properties of the Mondrian partitions and $(ii)$ a regret analysis of the aggregation procedure (exponentially weighted aggregation of all finite prunings of the infinite Mondrian) which in turn yields excess risk bounds and adaptive minimax rates.
The guarantees provided below hold for a single tree in the Forest, but hold also for the average of several trees (used in by the forest) provided that the loss is convex.

\subsection{Related works on theoretical guarantees for Random Forests}



The consistency of stylized RF algorithms was first established by \citet{biau2008consistency_rf}, and later obtained for more sophisticated variants in \citet{denil2013online,scornet2015consistency_rf}.
Note that consistency results do not provide rates of convergence, and hence only offer limited guidance on how to properly tune the parameters of the algorithm.
Starting with \citet{biau2012analysis_rf, genuer2012variance_purf}, some recent work has thus sought to quantify the speed of convergence of some stylized variants of RF.
Minimax optimal nonparametric rates were first obtained by \citet{arlot2014purf_bias} in dimension~$1$ for the Purely Uniformly Random Forests (PURF) algorithm, in conjunction with suboptimal rates in arbitrary dimension (the number of features exceeds~$1$).

Several recent works \citep{wager2015adaptive,duroux2016impact} also established rates of convergence for variants of RF that essentially amount to some form of Median Forests, where each node contains at least a fixed fraction of observations of its parent. While valid in arbitrary dimension, the established rates are suboptimal. More recently, adaptive minimax optimal rates were obtained by~\citet{mourtada2018mondrian} in arbitrary dimension for the batch Mondrian Forests algorithm.
Our proposed online algorithm, AMF, also achieves minimax rates in an adaptive fashion, namely without knowing the smoothness of the regression function.

As noted by \citet{rockova2017posterior}, the theoretical study of Bayesian methods on trees \citep{chipman1998bayesiancart,denison1998bayesiancart} or sum of trees \citep{chipman2010bart} is less developed.
\citet{rockova2017posterior} analyzes some variant of Bayesian regression trees and sum of trees; they obtain near minimax optimal posterior concentration rates.
Likewise,~\citet{linero2018bayesian} analyze Bayesian sums of soft decision trees models, and establish minimax rates of posterior concentration
  for the resulting SBART procedure.
While these frameworks differ from ours (herein results are posterior concentration rates as opposed to regret bounds and excess risk bounds, and the design is fixed), their approach differs from ours primarily in the chosen trade-off between computational complexity and adaptivity of the method:
these procedures involve approximate posterior sampling over large functional spaces through MCMC methods, and it is unclear whether the considered priors allow for reasonably efficient posterior computations.
In particular, the prior used in~\citet{rockova2017posterior} is taken over all subsets of variables, which is exponentially large in the number of features.




\subsection{Regret bounds}
\label{sub:regret-bounds}

For now, the sequence $(x_1, y_1), \dots, (x_n, y_n) \in [0, 1]^d \times \Y$ is arbitrary, and is in particular not required to be \iid
Let us recall that at step $t$, we have a realization $\Pi_t = (\tree_t, \Sigma_t)$ of a finite Mondrian tree, 
which is the minimal subtree of the infinite Mondrian partition $\mondrian = (\tree^\mondrian, \Sigma^\mondrian)$ that separates all distinct sample points in $\{ x_1, \dots, x_{t} \}$.
Let us recall also that $\pred_{\tree, t} : [0, 1]^d \to \predspace$ are the tree forecasters from Section~\ref{sub:agg-ctw}, where $\tree$ is some subtree of $\tree^\mondrian$.
We need the following
\begin{definition}
  Let $\eta>0$. A loss function $\ell : \predspace \times \Y \to \R$ is said to be $\eta$-\emph{exp-concave} if the function $\exp (-\eta \, \ell (\cdot, y)) : \predspace \to \R$ is concave for each $y \in \Y$.
\end{definition}
\noindent
The following loss functions are $\eta$-exp-concave:
\begin{itemize}
  \item The \emph{logarithmic loss} $\ell (\pred, y) = - \log \pred( y )$, with $\Y$ a finite set and $\predspace = \probas(\Y)$, with $\eta = 1$;
  \item The \emph{quadratic loss} $\ell (\pred, y) = (\pred - y)^2$ on $\Y = \predspace = [-B, B] \subset \R$, with $\eta = {1}/ ( 8B^2)$.
\end{itemize}

We start with Lemma~\ref{lem:regret-pruning}, which states that the prediction function used in AMF (see Equation~\eqref{eq:exact-aggregation}) satisfies a regret bound where the regret is computed with respect to any pruning $\tree$ of~$\tree^\mondrian$.
 \begin{lemma}
  \label{lem:regret-pruning}
  Consider a $\eta$-exp-concave loss function $\ell$. 
  Fix a realization $\mondrian = (\tree^\mondrian, \Sigma^\mondrian) \sim \MP$ and let $\tree \subset \tree^\mondrian$ be a finite subtree.
  For every sequence $(x_1, y_1), \dots, (x_n, y_n)$, the prediction functions $\wh f_1, \dots, \wh f_n$ based on $\mondrian$ and computed by AMF{} satisfy
  \begin{equation}
    \label{eq:regret-pruning}
    \sum_{t=1}^n \ell (\wh f_t (x_t) , y_t)
    - \sum_{t=1}^n \ell (\pred_{\tree, t} (x_t), y_t)
    \leq \frac{1}{\eta} |\tree| \log 2,
  \end{equation}
  where we recall that $|\tree|$ is the number of nodes in $\tree$.
\end{lemma}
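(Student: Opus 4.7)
The approach is to recognize that the prediction rule in Equation~\eqref{eq:exact-aggregation} is precisely the exponentially weighted average forecaster applied to the countable family of ``experts'' indexed by the finite subtrees $\tree \subset \tree^\mondrian$, each expert $\tree$ predicting $\pred_{\tree,t}(x_t)$ at time $t$, with prior weights $\pi(\tree) = 2^{-|\tree|}$. Since $\pi$ is the law of a branching process producing finite trees almost surely (as recalled after Equation~\eqref{eq:ctw-prior}), it is a genuine probability distribution, and the standard regret analysis of EWA under an $\eta$-exp-concave loss applies. The regret of such a forecaster against any fixed expert $\tree$ is well known to be at most $\frac{1}{\eta} \log \frac{1}{\pi(\tree)} = \frac{|\tree|\log 2}{\eta}$, which is exactly the claimed bound.

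Concretely, I would introduce the normalization
\begin{equation*}
W_t \;=\; \sum_{\tree \subset \tree^\mondrian} \pi(\tree)\, e^{-\eta L_t(\tree)}, \qquad W_0 = \sum_{\tree} \pi(\tree) = 1,
\end{equation*}
and show the one-step inequality $W_t \leq W_{t-1}\, e^{-\eta \,\ell(\widehat f_t(x_t),\,y_t)}$. This is where $\eta$-exp-concavity enters: letting $\lambda_\tree^{(t)} = \pi(\tree)\,e^{-\eta L_{t-1}(\tree)}/W_{t-1}$, the aggregated prediction rewrites as the convex combination $\widehat f_t(x_t) = \sum_\tree \lambda_\tree^{(t)}\, \pred_{\tree,t}(x_t)$, and the concavity of $\pred \mapsto e^{-\eta \ell(\pred, y_t)}$ together with Jensen's inequality yields
\begin{equation*}
e^{-\eta \,\ell(\widehat f_t(x_t),\,y_t)} \;\geq\; \sum_\tree \lambda_\tree^{(t)}\, e^{-\eta\, \ell(\pred_{\tree,t}(x_t),\,y_t)} \;=\; \frac{W_t}{W_{t-1}}.
\end{equation*}

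Iterating over $t=1,\dots,n$ and using $W_0 = 1$ gives the telescoping upper bound $W_n \leq \exp\bigl(-\eta \sum_{t=1}^n \ell(\widehat f_t(x_t), y_t)\bigr)$. On the other hand, for the specific subtree $\tree$ appearing in the statement, one has the trivial lower bound $W_n \geq \pi(\tree)\, e^{-\eta L_n(\tree)} = 2^{-|\tree|}\, e^{-\eta L_n(\tree)}$. Combining the two estimates and taking logarithms gives
\begin{equation*}
\sum_{t=1}^n \ell(\widehat f_t(x_t), y_t) \;\leq\; L_n(\tree) + \frac{|\tree|\log 2}{\eta},
\end{equation*}
which is Equation~\eqref{eq:regret-pruning}.

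I do not anticipate a serious obstacle: the only delicate point is justifying Jensen's inequality for a countable convex combination (which follows by monotone convergence after truncating to subtrees of bounded depth, using that $\sum_\tree \pi(\tree) = 1$), and checking that $\widehat f_t$ as defined by Algorithm~\ref{alg:predict} genuinely equals the EWA prediction in Equation~\eqref{eq:exact-aggregation} — but this is precisely the content of Proposition~\ref{prop:algorithm-implements-aggregation}, which we are allowed to invoke. Everything else is a routine EWA potential argument.
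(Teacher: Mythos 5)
Your proposal is correct and follows essentially the same route as the paper: the paper's proof is a one-line combination of Proposition~\ref{prop:algorithm-implements-aggregation} with the standard exponentially weighted average regret bound for $\eta$-exp-concave losses (Lemma~\ref{lem:exp-regret}, attributed to Vovk), which is exactly the bound you re-derive inline via the potential $W_t$ and Jensen's inequality. The only difference is that you unpack the cited lemma rather than invoking it, which changes nothing of substance.
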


Lemma~\ref{lem:regret-pruning} is a direct consequence of a standard regret bound for the exponential weights algorithm (see Lemma~\ref{lem:exp-regret} from Section~\ref{sec:proofs}), together with the fact that the Context Tree Weighting algorithm performed in Algorithms~\ref{alg:mondrian-update} and~\ref{alg:predict} computes it exactly, as stated in Proposition~\ref{prop:algorithm-implements-aggregation}.
By combining Lemma~\ref{lem:regret-pruning} with regret bounds for the online algorithms used in each node, both for the logarithmic loss and the quadratic loss, we obtain the following regret bounds with respect to any pruning $\tree$ of $\tree^\mondrian$.
\begin{corollary}[Classification]
  \label{cor:regret-best-pruning-log}
  Fix $\mondrian = (\tree^\mondrian, \Sigma^\mondrian)$ as in Lemma~\ref{lem:regret-pruning} and consider the classification setting described in Section~\ref{sub:agg-ctw}. 
  For any finite subtree $\tree$ of $\tree^\mondrian$ and every sequence $(x_1, y_1), \dots, (x_n, y_n)$, the prediction functions $\wh f_1, \dots, \wh f_n$ based on $\mondrian$ computed by AMF{} with $\eta=1$ satisfy
  \begin{equation}
    \label{eq:regret-best-pruning-log}
    \sum_{t=1}^n \ell (\wh f_t (x_t) , y_t)
    - \sum_{t=1}^n \ell (g_{\tree} (x_t), y_t)
    \leq  |\tree| \log 2 + \frac{(|\tree|+1) (|\Y| - 1)}{4} \log (4n)
  \end{equation}
  for any function $g_{\tree} : [0, 1]^d \to \probas(\Y)$ which is constant on the leaves of $\tree$.
\end{corollary}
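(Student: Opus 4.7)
The strategy is to combine Lemma~\ref{lem:regret-pruning} with a standard leaf-wise regret bound for the Krichevsky--Trofimov (KT) predictor, and then aggregate these bounds over the leaves of the comparator tree $\tree$ using Jensen's inequality.

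First, I would observe that the logarithmic loss is $1$-exp-concave, so Lemma~\ref{lem:regret-pruning} applied with $\eta = 1$ gives
\[
  \sum_{t=1}^n \ell(\wh f_t(x_t), y_t) - \sum_{t=1}^n \ell(\pred_{\tree, t}(x_t), y_t) \leq |\tree| \log 2.
\]
Hence it suffices to control the excess loss of the tree forecaster $\pred_{\tree, t}$ against the comparator $g_\tree$.

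Next, I would split the time steps $t = 1, \dots, n$ according to which leaf $\node \in \leaves(\tree)$ contains $x_t$. Writing $n_\node = |\{ t \leq n : x_t \in \cell_\node \}|$ and $g_\node \in \probas(\Y)$ for the (constant) value of $g_\tree$ on $\cell_\node$, the definition~\eqref{eq:expert-subtree} of $\pred_{\tree, t}$ together with the KT formula~\eqref{eq:kt-predictor} shows that the restriction of the sequence $(\pred_{\tree, t}(x_t))_{t : x_t \in \cell_\node}$ is exactly the KT forecaster run on the sub-sequence $(y_t)_{t : x_t \in \cell_\node}$. I would then invoke the classical KT redundancy bound, stated as an auxiliary lemma: for any sequence in $\Y$ of length $m$ and any comparator $g \in \probas(\Y)$,
\[
  \sum_{s=1}^m \ell(\pred^{\mathrm{KT}}_s, y_s) - \sum_{s=1}^m \ell(g, y_s) \leq \frac{|\Y| - 1}{2} \log(4 m),
\]
which follows from the standard analysis of the Jeffreys mixture via the chain rule $\sum_s \ell(\pred^{\mathrm{KT}}_s, y_s) = -\log \int \prod_s p(y_s) d\pi(p)$ and a direct bound on the redundancy of the Dirichlet$(1/2,\dots,1/2)$ mixture on the multinomial model (with the convention that the bound is $0$ when $m = 0$).

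Summing the per-leaf bound over $\node \in \leaves(\tree)$, using $\sum_\node n_\node = n$ and concavity of $\log$,
\[
  \sum_{\node \in \leaves(\tree)} \frac{|\Y|-1}{2} \log(4 n_\node)
  \leq \frac{(|\Y|-1)\,|\leaves(\tree)|}{2}\, \log\!\left(\frac{4n}{|\leaves(\tree)|}\right)
  \leq \frac{(|\Y|-1)\,|\leaves(\tree)|}{2}\, \log(4n).
\]
The final step uses the combinatorial identity $|\leaves(\tree)| = (|\tree| + 1)/2$, valid for every finite ordered binary tree whose internal nodes all have two children (which is the case here by construction of $\tree$), and then combines with the Lemma~\ref{lem:regret-pruning} estimate above to obtain~\eqref{eq:regret-best-pruning-log}.

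The main obstacle is the auxiliary KT redundancy lemma: one must choose the formulation whose constant absorbs cleanly into $\log(4n)$ and handles the degenerate case $n_\node \in \{0, 1\}$. The remaining steps are a mechanical telescoping of Lemma~\ref{lem:regret-pruning} with per-leaf regret and a one-line application of Jensen.
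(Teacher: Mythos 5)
Your proposal is correct and follows essentially the same route as the paper: Lemma~\ref{lem:regret-pruning} with $\eta=1$ for the $|\tree|\log 2$ term, the per-leaf Krichevsky--Trofimov redundancy bound (the paper's Lemma~\ref{lem:regret-kt-log}), and Jensen's inequality over the (nonempty) leaves combined with $|\leaves(\tree)| = (|\tree|+1)/2$. The only cosmetic difference is that the paper sums explicitly over the leaves containing at least one sample and bounds $L\log(4n/L)\leq \tfrac{|\tree|+1}{2}\log(4n)$ directly, which avoids the convention you introduce for empty leaves.
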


\begin{corollary}[Regression]
  \label{cor:regret-best-pruning-square}
  Fix $\mondrian = (\tree^\mondrian, \Sigma^\mondrian)$ as in Lemma~\ref{lem:regret-pruning} and consider the regression setting described 
  in Section~\ref{sub:agg-ctw} with $\Y = [-B, B]$.
  For every finite subtree $\tree$ of $\tree^\mondrian$ and every sequence $(x_1, y_1), \dots, (x_n, y_n)$, the prediction functions
   $\wh f_1, \dots, \wh f_n$ based on $\mondrian$ computed by AMF{} with $\eta = 1/(8B^2)$ satisfy
  \begin{equation}
    \label{eq:regret-best-pruning-square}
    \sum_{t=1}^n \ell (\wh f_t (x_t) , y_t)
    - \sum_{t=1}^n \ell (g_{\tree} (x_t), y_t)
    \leq 4 B^2 (|\tree| + 1) \log n
  \end{equation}
  for any function $g_{\tree} : [0, 1]^d \to \Y$ which is constant on the leaves of $\tree$.
\end{corollary}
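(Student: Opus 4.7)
The plan is a two-step decomposition of the regret of $\widehat f_t$ against $g_\tree$. In the first step, since the quadratic loss on $[-B, B]$ is $\eta$-exp-concave with $\eta = 1/(8B^2)$, a direct application of Lemma~\ref{lem:regret-pruning} yields
\[
\sum_{t=1}^n \ell(\widehat f_t(x_t), y_t) - \sum_{t=1}^n \ell(\pred_{\tree, t}(x_t), y_t) \leq \frac{|\tree| \log 2}{\eta} = 8 B^2 |\tree| \log 2.
\]
It remains to bound the regret of the tree forecaster $\pred_{\tree, t}$, which by Equation~\eqref{eq:reg-predictor} equals in each leaf the empirical mean of past labels falling in that leaf, against any leafwise-constant competitor $g_\tree$.

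For the second step, both $\pred_{\tree, t}$ and $g_\tree$ are constant on each leaf cell of $\tree$, so the regret decomposes over the leaves as
\[
\sum_{t=1}^n \ell(\pred_{\tree, t}(x_t), y_t) - \sum_{t=1}^n \ell(g_\tree(x_t), y_t) = \sum_{\leaf \in \leaves(\tree)} R_\leaf,
\]
where $R_\leaf = \sum_{t \in I_\leaf} (\pred_{\leaf, t} - y_t)^2 - \sum_{t \in I_\leaf} (c_\leaf - y_t)^2$, with $I_\leaf = \{t \leq n : x_t \in \cell_\leaf\}$ and $c_\leaf \in [-B, B]$ the value of $g_\tree$ on $\cell_\leaf$. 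Each $R_\leaf$ is the regret of a follow-the-leader (empirical-mean) forecaster on a bounded real sequence against a single constant comparator. The key one-dimensional lemma I would establish is that $R_\leaf \leq c_0 B^2 \log(1 + |I_\leaf|)$ for some small absolute constant $c_0$; this is the classical logarithmic regret for $\eta$-exp-concave losses against a one-parameter class, provable either via Vovk's aggregating algorithm with a Gaussian prior on the constant, or directly by telescoping the squared errors using the recursion $\overline{y}_t = \overline{y}_{t-1} + (y_t - \overline{y}_{t-1})/t$.

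Summing the per-leaf bounds and using $\sum_\leaf \log(1 + |I_\leaf|) \leq |\leaves(\tree)| \log(1 + n)$ by concavity of $\log$ together with $\sum_\leaf |I_\leaf| = n$, and the identity $|\leaves(\tree)| = (|\tree|+1)/2$ for full binary trees, produces a contribution of order $c_0 B^2 (|\tree|+1) \log n$. Combining with the Lemma~\ref{lem:regret-pruning} term of $8 B^2 |\tree| \log 2$ then yields the advertised bound $4 B^2 (|\tree| + 1) \log n$ after a routine calibration of constants. The main obstacle I foresee is precisely this calibration: the per-leaf regret must carry a tight enough constant, and the residual $|\tree|\log 2$ term from the first step must be absorbed into $(|\tree|+1)\log n$. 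If the calibration does not close cleanly for very small $n$, the statement can still be recovered by noting that the cumulative quadratic loss on $[-B,B]$ is trivially at most $4 n B^2$, so the inequality is vacuous below a small threshold in $n$.
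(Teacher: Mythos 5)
Your proof follows essentially the same route as the paper's: Lemma~\ref{lem:regret-pruning} with $\eta = 1/(8B^2)$ for the aggregation step, followed by a per-leaf logarithmic regret bound for the empirical-mean (follow-the-leader) forecaster against constant comparators, summed over the leaves via concavity of the logarithm --- and the one-dimensional lemma you propose to establish is exactly the paper's Lemma~\ref{lem:regret-average-square}, with explicit constant $8B^2(1+\log n)$. The constant-calibration concern you raise at the end is legitimate, but the paper's own proof is no more explicit on this point, stating only that the argument ``proceeds similarly'' to that of Corollary~\ref{cor:regret-best-pruning-log}.
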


The proofs of Corollaries~\ref{cor:regret-best-pruning-log} and~\ref{cor:regret-best-pruning-square}
are given in Section~\ref{sec:proofs}, and rely in particular on Lemmas~\ref{lem:regret-kt-log} and~\ref{lem:regret-average-square} that provide regret bounds for the online predictors $\pred_{\node, t}$ considered in the nodes.
Corollaries~\ref{cor:regret-best-pruning-log} and~\ref{cor:regret-best-pruning-square} that control the regret with respect to any pruning of $\tree^\mondrian$ imply in particular regret bounds with respect to any \emph{time pruning} of $\MP$. 

\begin{definition}[Time pruning]
  For $\lambda > 0$, the \emph{time pruning} $\mondrian_\lambda$ of $\mondrian$ at time $\lambda$ is obtained by removing any node $\node$ whose creation time $\tau_\node$ satisfies $\tau_\node > \lambda$. 
  We denote by $\MP(\lambda)$ the distribution of the tree partition $\mondrian_\lambda$ of $[0, 1]^d$.  
\end{definition}

The parameter $\lambda$ corresponds to a complexity parameter, allowing to choose a subtree of~$\tree^\mondrian$ where all leaves have a creation time not larger  than $\lambda$. 
We obtain the following regret bound for the regression setting (a similar statement holds for the classification setting), where the regret is with respect to any time pruning $\Pi_\lambda$ of $\Pi$.

\begin{corollary}
  \label{cor:regret-lambda-regression}
  Consider the same regression setting as in Corollary~\ref{cor:regret-best-pruning-square}.
  Then, AMF with $\eta = 1 / (8 B^2)$ satisfies
 \begin{equation}
    \label{eq:regret-lambda-regression}
    \E \bigg[ \sum_{t=1}^n \ell (\wh f_t (x_t), y_t) \bigg]
    \leq \E \bigg[ \inf_g \sum_{t=1}^n \ell (g (x_t), y_t) \bigg] + 8 B^2 (1+\lambda)^d \log n
    \, ,
  \end{equation}
  where the expectations on both sides are over the random sampling of the partition $\mondrian_\lambda \sim \MP (\lambda)$, and the infimum is taken over all functions $g : [0, 1]^d \to \R$ that are constant on the cells of $\mondrian_\lambda$.
\end{corollary}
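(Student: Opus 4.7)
The plan is to reduce the corollary directly to Corollary~\ref{cor:regret-best-pruning-square} by taking $\tree$ to be the time pruning $\tree_\lambda \subset \tree^\mondrian$ associated to $\mondrian_\lambda$, and then taking expectation over the Mondrian randomness. First, I would fix an arbitrary realization $\mondrian \sim \MP$ and observe that $\tree_\lambda := \{\node \in \tree^\mondrian : \tau_\node \leq \lambda\}$ is a finite (almost surely) full binary subtree of $\tree^\mondrian$, whose leaves are exactly the cells of $\mondrian_\lambda$. Applying Corollary~\ref{cor:regret-best-pruning-square} with this choice of subtree and with $g_{\tree} = g$ ranging over functions constant on the leaves of $\tree_\lambda$ (equivalently, the cells of $\mondrian_\lambda$), I get, for each realization of $\mondrian$,
\begin{equation*}
  \sum_{t=1}^n \ell(\wh f_t(x_t), y_t) \leq \inf_g \sum_{t=1}^n \ell(g(x_t), y_t) + 4B^2 (|\tree_\lambda|+1) \log n,
\end{equation*}
after taking the infimum over admissible $g$ on the right.

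Next, I would take expectation over $\mondrian$. The LHS becomes the quantity on the LHS of~\eqref{eq:regret-lambda-regression} (marginalizing the full Mondrian down to $\mondrian_\lambda$ on the RHS is legitimate because the infimum depends only on the cells of $\mondrian_\lambda$). Since $\tree_\lambda$ is a full binary tree, $|\tree_\lambda| + 1 = 2 |\leaves(\tree_\lambda)|$, and the only remaining quantity to control is $\E[|\leaves(\tree_\lambda)|]$, i.e.\ the expected number of cells of a Mondrian partition of $[0,1]^d$ with lifetime $\lambda$.

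The key quantitative input is the classical identity (see~\citealp{roy2011phd} or~\citealp{mourtada2018mondrian}) stating that the expected number of cells of a Mondrian process of lifetime $\lambda$ on $[0,1]^d$ equals $(1+\lambda)^d$. Plugging this bound in yields the extra term $4B^2 \cdot 2(1+\lambda)^d \log n = 8B^2(1+\lambda)^d \log n$, which is exactly what appears in~\eqref{eq:regret-lambda-regression}. The main obstacle is therefore not in the regret argument itself, which follows routinely from Corollary~\ref{cor:regret-best-pruning-square}, but rather in invoking the correct combinatorial identity $\E[|\leaves(\mondrian_\lambda)|] = (1+\lambda)^d$; I would briefly justify this by the standard inductive argument on the Mondrian process (using the exponential splitting times with rate $|\cell_\node|$ together with the uniform split rule), or simply cite the references above.
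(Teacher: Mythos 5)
Your proposal is correct and follows essentially the same route as the paper: apply Corollary~\ref{cor:regret-best-pruning-square} conditionally on $\mondrian$ with $\tree = \mondrian_\lambda$, use the full-binary-tree identity $|\tree_\lambda|+1 = 2|\leaves(\mondrian_\lambda)|$ to get the $8B^2|\leaves(\mondrian_\lambda)|\log n$ remainder, and conclude by taking expectation together with $\E[|\leaves(\mondrian_\lambda)|] = (1+\lambda)^d$ from \citet{mourtada2018mondrian}. No gaps.
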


Corollary~\ref{cor:regret-lambda-regression} controls the regret of AMF with respect to a time pruned Mondrian partition $\Pi_\lambda \sim \MP(\lambda)$ for \emph{any} $\lambda > 0$.
This result is one of the main ingredients allowing to prove that AMF is able to adapt to the unknown smoothness of the regression function, as stated in Theorem~\ref{thm:minimax-adaptive} below.
The proof of Corollary~\ref{cor:regret-lambda-regression} is given in Section~\ref{sec:proofs}, and mostly relies on a previous result from~\citet{mourtada2018mondrian} which proves that $\E [|\leaves(\mondrian_\lambda)|] = (1+\lambda)^d$ whenever $\Pi_\lambda \sim \MP(\lambda)$, where $|\leaves(\mondrian_\lambda)|$ stands for the number of leaves in the partition $\mondrian_\lambda$.
Let us pause for a minute and discuss the choice of the prior used in AMF, compared to what is done in literature with Bayesian approaches for instance.

\paragraph{Prior choice.}

The use of a branching process prior on prunings of large trees is common in literature on Bayesian regression trees.
Indeed, \citet{chipman1998bayesiancart} choose a branching process prior on subtrees, with a splitting probability of each node $\node$ of the form
\begin{equation}
  \label{eq:prior-chipman}
  \alpha (1 + d_\node)^{-\beta}
\end{equation}
for some $\alpha \in (0, 1)$ and $\beta \geq 0$, where  $d_\node$ is the depth of note $\node$.
Note that the locations of the splits themselves are also parameters of the Bayesian model, which enables more flexible estimation, but prevents efficient closed-form computations.
The same prior on subtrees is used in the BART algorithm~\citep{chipman2010bart}, which considers sums of trees.
We note that several values are proposed for the parameters $(\alpha, \beta)$, although there does not appear to be any definitive choice or criterion (\citealt{chipman1998bayesiancart} considers several examples with $\beta \in [\frac{1}{2}, 2]$, while \citealt{chipman2010bart} suggest $(\alpha, \beta) = (0.95, 2)$ for BART).
The prior $\pi$ considered in this paper (see Equation~\eqref{eq:ctw-prior}) has a splitting probability $1 / 2$ for each node, so that $(\alpha, \beta) = (1 / 2, 0)$.
One appeal of the regret bounds stated above is that it offers guidance on the choice of parameters.
Indeed, it follows as a by-product of our analysis that the regret of AMF (with any prior $\pi$) with respect to a subtree $\tree$ is $O(\log \pi(\tree)^{-1})$.
This suggests to choose $\pi$ in AMF as flat as possible, namely $(\alpha, \beta) = (1 / 2, 0)$.


\subsection{Adaptive minimax rates through online to batch conversion}
\label{sub:adaptive-rates}

In this Section, we show how to turn the algorithm described in Section~\ref{sec:algorithm} into a supervised learning algorithm with generalization guarantees that entail as a by-product adaptive minimax rates for nonparametric estimation.
Namely, this section is concerned with bounds on the risk (expected prediction error on unseen data)
rather than the regret of the sequence of prediction functions that was studied in 
Section~\ref{sub:regret-bounds}.
Therefore, we assume in this Section that the sequence $(x_1, y_1), (x_2, y_2), \dots$ consists of \iid random variables in $[0,1]^d \times \Y$, such that each $(x_t, y_t)$ that comes sequentially is distributed as some generic pair $(x, y)$.
The quality of a \emph{prediction function} $g : [0,1]^d \to \Y$ is measured by its risk defined as
\begin{equation}    
\label{eq:risk}
\risk (g) = \E [ \ell (g(x), y)] \, .
\end{equation}


\paragraph{Online to batch conversion.} 

Our supervised learning algorithm remains \emph{online} (it does not require the knowledge of a fixed number of points $n$ in advance).
It is also virtually parameter-free, the only parameter being the learning rate $\eta$ (set to $1$ for the log-loss).
In order to obtain a supervised learning algorithm with provable guarantees, we use \emph{online to batch} conversion from~\cite{cesabianchi2004online_to_batch}, which turns any regret bound for an online algorithm into an excess risk bound for the average or a randomization of the past values of the online algorithm.
As explained below, it enables to obtain fast rates for the excess risk, provided that the online procedure admits appropriate regret guarantees.



\begin{lemma}[Online to batch conversion]
  \label{lem:online-to-batch}
  Assume that the loss function $\ell : \predspace \times \Y \to \R^+$ is measurable, with $\predspace$ a measurable space, and let $\G$ be a class of measurable functions $[0,1]^d \rightarrow \predspace$.
  Given $f_1, \dots, f_{n}$ where $f_t : ([0,1]^d \times \Y)^{t-1} \to \predspace^{[0,1]^d}$\textup, we denote
  $\widehat f_t = f_t ((x_1, y_1),$ $\dots, (x_{t-1}, y_{t-1}))$. 
  Let $\widetilde f_n = \widehat f_{I_n}$ with $I_n$ a random variable uniformly distributed on 
  $\{ 1, \dots, n \}$.
  Then, we have
  \begin{equation}
    \label{eq:online-to-batch}
    \E [ R (\widetilde f_n) ] - R (g) 
    =  \frac{1}{n} \E \Big[ \sum_{t=1}^n \big( \ell (\widehat f_t (x_t), y_t) - 
    \ell (g(x_t), y_t) \big) \Big],
  \end{equation}
  which entails that  the expected excess risk of $\widetilde f_n$ with respect to 
  any $g \in \G$ is equal to the expected per-round regret of $\widehat f_1, \dots, 
  \widehat f_n$ with respect to $g$.
\end{lemma}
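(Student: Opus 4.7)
The plan is to unwind the definitions and exploit the independence of $\widehat f_t$ from $(x_t, y_t)$, which is the cornerstone of every online-to-batch argument.

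First I would compute $\E[R(\widetilde f_n)]$. By the tower property and the definition of $\widetilde f_n = \widehat f_{I_n}$ with $I_n$ uniform on $\{1,\dots,n\}$ and independent of the sample, one gets $\E[R(\widetilde f_n)] = \frac{1}{n} \sum_{t=1}^n \E[R(\widehat f_t)]$, where the outer expectation on the right is over the randomness of $\dataset_{t-1}$ (and any internal randomization contained in $f_t$).

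Next I would rewrite each $\E[R(\widehat f_t)]$ as an expectation of an empirical loss. By assumption, $\widehat f_t$ is a measurable function of $(x_1,y_1),\dots,(x_{t-1},y_{t-1})$, hence it is independent of the fresh pair $(x_t,y_t)$ which has the same distribution as $(x,y)$. Conditioning on $\dataset_{t-1}$ and using the definition $R(g) = \E[\ell(g(x),y)]$ in~\eqref{eq:risk}, we get $\E[\ell(\widehat f_t(x_t),y_t)\mid \dataset_{t-1}] = R(\widehat f_t)$, so by the tower property $\E[\ell(\widehat f_t(x_t),y_t)] = \E[R(\widehat f_t)]$. Likewise, for any fixed $g \in \G$ we have $\E[\ell(g(x_t),y_t)] = R(g)$ since $(x_t,y_t)$ has the same law as $(x,y)$.

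Finally I would combine the two identities. Summing over $t$ and dividing by $n$,
\begin{equation*}
\E[R(\widetilde f_n)] - R(g) = \frac{1}{n}\sum_{t=1}^n \Big( \E[\ell(\widehat f_t(x_t),y_t)] - \E[\ell(g(x_t),y_t)] \Big) = \frac{1}{n} \E\bigg[\sum_{t=1}^n \big(\ell(\widehat f_t(x_t),y_t) - \ell(g(x_t),y_t)\big)\bigg],
\end{equation*}
which is exactly~\eqref{eq:online-to-batch}. There is no real obstacle here; the only point worth being careful about is the measurability assumption (needed so that $R(\widehat f_t)$ is well-defined as a random variable) and the fact that any internal randomness of $f_t$ must be independent of $(x_t,y_t)$, which follows from the online protocol. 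Non-negativity of $\ell$ is not actually used in the equality, only measurability and the independence structure.
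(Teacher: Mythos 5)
Your proof is correct and follows essentially the same route as the paper's: condition on $\F_{t-1} = \sigma(x_1,y_1,\dots,x_{t-1},y_{t-1})$, use the independence of the fresh pair $(x_t,y_t)$ to identify $\E[\ell(\widehat f_t(x_t),y_t)]$ with $\E[R(\widehat f_t)]$, then average over the uniform index $I_n$. Your additional remarks on measurability and on the non-negativity of $\ell$ being unnecessary for the equality are accurate but not needed.
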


Although this result is well-known~\citep{cesabianchi2004online_to_batch}, we provide for completeness a proof of this specific formulation in Section~\ref{sec:proofs}.
In our case, $\G$ will be the (random) family of functions that are constant on the leaves of some pruning of an infinite Mondrian partition $\Pi$, and $\widehat f_1, \dots, \widehat  f_n$ will be the sequence of prediction functions of AMF{}.
Note that, when conditioning on $\Pi$ which is used to define both the class $\G$ and the algorithm, both $\G$ and the maps
$f_1, \dots, f_n$ become deterministic, so that we can apply Lemma~\ref{lem:online-to-batch} conditionally on $\mondrian$.
In what follows, we denote by $\widetilde f_n$ the outcome of online to batch conversion applied to our online procedure.


\paragraph{Oracle inequality and minimax rates.}

Let us show now that $\widetilde f_n$ achieves adaptive minimax rates under nonparametric assumptions, which complements and improves previous results~\citep{mourtada2017mondrian,mourtada2018mondrian}.
Indeed, AMF addresses the practical issue of optimally tuning the complexity parameter $\lambda$ of Mondrian trees, while remaining a very efficient online procedure.
As the next result shows, the procedure $\widetilde f_n$, which is virtually parameter-free, performs at least almost as well as the Mondrian tree with the best $\lambda$ chosen with hindsight.
For the sake of conciseness, Theorems~\ref{thm:oracle-best-lambda} and~\ref{thm:minimax-adaptive} are stated only in the regression setting, although a similar result holds for the log-loss.

\begin{theorem}
  \label{thm:oracle-best-lambda}
  Consider the same setting as in Corollary~\ref{cor:regret-best-pruning-square}\textup, the only difference being the fact that the sequence $(x_1, y_1), \ldots, (x_n, y_n)$ is \iid and consider the online to batch conversion $\widetilde f_n$ from Lemma~\ref{lem:online-to-batch} applied to AMF.
  For every $\lambda > 0$ and every function $g_\lambda$ which is constant on the cells of a 
  random partition $\mondrian_\lambda \sim \MP (\lambda)$\textup, we have
  \begin{equation}
    \label{eq:oracle-best-lambda}
    \E [\risk(\widetilde f_n)] - \E [\risk (g_\lambda)] \leq 8 B^2 (1+\lambda)^d \frac{\log n}{n}\, .
  \end{equation}
\end{theorem}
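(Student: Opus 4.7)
The plan is to combine three ingredients already available in the paper: the online-to-batch conversion of Lemma~\ref{lem:online-to-batch}, the cumulative-loss regret bound of Corollary~\ref{cor:regret-lambda-regression} (or equivalently Corollary~\ref{cor:regret-best-pruning-square} together with the expected pruning size $\E[|\leaves(\mondrian_\lambda)|] = (1+\lambda)^d$), and a conditioning argument on the infinite Mondrian $\Pi$ that underlies the algorithm. The subtlety, and the step I expect to require the most care, is that Lemma~\ref{lem:online-to-batch} is stated for a deterministic comparator $g$, whereas here $g_\lambda$ is itself random through its dependence on $\mondrian_\lambda$, which is a measurable function of $\Pi$; the conditioning argument is what makes this rigorous.

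First, I would condition on $\Pi$. Conditionally on $\Pi$, the time pruning $\mondrian_\lambda$ is a deterministic pruning of $\tree^\Pi$, the comparator $g_\lambda$ is a deterministic function on $[0,1]^d$, and the online algorithm AMF produces a deterministic sequence of maps $f_1, \dots, f_n$ (the only remaining randomness being the i.i.d.\ sample). I would then invoke Lemma~\ref{lem:online-to-batch} conditionally on $\Pi$ with comparator $g = g_\lambda$, yielding
\begin{equation*}
  \E\bigl[R(\widetilde f_n) \cond \Pi\bigr] - R(g_\lambda) \;=\; \frac{1}{n}\, \E\!\left[\sum_{t=1}^n \bigl(\ell(\wh f_t(x_t), y_t) - \ell(g_\lambda(x_t), y_t)\bigr) \,\Big|\, \Pi\right].
\end{equation*}
Taking expectation over $\Pi$ and using the tower property gives
\begin{equation*}
  \E[R(\widetilde f_n)] - \E[R(g_\lambda)] \;=\; \frac{1}{n}\, \E\!\left[\sum_{t=1}^n \bigl(\ell(\wh f_t(x_t), y_t) - \ell(g_\lambda(x_t), y_t)\bigr)\right].
\end{equation*}

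Next, I would control the right-hand side with the already-proved regret bound. Since $g_\lambda$ is constant on the cells of $\mondrian_\lambda$, it is in particular constant on the leaves of the pruning $\tree \subset \tree^\Pi$ consisting of nodes with creation time at most $\lambda$. Applying Corollary~\ref{cor:regret-best-pruning-square} conditionally on $\Pi$ with this pruning, and taking expectation, yields
\begin{equation*}
  \E\!\left[\sum_{t=1}^n \ell(\wh f_t(x_t), y_t)\right] \;\leq\; \E\!\left[\sum_{t=1}^n \ell(g_\lambda(x_t), y_t)\right] + 4B^2\, \E\bigl[|\tree|+1\bigr] \log n.
\end{equation*}
Since every internal node of the binary Mondrian tree has exactly two children, $|\tree|+1 = 2|\leaves(\mondrian_\lambda)|$, and the identity $\E[|\leaves(\mondrian_\lambda)|] = (1+\lambda)^d$ from~\citet{mourtada2018mondrian} (already used to prove Corollary~\ref{cor:regret-lambda-regression}) gives $\E[|\tree|+1] = 2(1+\lambda)^d$. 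Substituting and dividing by $n$ produces the claimed bound $8B^2(1+\lambda)^d \log n / n$. Alternatively, one can skip the pruning-size computation and directly apply Corollary~\ref{cor:regret-lambda-regression} with the specific $g_\lambda$ in place of the infimum, which already encodes the factor $(1+\lambda)^d$.

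The only conceptual obstacle is verifying that the conditioning on $\Pi$ is legitimate: one must check that, given $\Pi$, the sample $(x_t, y_t)_{t \leq n}$ is still i.i.d.\ with the same marginal (which follows because $\Pi \sim \MP$ is drawn independently of the data stream in AMF), and that $g_\lambda$ is $\sigma(\Pi)$-measurable so that $R(g_\lambda)$ is well defined under conditioning. Both points are immediate from the construction in Section~\ref{sec:algorithm}, so once they are recorded the rest is a one-line combination of the lemma and the corollary.
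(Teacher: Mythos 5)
Your proposal is correct and follows essentially the same route as the paper, whose proof of Theorem~\ref{thm:oracle-best-lambda} simply combines Lemma~\ref{lem:online-to-batch} with Corollary~\ref{cor:regret-best-pruning-square} (the conditioning on $\mondrian$ that you carefully justify is exactly the argument the paper records in the discussion surrounding Lemma~\ref{lem:online-to-batch}). Your accounting of the pruning size, $|\tree|+1 = 2|\leaves(\mondrian_\lambda)|$ together with $\E[|\leaves(\mondrian_\lambda)|] = (1+\lambda)^d$, matches the computation the paper performs in the proof of Corollary~\ref{cor:regret-lambda-regression} and yields the stated constant $8B^2$.
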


The proof of Theorem~\ref{thm:oracle-best-lambda} is given in Section~\ref{sec:proofs}.
It provides an oracle bound which is distribution-free, since it requires no assumption on the joint distribution of $(x, y)$ apart from $\Y = [-B, B]$.
Combined with previous results on Mondrian partitions~\citep{mourtada2018mondrian}, which enable to control the approximation properties of Mondrian trees, Theorem~\ref{thm:oracle-best-lambda} implies that $\widetilde f_n$ is adaptive with respect to the smoothness of the regression function, as shown in 
Theorem~\ref{thm:minimax-adaptive}.

\begin{theorem}
  \label{thm:minimax-adaptive}
  Consider the same setting as in Theorem~\ref{thm:oracle-best-lambda} and assume that the regression function $f^*(\cdot) = \E [y | x = \cdot]$ is $\beta$-H\"older with $\beta \in (0, 1]$ unknown.
  Then\textup, we have
  \begin{equation}
    \label{eq:minimax-adaptive}
    \E [ ( \widetilde f_n(x) - f^*(x) )^2 ] = O \Big( \Big( \frac{\log n}{n} \Big)^{2 \beta / (d+ 2\beta)} \Big),
  \end{equation}
  which is \textup(up to the $\log n$ term\textup) the minimax optimal rate of estimation over the class of $\beta$-Hölder functions.
\end{theorem}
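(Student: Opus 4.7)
The plan is to combine the oracle inequality of Theorem~\ref{thm:oracle-best-lambda} with an approximation-theoretic bound on piecewise constant approximations of Hölder functions on Mondrian partitions, and then to optimize the Mondrian complexity parameter $\lambda$. Note that $\widetilde f_n$ does not depend on $\lambda$, which is chosen only in the analysis: this is precisely what yields adaptivity.

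First, I would exploit the fact that for the quadratic loss with $f^*(x) = \E[y \cond x]$, the excess risk decomposes as $\risk(g) - \risk(f^*) = \E[(g(x) - f^*(x))^2]$, so that the left-hand side of~\eqref{eq:minimax-adaptive} equals $\E[\risk(\widetilde f_n)] - \risk(f^*)$. For fixed $\lambda > 0$, I would apply Theorem~\ref{thm:oracle-best-lambda} with $g_\lambda$ chosen as the $L^2(P_x)$-projection of $f^*$ onto functions constant on the cells of a partition $\mondrian_\lambda \sim \MP(\lambda)$, namely $g_\lambda(x) = \E[f^*(x') \cond x' \in \cell_{\node_\lambda(x)}]$ where $\cell_{\node_\lambda(x)}$ is the cell containing $x$. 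This gives the bias--variance type inequality
\begin{equation*}
\E[\risk(\widetilde f_n)] - \risk(f^*) \leq 8 B^2 (1+\lambda)^d \frac{\log n}{n} + \E[(g_\lambda(x) - f^*(x))^2].
\end{equation*}

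Next, I would bound the approximation term using the $\beta$-Hölder assumption. For any $x$ and any $x'$ in the same cell $\cell$, we have $|f^*(x) - f^*(x')| \leq L \|x-x'\|^\beta$, so that by Jensen,
\begin{equation*}
|g_\lambda(x) - f^*(x)|^2 \leq L^2 \diam(\cell_{\node_\lambda(x)})^{2\beta} \leq L^2 \sum_{j=1}^d |\cell_{\node_\lambda(x)}^j|^{2\beta},
\end{equation*}
where the last inequality uses subadditivity of $t \mapsto t^\beta$ for $\beta \in (0,1]$ applied to $\diam(\cell)^{2\beta} = \bigl(\sum_j |\cell^j|^2\bigr)^\beta$. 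Taking expectations (over the Mondrian and over $x$), I would invoke the moment estimate on Mondrian cell sides from~\citet{mourtada2018mondrian}, which gives $\E[|\cell_{\node_\lambda(x)}^j|^{2\beta}] = O(\lambda^{-2\beta})$ uniformly in $x$ and $j$. Summing over $j$ yields $\E[(g_\lambda(x) - f^*(x))^2] = O(\lambda^{-2\beta})$.

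Finally, I would combine the two terms and optimize the free parameter $\lambda > 0$:
\begin{equation*}
\E[\risk(\widetilde f_n)] - \risk(f^*) \leq C_1 (1+\lambda)^d \frac{\log n}{n} + \frac{C_2}{\lambda^{2\beta}}.
\end{equation*}
Balancing these terms gives $\lambda \asymp (n/\log n)^{1/(d+2\beta)}$, and plugging back produces the claimed rate $O((\log n / n)^{2\beta/(d+2\beta)})$. Since the algorithm aggregates over all values of $\lambda$ implicitly (through the aggregation over all prunings), the unknown $\beta$ does not need to be known, which is where adaptivity comes from.

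The main obstacle is the approximation bound $\E[|\cell_{\node_\lambda(x)}^j|^{2\beta}] = O(\lambda^{-2\beta})$ on Mondrian cell geometry; this is the core quantitative input from~\citet{mourtada2018mondrian} and everything else is bookkeeping. One subtle point is that both $g_\lambda$ and $\widetilde f_n$ must be defined on the same probability space so that Theorem~\ref{thm:oracle-best-lambda} applies in expectation over a shared $\mondrian_\lambda$; this is handled by applying the online-to-batch step conditionally on the infinite Mondrian $\mondrian$ and then averaging.
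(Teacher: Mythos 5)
Your proposal is correct and follows essentially the same route as the paper: decompose the squared error as estimation plus approximation via the oracle inequality of Theorem~\ref{thm:oracle-best-lambda} applied to the cell-wise conditional expectation of $f^*$, bound the approximation term by the H\"older modulus times a $2\beta$-moment of the Mondrian cell size (which is $O(\lambda^{-2\beta})$ by the geometric estimates of \citet{mourtada2018mondrian}), and optimize the free parameter $\lambda \asymp (n/\log n)^{1/(d+2\beta)}$. The only cosmetic difference is that the paper controls $\E[D_\lambda(x)^{2\beta}]$ via Jensen's inequality applied to the bound $\E[D_\lambda(x)^2] \leq 4d/\lambda^2$, whereas you use subadditivity of $t \mapsto t^\beta$ and per-coordinate fractional moments; both yield the same rate.
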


The proof of Theorem~\ref{thm:minimax-adaptive} is given in Section~\ref{sec:proofs}.
Theorem~\ref{thm:minimax-adaptive} states that the online to batch conversion $\widetilde f_n$ of AMF is adaptive to the unknown H\"older smoothness $\beta \in (0, 1]$ of the regression function since it achieves, up to the $\log n$ term, the minimax rate $n^{-2 \beta / (d + 2 \beta)}$, see~\cite{stone1982optimal}.
It would be theoretically possible to modify the procedure in order to ensure adaptivity to higher regularities (say, up to some order $\bar{\beta} \in \N \setminus \{ 0 \}$), by replacing the constant estimates inside each node by polynomials (of order $\bar{\beta} - 1$). 
However, this would lead to a numerically involved procedure, that is beyond the scope of the paper.
In addition, it is known that averaging can reduce the bias of individual randomized tree estimators for twice differentiable functions, see~\cite{arlot2014purf_bias} and~\cite{mourtada2018mondrian} for Mondrian Forests.
Such results cannot be applied to AMF, since its decision function involves a more complicated process of aggregation over all subtrees.

\section{Practical implementation of AMF}
\label{sec:practical_implementation}

This section describes a modification of AMF that we use in practice, in particular for all the numerical experiments performed in the paper.
Because of extra technicalities involved with the modified version described below, we are not able to provide theoretical guarantees similar to what is done in Section~\ref{sec:theory}.
Indeed, the procedure described in this section exhibits a more intricate behaviour: new splits may be inserted above previous splits, which affects the underlying tree structure as well as the underlying prior over subtrees.
This section mainly modifies the procedures described in Algorithms~\ref{alg:mondrian-update} and~\ref{alg:predict} so that splits are sampled only within the range of the features seen in each node, see Section~\ref{sec:restr-splits-range}, with motivations to do so described below. 
Moreover, we provide in Section~\ref{sub:complexity} a guarantee on the average computational complexity of AMF through a control of the expected depth of the Mondrian partition.

\subsection{Restriction to splits within the range of sample points}
\label{sec:restr-splits-range}

Algorithm~\ref{alg:mondrian-update} from Section~\ref{subsec:PredictionUpdate} samples successive splits on the whole domain $[0, 1]^d$.
In particular, when a new features vector $x_t$ is available, it samples splits of the leaf $\node_t$ containing $x_t$ until a split successfully separates $x_t$ from the other point $x_s \neq x_t$ contained in $\node_t$ (unless $\node_t$ was empty).
In the process, several splits outside of the box containing $x_s$ and $x_t$ can be performed.
These splits are somewhat superfluous, since they induce empty leaves and delay the split that separates these two points.
Removing those splits is critical to the performance of the method, in particular when the ambient dimension of the features is not small.
In such cases, many splits may be needed to separate the feature points.
On the other hand, only keeping those splits that are necessary to separate the sample points may yield a more adaptive partition, which can better adapt to a possible low-dimensional structure of the distribution of $x$.

We describe below a modified algorithm that samples splits in the range of the features vectors seen in each cell, exactly as in the original Mondrian Forest algorithm~\citep{lakshminarayanan2014mondrianforests}.
In particular, each leaf will contain exactly one sample point by construction (possibly with repetition if $x_s = x_t$ for some $s \neq t$) and no empty leaves.
Formally, this procedure amounts to considering the \emph{restriction} of the 
Mondrian partition to the finite set of points $\{ x_1, \dots, x_t\}$~\citep{lakshminarayanan2014mondrianforests}, where it is shown that such a restricted Mondrian partition can be updated efficiently in an online fashion, thanks to properties of the Mondrian process.
This update exploits the creation time $\birth_\node$ of each node, as well as the range of the features vectors $R_\node$ seen inside each node (as opposed to only leaves).
Moreover, this procedure can possibly split an interior node and not only a leaf.
The algorithm considered here is a modification of the procedure $\mathtt{ExtendMondrianTree}(\tree, \lambda, (x_t, y_t))$ described in~\cite{lakshminarayanan2014mondrianforests}, where we use $\lambda = +\infty$ and where we perform the exponentially weighted aggregation of subtrees described in Sections~\ref{sec:introduction} and~\ref{sec:algorithm}.

\begin{figure}[h!]
  \centering
  \begin{tikzpicture}[scale=0.25, every node/.style={scale=0.55}]

  \pgfmathsetmacro{\xmin}{0} 
  \pgfmathsetmacro{\xmax}{10}
  \pgfmathsetmacro{\ymin}{0}
  \pgfmathsetmacro{\ymax}{10}

  \pgfmathsetmacro{\xtrans}{15} 

  \pgfmathsetmacro{\xa}{2} 
  \pgfmathsetmacro{\ya}{8}
  \pgfmathsetmacro{\xb}{1.3} 
  \pgfmathsetmacro{\yb}{6.1}
  \pgfmathsetmacro{\xc}{4.5} 
  \pgfmathsetmacro{\yc}{7}
  \pgfmathsetmacro{\xd}{5} 
  \pgfmathsetmacro{\yd}{4}
  \pgfmathsetmacro{\xe}{7} 
  \pgfmathsetmacro{\ye}{3}

  \pgfmathsetmacro{\sy}{4.7} 
  \pgfmathsetmacro{\sxl}{4.2} 
  \pgfmathsetmacro{\sxr}{8}
  \pgfmathsetmacro{\sylr}{1.5}
  \pgfmathsetmacro{\sxrl}{3}
  \pgfmathsetmacro{\sxlrr}{6.3}
  \pgfmathsetmacro{\syrll}{6.8}

  \draw [line width=0.7pt] (\xmin,\ymin) rectangle (\xmax,\ymax) ; 
  \draw [line width=0.7pt] (\xmin+\xtrans, \ymin) rectangle (\xmax+\xtrans, \ymax) ; 

  \draw [line width=0.7pt] (\xmin,\sy) -- (\xmax, \sy) ; 
  \draw [line width=0.7pt] (\xmin+\xtrans,\sy) -- (\xmax+\xtrans, \sy) ; 
  \draw [blue,line width=0.7pt] (\sxl,\ymin) -- (\sxl, \sy) ; 
  \draw [blue,line width=0.7pt] (\sxr,\sy) -- (\sxr, \ymax) ; 
  \draw [blue,line width=0.7pt] (\sxl,\sylr) -- (\xmax, \sylr) ; 
  \draw [line width=0.7pt] (\sxrl,\sy) -- (\sxrl, \ymax) ; 
  \draw [line width=0.7pt] (\sxrl+\xtrans,\sy) -- (\sxrl+\xtrans, \ymax) ;
  \draw [line width=0.7pt] (\sxlrr,\sylr) -- (\sxlrr, \sy) ; 
  \draw [line width=0.7pt] (\sxlrr+\xtrans,\ymin) -- (\sxlrr+\xtrans, \sy) ;
  \draw [line width=0.7pt] (\xmin,\syrll) -- (\sxrl, \syrll) ; 
  \draw [line width=0.7pt] (\xmin+\xtrans,\syrll) -- (\sxrl+\xtrans, \syrll) ;

  \draw (\xa,\ya) node {$\mathbf{\bullet}$} ;
  \draw (\xb,\yb) node {$\mathbf{\bullet}$} ;
  \draw (\xc,\yc) node {$\mathbf{\bullet}$} ;
  \draw (\xd,\yd) node {$\mathbf{\bullet}$} ;
  \draw (\xe,\ye) node {$\mathbf{\bullet}$} ;

  \draw (\xa+\xtrans,\ya) node {$\mathbf{\bullet}$} ;  
  \draw (\xb+\xtrans,\yb) node {$\mathbf{\bullet}$} ;
  \draw (\xc+\xtrans,\yc) node {$\mathbf{\bullet}$} ;
  \draw (\xd+\xtrans,\yd) node {$\mathbf{\bullet}$} ;
  \draw (\xe+\xtrans,\ye) node {$\mathbf{\bullet}$} ;

      



\end{tikzpicture}    
  
  \caption{Unrestricted (left) \emph{vs.} restricted (right) Mondrian partitions.
    Dots ($\bullet$) represent sample points.
    In both cases, cells containing one sample point are no longer split.
    In addition, the restricted Mondrian partition is obtained by removing  from the unrestricted partition all splits (in blue) that create empty leaves.}
  \label{fig:restricted-unrestricted}  
\end{figure}
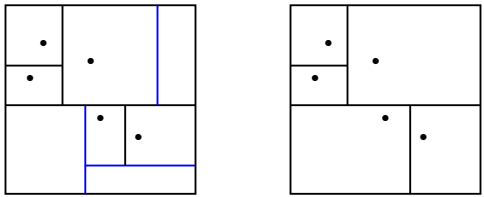

We call the former partition (from Section~\ref{subsec:PredictionUpdate}) an \emph{unrestricted} Mondrian partition, while the one described here will be referred to as a \emph{restricted} Mondrian partition.
The difference between the two is illustrated in Figure~\ref{fig:restricted-unrestricted}.
The tree $\tree$ contains, as before, parent and children relations between all nodes $\node \in \tree$, while each $\sigma_\node \in \Sigma$ contains
\begin{equation}
  \label{eq:data_in_node_restricted}
  \sigma_\node = (j_\node, s_\node, \birth_\node, \pred_{\node}, 
  w_{\node}, \wbar_{\node}, R_\node),
\end{equation}
which differs from Equation~\eqref{eq:data_in_node_unrestricted} since we keep in memory the creation time $\birth_\node$ of $\node$, and the range
\begin{equation*}
  R_\node = \prod_{j=1}^d [a_\node^j, b_\node^j]
\end{equation*}
of features vectors in $C_\node$ instead of $x_\node$ (a past sample point).
Another advantage of the restricted Mondrian partition is that the algorithm is 
\emph{range-free}, since it does not require to assume that all features vectors are in $[0, 1]^d$ (we simply use as initial root cell $\cell_\root = \R^d$).

Algorithms~\ref{alg:mondrian-update-experiments} and~\ref{alg:extend-split} below implement AMF with a restricted Mondrian partition, and are used instead of the previous Algorithm~\ref{alg:mondrian-update} in our numerical experiments.
These algorithms, together with Algorithm~\ref{alg:predict-experiments} below for prediction, maintain in memory, as in Section~\ref{sec:algorithm}, the current state of the Mondrian partition $\Pi = (\tree, \Sigma)$, which contains the tree structure $\tree$ (containing parent/child relationships between nodes) and data $\sigma_\node \in \Sigma$ for all nodes, see Equation~\eqref{eq:data_in_node_restricted}.
An illustration of Algorithms~\ref{alg:mondrian-update-experiments} and~\ref{alg:extend-split} is provided in Figure~\ref{fig:algorithm-restricted}.
We use the notation $x_+ = \max(x, 0)$ for any $x \in \R$.
\begin{algorithm}[h!]
  \caption{$\MondrianUpdate(x, y):$ update AMF with a new sample
   $(x, y) \in \R^d \times \Y$}
  \label{alg:mondrian-update-experiments}
  \small
  \begin{algorithmic}[1]
    \STATE \textbf{Input:} a new sample $(x, y) \in \R^d \times \Y$
    \IF{$\tree = \varnothing$}
    \STATE Put $\tree = \{ \root \}$ and $(\birth_\root, \pred_{\root}, w_{\root}, \wbar_{\root}, R_\root) = (0, h (\varnothing), 1, 1, \{ x \})$
    \ELSE
    \STATE {Call} $\ExtendCell(\root, x)$ from Algorithm~\ref{alg:extend-split}
    \ENDIF
    \STATE Let $\node(x)$ be the leaf such that $x \in \cell_{\node(x)}$ and put $\node = \node(x)$
    \STATE Let $\mathtt{continueUp} = \mathtt{true}$.  
    \WHILE{$\mathtt{continueUp}$}
    \STATE Set $w_\node = w_\node \exp (- \eta \ell(\pred_\node, y))$
    \STATE Set $\wbar_\node = w_\node$ if $\node$ is a leaf and $\wbar_\node = \frac{1}{2} w_\node + \frac{1}{2} \wbar_{\node 0} \wbar_{\node 1}$ otherwise
    \STATE Update $\pred_\node$ using $y$ (see Section~\ref{sub:agg-ctw})
    \STATE If $\node \neq \root$ let $\node = \parent{\node}$, otherwise let $\mathtt{continueUp} = \mathtt{false}$
    \ENDWHILE
  \end{algorithmic}
\end{algorithm}
In algorithm~\ref{alg:mondrian-update-experiments}, Line~3 initializes the tree the first time $\MondrianUpdate$ is called, otherwise the recursive procedure $\ExtendCell$ is used to update the restricted Mondrian partition, starting at the root~$\root$. Lines~7--14 perform the update of the aggregation weights in the same way as what we did in Section~\ref{subsec:PredictionUpdate}.

\begin{algorithm}[h!]
  \caption{$\ExtendCell(\node, x): $ update node $\node$ using $x$}
  \label{alg:extend-split} 
  \small
  \begin{algorithmic}[1]
    \STATE \textbf{Input:} a node $\node \in \tree$ from the current tree and a features vector $x \in \R^d$
    \STATE Let $\Delta_{j} = (x_j - b_\node^j)_+ + (a_\node^j - x_j)_+$ 
    and $\Delta = \sum_{j=1}^d \Delta_j$
    \STATE Sample $E \sim \expdist(\Delta)$ and put $E = + \infty$ if $\Delta = 0$ (namely $x \in \cellrange_\node$)
    \IF{$\node$ is a leaf \textbf{or} $\birth_\node + E < \birth_{\node 0}$}
    \STATE Sample a split coordinate $J \in \{ 1, \dots, d \}$ with $\P (J = j) 
    = \Delta_j / \Delta$
    \IF {$x_{J} < a_\node^{J}$}
    \STATE Put $a = 0$ and sample the split threshold
     $S | J \sim \uniformdist([x_{J}, a_\node^{J}])$
    \ELSE
    \STATE Put $a = 1$ and sample the split threshold
     $S | J \sim \uniformdist([b_\node^{J}, x_{J}])$
    \ENDIF
    \STATE Set $\tree_{\node (1-a)} = \tree_\node$, namely nodes $\node \node'$ are renamed as $\node (1 - a) \node'$ for any $\node' \in \tree_{\node}$
    \STATE Create nodes $\node 0$ and $\node 1$ and put $(\birth_{\node a}, \pred_{\node a}, w_{\node a}, \wbar_{\node a}, R_{\node a}) = (\tau_\node + E, h(\varnothing), 1, 1, \{ x\})$, put $\sigma_{\node (1 - a)} = \sigma_\node$ (see Equation~\eqref{eq:data_in_node_restricted}) but set $\tau_{\node (1 - a)} = \tau_\node + E$
    \STATE Put $a_\node^j = \min(a_\node^j, x_j)$ and $b_\node^j = \max(b_\node^j, x_j)$  
    \ELSE
    \STATE Put $a_\node^j = \min(a_\node^j, x_j)$ and $b_\node^j = \max(b_\node^j, x_j)$ 
    \STATE Let $a \in \{0, 1\}$ be such that $x \in \cell_{\node a}$ and call
     $\ExtendCell(\node a, x)$
    \ENDIF
  \end{algorithmic}
\end{algorithm}

In Algorithm~\ref{alg:extend-split}, Line~2 computes the range extension of $x$ with respect to $R_\node$.
In particular, if $x \in R_\node$, then no split will be performed and we go directly to Line~15. 
Otherwise, if $x$ is outside of $R_\node$, a split of $\node$ is performed whenever $\birth_\node + E < \birth_{\node 0}$ (a new node created at time $\birth_\node + E$ can be inserted before the creation time $\birth_{\node 0}$ of the current child $\node 0$ of $\node$).
In this case, we sample the split coordinate $j$ proportionally to $\Delta_j$ (coordinates with the largest extension are more likely to be used to split $\node$) and we sample the split threshold uniformly at random within the corresponding extension (Line~7 or Line~9).
Now, at Line~11, we move downwards the whole tree rooted at $\node$: any node at index $\node \node'$ for any $\node' \in \tree_\node$ is renamed as $\node (1-a) \node'$.
For instance, if $a=0$ (Line~7, the extension is on the left of the current range), the node $\node 0$ is renamed as $\node 10$, the node $\node 1$ as $\node 11$, etc.
Then, at Line~12, new nodes $\node 0$ and $\node 1$ are created, where $\node a$ is a new leaf containing~$x$ and $\node (1 - a)$ is a new node which is the root of the subtree we moved downwards at Line~11.
Line~12 also initializes $\sigma_{\node a}$ and copies $\sigma_\node$ into $\sigma_{\node (1 -a)}$.
The process performed in Lines 11--12 therefore simply inserts two new nodes below $\node$ (since we just split node $\node$): a leaf containing $x$, and another node rooting the tree that was rooted at $\node$ before the split.
Line~13 updates the range of $\node$ using $x$ and exits the procedure.
If no split is performed, Line~15 updates the range of $\node$ using $x$ and calls $\ExtendCell$ on the child of $\node$ containing $x$.

\begin{figure}[h!]
	\begin{minipage}{0.37\textwidth}
		\begin{tikzpicture}[scale=0.25, every node/.style={scale=0.55}]

  
  \pgfmathsetmacro{\xmin}{0} 
  \pgfmathsetmacro{\xmax}{10}
  \pgfmathsetmacro{\ymin}{0}
  \pgfmathsetmacro{\ymax}{10}


  \pgfmathsetmacro{\xa}{2} 
  \pgfmathsetmacro{\ya}{8}
  \pgfmathsetmacro{\xb}{1.3} 
  \pgfmathsetmacro{\yb}{6.1}
  \pgfmathsetmacro{\xc}{4.5} 
  \pgfmathsetmacro{\yc}{7}
  \pgfmathsetmacro{\xd}{5} 
  \pgfmathsetmacro{\yd}{4}
  \pgfmathsetmacro{\xe}{7} 
  \pgfmathsetmacro{\ye}{3}

  \pgfmathsetmacro{\sy}{4.7} 
  \pgfmathsetmacro{\sxl}{4.2} 
  \pgfmathsetmacro{\sxr}{8}
  \pgfmathsetmacro{\sylr}{1.5}
  \pgfmathsetmacro{\sxrl}{3}
  \pgfmathsetmacro{\sxlrr}{6.3}
  \pgfmathsetmacro{\syrll}{6.8}

  \draw [line width=0.7pt] (\xmin,\ymin) rectangle (\xmax,\ymax) ; 

  \draw [line width=0.7pt] (\xmin,\sy) -- (\xmax, \sy) ; 
  \draw [line width=0.7pt] (\sxlrr,\ymin) -- (\sxlrr, \sy) ; 
  \draw [blue,line width=0.7pt] (\xmin,\syrll) -- (\xmax, \syrll) ; 

  \draw (\xa,\ya) node {$\bullet$} node [above] {$x_2$} ;
  \draw (\xb,\yb) node {$\bullet$} node [below] {$x_3$} ;
  \draw (\xd,\yd) node {$\bullet$} node [below] {$x_1$} ;
  \draw (\xe,\ye) node {$\bullet$} node [right] {$x_4$} ;

  \draw [newgreen,dashed] (\xa,\yb) rectangle (\xb,\ya) ;
  
  \pgfmathsetmacro{\trx}{16.5} 
  \pgfmathsetmacro{\try}{10}

  \pgfmathsetmacro{\ix}{2.5} 
  \pgfmathsetmacro{\ixx}{1}  
  \pgfmathsetmacro{\ixxx}{0.5}  

  \pgfmathsetmacro{\iy}{-2.5} 
  \pgfmathsetmacro{\iyy}{-2}
  \pgfmathsetmacro{\iyyy}{-1.5}

  \coordinate (E) at (\trx,\try) ; 
  \coordinate (L) at (\trx-\ix,\try+\iy) ;  
  \coordinate (R) at (\trx+\ix,\try+\iy) ;  
  \coordinate (RL) at (\trx+\ix-\ixx,\try+\iy+\iyy) ;
  \coordinate (RR) at (\trx+\ix+\ixx,\try+\iy+\iyy) ;
  \coordinate (RLL) at (\trx+\ix-\ixx-\ixxx,\try+\iy+\iyy+\iyyy) ;
  \coordinate (RLR) at (\trx+\ix-\ixx+\ixxx,\try+\iy+\iyy+\iyyy) ;
  \coordinate (RLLL) at (\trx+\ix-\ixx-\ixxx-\ixxx,\try+\iy+\iyy+\iyyy+\iyyy) ;
  \coordinate (RLLR) at (\trx+\ix-\ixx-\ixxx+\ixxx,\try+\iy+\iyy+\iyyy+\iyyy) ;
  \coordinate (LL) at (\trx-\ix-\ixx,\try+\iy+\iyy) ;  
  \coordinate (LR) at (\trx-\ix+\ixx,\try+\iy+\iyy) ;
  \coordinate (LRp) at (\trx-\ix+\ixx+0.3,\try+\iy+\iyy) ;
  \coordinate (LRL) at (\trx-\ix+\ixx-\ixxx,\try+\iy+\iyy+\iyyy) ;
  \coordinate (LRR) at (\trx-\ix+\ixx+\ixxx,\try+\iy+\iyy+\iyyy) ;
  \coordinate (LRRL) at (\trx-\ix+\ixx+\ixxx-\ixxx,\try+\iy+\iyy+\iyyy+\iyyy) ;
  \coordinate (LRRR) at (\trx-\ix+\ixx+\ixxx+\ixxx,\try+\iy+\iyy+\iyyy+\iyyy) ;

  \draw [line width=0.6pt] (E) -- (L) ;
  \draw [line width=0.6pt] (E) -- (R) ;
  \draw [blue,line width=0.6pt] (R) -- (RL) ;
  \draw [blue,line width=0.6pt] (R) -- (RR) ;
  \draw [line width=0.6pt] (L) -- (LL) node {$\bullet$} node [below] {$x_1$} ;
  \draw [line width=0.6pt] (L) -- (LR) node {$\bullet$} node [below] {$x_4$} ;
  \draw (RL) node {$\bullet$} node [below] {$x_3$} ;
  \draw (RR) node {$\bullet$} node [below] {$x_2$} ;

  \draw [newgreen,fill] (R) +(-3pt,-3pt) rectangle +(3pt,3pt) ;
  \draw [newgreen] (R) node [above right] {$\node = 1$} ;

  \draw (\trx,\ymin+0.7) node {Tree partition $\tree_t$ $(t=5)$};


\end{tikzpicture}    
  
	\end{minipage}
	\begin{minipage}{0.53\textwidth}
		\begin{tikzpicture}[scale=0.25, every node/.style={scale=0.55}]

  
  \pgfmathsetmacro{\xmin}{0} 
  \pgfmathsetmacro{\xmax}{10}
  \pgfmathsetmacro{\ymin}{0}
  \pgfmathsetmacro{\ymax}{10}


  \pgfmathsetmacro{\xa}{2} 
  \pgfmathsetmacro{\ya}{8}
  \pgfmathsetmacro{\xb}{1.3} 
  \pgfmathsetmacro{\yb}{6.1}
  \pgfmathsetmacro{\xc}{4.5} 
  \pgfmathsetmacro{\yc}{7}
  \pgfmathsetmacro{\xd}{5} 
  \pgfmathsetmacro{\yd}{4}
  \pgfmathsetmacro{\xe}{7} 
  \pgfmathsetmacro{\ye}{3}

  \pgfmathsetmacro{\sy}{4.7} 
  \pgfmathsetmacro{\sxl}{4.2} 
  \pgfmathsetmacro{\sxr}{8}
  \pgfmathsetmacro{\sylr}{1.5}
  \pgfmathsetmacro{\sxrl}{3}
  \pgfmathsetmacro{\sxlrr}{6.3}
  \pgfmathsetmacro{\syrll}{6.8}

  \draw [line width=0.7pt] (\xmin,\ymin) rectangle (\xmax,\ymax) ; 

  \draw [line width=0.7pt] (\xmin,\sy) -- (\xmax, \sy) ; 
  \draw [red,line width=0.7pt] (\sxrl,\sy) -- (\sxrl, \ymax) ; 
  \draw [line width=0.7pt] (\sxlrr,\ymin) -- (\sxlrr, \sy) ; 
  \draw [blue,line width=0.7pt] (\xmin,\syrll) -- (\sxrl, \syrll) ; 

  \draw (\xa,\ya) node {$\bullet$} node [above] {$x_2$} ;
  \draw (\xb,\yb) node {$\bullet$} node [below] {$x_3$} ;
  \draw [red] (\xc,\yc) node {$\bullet$} ;
  \draw [red] (\xc,\yc) node [right] {$x_5$} ;
  \draw (\xd,\yd) node {$\bullet$} node [below] {$x_1$} ;
  \draw (\xe,\ye) node {$\bullet$} node [right] {$x_4$} ;

  \draw [newgreen,dashed] (\xb,\yb) rectangle (\xc,\ya) ;

  \pgfmathsetmacro{\trx}{16.5} 
  \pgfmathsetmacro{\try}{10}

  \pgfmathsetmacro{\ix}{2.5} 
  \pgfmathsetmacro{\ixx}{1}  
  \pgfmathsetmacro{\ixxx}{0.5}  

  \pgfmathsetmacro{\iy}{-2.5} 
  \pgfmathsetmacro{\iyy}{-2}
  \pgfmathsetmacro{\iyyy}{-1.5}

  \coordinate (E) at (\trx,\try) ; 
  \coordinate (L) at (\trx-\ix,\try+\iy) ;  
  \coordinate (R) at (\trx+\ix,\try+\iy) ;  
  \coordinate (RL) at (\trx+\ix-\ixx,\try+\iy+\iyy) ;
  \coordinate (RR) at (\trx+\ix+\ixx,\try+\iy+\iyy) ;
  \coordinate (RLL) at (\trx+\ix-\ixx-\ixxx,\try+\iy+\iyy+\iyyy) ;
  \coordinate (RLR) at (\trx+\ix-\ixx+\ixxx,\try+\iy+\iyy+\iyyy) ;
  \coordinate (RLLL) at (\trx+\ix-\ixx-\ixxx-\ixxx,\try+\iy+\iyy+\iyyy+\iyyy) ;
  \coordinate (RLLR) at (\trx+\ix-\ixx-\ixxx+\ixxx,\try+\iy+\iyy+\iyyy+\iyyy) ;
  \coordinate (LL) at (\trx-\ix-\ixx,\try+\iy+\iyy) ;  
  \coordinate (LR) at (\trx-\ix+\ixx,\try+\iy+\iyy) ;
  \coordinate (LRp) at (\trx-\ix+\ixx+0.3,\try+\iy+\iyy) ;
  \coordinate (LRL) at (\trx-\ix+\ixx-\ixxx,\try+\iy+\iyy+\iyyy) ;
  \coordinate (LRR) at (\trx-\ix+\ixx+\ixxx,\try+\iy+\iyy+\iyyy) ;
  \coordinate (LRRL) at (\trx-\ix+\ixx+\ixxx-\ixxx,\try+\iy+\iyy+\iyyy+\iyyy) ;
  \coordinate (LRRR) at (\trx-\ix+\ixx+\ixxx+\ixxx,\try+\iy+\iyy+\iyyy+\iyyy) ;

  \draw [line width=0.6pt] (E) -- (L) ;
  \draw [line width=1.5pt] (E) -- (R) ;
  \draw [red,line width=0.6pt] (R) -- (RL) ;
  \draw [red,line width=1.5pt] (R) -- (RR) node {$\bullet$} node [below] {$x_5$} ;
  \draw [blue,line width=0.6pt] (RL) -- (RLL) ;
  \draw [blue,line width=0.6pt] (RL) -- (RLR) ;
  \draw [line width=0.6pt] (L) -- (LL) node {$\bullet$} node [below] {$x_1$} ;
  \draw [line width=0.6pt] (L) -- (LR) node {$\bullet$} node [below] {$x_4$} ;
  \draw (RLL) node {$\bullet$} node [below] {$x_3$} ;
  \draw (RLR) node {$\bullet$} node [below] {$x_2$} ;

  \draw [newgreen,fill] (R) +(-3pt,-3pt) rectangle +(3pt,3pt) ;
  \draw [newgreen] (R) node [above right] {$\node = 1$} ;

  \draw (\trx + 1,\ymin+0.7) node {Updated restricted tree partition
    \textcolor{red}{$\tree_{t+1}$}};


  \pgfmathsetmacro{\xaxis}{22.5}
  \pgfmathsetmacro{\xpp}{0.5}
  \pgfmathsetmacro{\yaxismin}{5.5}
  \pgfmathsetmacro{\yaxismax}{10}
  \pgfmathsetmacro{\yaxisa}{9}
  \pgfmathsetmacro{\yaxisb}{8}
  \pgfmathsetmacro{\yaxisc}{7}
  \pgfmathsetmacro{\yaxisd}{6}

  \coordinate (Amin) at (\xaxis,\yaxismin) ;
  \coordinate (Amax) at (\xaxis,\yaxismax) ;
  
  \draw[-{latex[length=2mm, width=1mm]},dashed] (Amin) -- (Amax) ;
  \draw (\xaxis+\xpp,\yaxisa) node [right] {Updates along the \textbf{path} of $x_t$:} ;
  \draw (\xaxis+\xpp,\yaxisb) node [right] {${w_{\node, t+1}} = w_{\node, t} \exp(-\ell (\pred_{\node, t}, y_t))$} ;
  \draw (\xaxis+\xpp,\yaxisc) node [right] {${w_{\node, t+1}} = \frac{1}{2} w_{\node, t+1} + \frac{1}{2} \wbar_{\node 0, t+1} \wbar_{\node 1, t+1}$} ;
  \draw (\xaxis+\xpp,\yaxisd) node [right] {${\pred_{\node, t+1}} = \dots$} ;

\end{tikzpicture}    
  
	\end{minipage}  
  \caption{Illustration of the $\MondrianUpdate(x_t, y_t)$ procedure from Algorithms~\ref{alg:mondrian-update-experiments} and~\ref{alg:extend-split}: update of the partition, weights and node predictions as a new data point $(x_t, y_t)$ for $t=5$ becomes available.
    \emph{Left}: tree partition $\mondrian_t$ before seeing $(x_t, y_t)$. \emph{Right}:
    update of the partition using $(x_t, y_t)$. The path of $x_t$ in the tree is indicated in bold. 
    In green is the node $\node = 1$ and dashed lines indicates its range $R_1$. Since $x_5$ is outside of $R_1$ at $t=5$, the range is extended. 
    A new split (in red) is sampled in the extended range, since its creation time $\birth_1 + E$ is smaller than the one of the next split $\birth_{10}$ and two new nodes named $10$ and $11$ are inserted below $1$, while the previous nodes $10$ and $11$ are moved as $100$ and $101$.
    The weights and node predictions are then updated using an upwards path from the new leaf containing $x$ to the root (in bold). All leaves contain exactly one single point.
  }
  \label{fig:algorithm-restricted}
\end{figure}
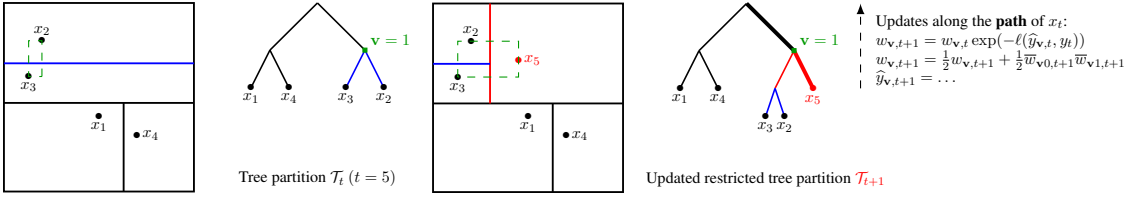

The prediction algorithm described in Algorithm~\ref{alg:predict-experiments} below is a modification of Algorithm~\ref{alg:predict}, where we use $\ExtendCell$ instead of Algorithm~\ref{alg:mondrian-update}.
Finally, the algorithm used in our experiments do not use the online to batch conversion from Section~\ref{sub:adaptive-rates}: it simply uses the current tree, namely the most recent updated Mondrian tree partition $\Pi_{t+1} = (\tree_{t+1}, \Sigma_{t+1})$ after calling $\MondrianUpdate(x_1, y_1), \ldots ,\MondrianUpdate(x_t, y_t)$, where $(x_t, y_t)$ is the last sample seen.

\begin{algorithm}[htbp]
  \small
  \caption{$\Predict (x)$: predict the label of $x \in [0, 1]^d$}
  \begin{algorithmic}[1]
  \STATE \textbf{Input:} a features vector $x \in [0, 1]^d$
    \STATE Call $\ExtendCell(\root, x)$ in order to obtain a temporary update of the current partition $\Pi$ using $x$ and let $\node(x)$ be the leaf such that $x \in C_{\node(x)}$
    \STATE Set $\wt y_\node = \pred_{\node(x)}$
    \WHILE{$\node \neq \root$}
    \STATE Let $(\node, \node a) = (\parent{\node}, \node)$ (for some $a\in \{ 0, 1\}$)
    \STATE Let
    $\wt y_\node = \frac{1}{2} \frac{w_\node}{\wbar_{\node}} \pred_{\node} + \frac{1}{2} \frac{\wbar_{\node (1-a)} \wbar_{\node a}}{\wbar_\node} \wt y_{\node a}$
    \ENDWHILE
    \STATE \textbf{Return} $\wt y_\root$
  \end{algorithmic}
  \label{alg:predict-experiments}
\end{algorithm}

\subsection{Computational complexity}
\label{sub:complexity}

The next Proposition provides a bound on the average depth of a Mondrian tree.
This is of importance, since the computational complexities of $\MondrianUpdate$ and $\Predict$ are linear with respect to this depth, see below for a discussion.

\begin{proposition}
  \label{prop:mondrian_depth_bound}
  Assume that $x$ has a density $p$ satisfying the following property\textup: there exists a constant $M > 0$ such that, for every $x', x'' \in [0, 1]^d$ which only differ by one coordinate\textup,
  \begin{equation}
    \label{eq:ratio-density}
    \frac{p(x')}{p(x'')}
    \leq M
    \, .
  \end{equation}
  Then\textup, the depth $D_n^\mondrian (x)$ of the leaf containing a random point $x$ in the Mondrian tree restricted to the observations $x_1, \dots, x_n, x$ satisfies
  \begin{equation*}
    \E [ D_n^\mondrian (x) ]
    \leq \frac{\log n}{\log [ (2M) / (2M - 1) ]} + 2M
    \, .
  \end{equation*}
\end{proposition}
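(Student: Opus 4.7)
I would track $N_k$, the number of training samples $x_1,\dots,x_n$ lying in the cell of the restricted Mondrian tree at depth $k$ that contains $x$. Since $N_0 \leq n$ and the depth $D_n^\mondrian(x)$ of $x$'s leaf is the first $k$ with $N_k = 0$, one has $\E[D_n^\mondrian(x)] = \sum_{k\geq 0} \P(N_k \geq 1) \leq \sum_{k \geq 0} \min(1, \E[N_k])$, so a geometric upper bound on $\E[N_k]$ translates into a logarithmic depth bound.

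\paragraph{Key lemma (per-level decay).} The main step is to establish $\E[N_{k+1} \cond N_k] \leq (1 - 1/(2M))\, N_k$. Conditioning on the cell $C$ at depth $k$ and on its point set $P \ni x$ of cardinality $N_k + 1$, independence of the Mondrian from the observations together with the \iid hypothesis ensure that the points of $P$ are iid with conditional density $p|_C$. The rejection-sampling description of the restricted Mondrian shows that the first separating split of $C$ picks coordinate $j$ with probability $r_j(P)/R(P)$ and threshold uniform on $[\min_P p_j, \max_P p_j]$, where $r_j(P)$ is the range of $P$ along coordinate $j$ and $R(P) = \sum_j r_j(P)$. A direct computation then yields $\P(\xi \text{ and } x \text{ on the same side} \cond x,\xi,P) = 1 - \|x-\xi\|_1/R(P)$, and summing over $\xi \in P\setminus\{x\}$ reduces the lemma to the analytic bound
\[
  \E_{x,\xi \sim p|_C}\!\left[\frac{\|x-\xi\|_1}{R(P)}\right] \;\geq\; \frac{1}{2M}.
\]
To establish this I would use that each one-dimensional marginal $q_j$ of $p|_C$ on $[a_j,b_j]$ inherits the density-ratio bound $M$, hence $q_j \geq 1/(M(b_j-a_j))$. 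A mixture decomposition of $q_j$ into a uniform component of mass $1/M$ plus a residual, combined with the crude bound $R(P) \leq |C|$, then produces the required coordinate-wise and global estimate.

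\paragraph{Bootstrapping and main obstacle.} Iterating the lemma gives $\E[N_k] \leq (1-1/(2M))^k\, n$. Choosing $K = \log n/\log(2M/(2M-1))$ so that $(1-1/(2M))^K = 1/n$, the bound
\[
  \E[D_n^\mondrian(x)] \;\leq\; K + \sum_{k > K} \E[N_k] \;\leq\; K + n\,(1-1/(2M))^{K+1}\cdot 2M \;\leq\; K + 2M
\]
recovers the proposition. The technical crux is the analytic lower bound $\E[\|x-\xi\|_1/R(P)] \geq 1/(2M)$: a naive mixture argument that treats $R(P)$ and the coordinate-wise marginals separately yields only the weaker constant $(3M-1)/(6M^2)$. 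Obtaining the sharper $1/(2M)$ requires either exploiting the correlation between $R(P)$ and the displacements $|x_j-\xi_j|$, which are all determined by the same iid draws in $P$, or a finer one-dimensional analysis of $\E|x_j-\xi_j|$ under the density-ratio constraint; I expect this sharpening to be the main analytic hurdle.
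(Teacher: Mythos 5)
Your overall skeleton matches the paper's: track the number $N_k$ of training points sharing $x$'s depth-$k$ cell, prove the per-level decay $\E[N_{k+1}]\leq(1-\tfrac{1}{2M})\E[N_k]$, and convert the geometric decay into the depth bound via $\E[D_n^\mondrian(x)]\leq\sum_k\min(1,\E[N_k])$ with the cutoff at $k_0=\lceil\log n/\log\{(2M)/(2M-1)\}\rceil$. However, your route to the decay lemma is genuinely different from the paper's, and it is exactly there that your argument has a gap that you yourself flag: you reduce the lemma to the analytic inequality $\E[\|x-\xi\|_1/R(P)]\geq 1/(2M)$ and only establish the weaker constant $(3M-1)/(6M^2)$. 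As written, your proof therefore yields a decay rate $1-\tfrac{3M-1}{6M^2}$ and hence a depth bound with denominator $\log\big[(1-\tfrac{3M-1}{6M^2})^{-1}\big]$, which does not match the claimed constant. There is also a second, unaddressed issue: by working with the \emph{restricted} Mondrian directly, the cell $C$ and its point set $P$ at depth $k$ are jointly determined by the data (the splits live in the range of the points), so the assertion that the points of $P$ are conditionally \iid with density $p|_C$ given the cell does not follow merely from independence of the underlying Mondrian and the data; it requires justification.

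The paper's proof sidesteps both difficulties by defining $N_j$ through the \emph{unrestricted} Mondrian, whose cells are independent of the observations (and noting that $N_k=0$ in the unrestricted tree forces $D_n^\mondrian(x)\leq k$ in the restricted one). Conditionally on the unrestricted cell $C_{\node_j}$, the points inside really are \iid from $p|_{C_{\node_j}}$, and the decay follows from the single-split identity $\E[N_{j+1}\cond C_{\node_j},N_j,\mondrian]=N_j\big(p_{\node_j}^2+(1-p_{\node_j})^2\big)$ with $p_{\node_j}=\P(x\in C_{\node_j0}\cond x\in C_{\node_j})$. The density-ratio hypothesis is then used only once and in a one-dimensional way: writing $U_j$ for the volume fraction of the child $C_{\node_j0}$, which is uniform on $[0,1]$ by construction of the Mondrian, one gets $p_{\node_j}\geq M^{-1}U_j$ and $1-p_{\node_j}\geq M^{-1}(1-U_j)$, hence $p_{\node_j}(1-p_{\node_j})\geq\tfrac{1}{2M}\{U_j\wedge(1-U_j)\}$ and $\E[p_{\node_j}(1-p_{\node_j})]\geq\tfrac{1}{4M}$, giving the factor $1-\tfrac{1}{2M}$ directly. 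If you want to salvage your approach you would need to both justify the conditional distribution of $P$ and genuinely exploit the correlation between $R(P)$ and $\|x-\xi\|_1$; switching to the unrestricted partition is the cleaner fix.
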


Assumption~\eqref{eq:ratio-density} is satisfied when $p$ is upper and lower bounded: $c \leq p \leq C$ with $M = C / c$, but this assumption is weaker: for instance, it only implies that $M^{-d} \leq p \leq M^d$, which is a milder property when the dimension $d$ is large.
The proof of Proposition~\ref{prop:mondrian_depth_bound} is given in Section~\ref{sec:proofs}.
Since the lower bound is also trivially $\Omega(\log n)$ (a binary tree with $n$ nodes has at least a depth $\log_2 n$), Proposition~\ref{prop:mondrian_depth_bound} entails that
 $\E[D_n^\mondrian] = \Theta(\log n)$.
 If the number of features is $d$, then the update complexity of a single tree is $\Theta(d \log n)$, which makes full online training time $\Theta(n d \log n)$ over a dataset of size $n$.
 Prediction is $\Theta(\log n)$ since it requires a downwards and upwards path on the tree (see Algorithms~\ref{alg:predict} and~\ref{alg:predict-experiments}).

\section{Numerical experiments}
\label{sec:num-experiments}

The aim of this section is two-fold. 
First, Section~\ref{sub:numerical_insights} gathers some insights about AMF based on numerical evidence obtained with simulated and real datasets, that confirm our theoretical findings.
Second, it proposes in Section~\ref{sub:comparison_datasets} a thorough comparison of AMF with several baselines on several datasets for multi-class classification.

An open source implementation of AMF (AMF with restricted Mondrian partitions described in Algorithms~\ref{alg:mondrian-update-experiments} and~\ref{alg:predict-experiments}) is available in the \texttt{onelearn} Python package, where algorithms for multi-class classification and regression are available, respectively, in the \texttt{AMFClassifier} and \texttt{AMFRegressor} classes.
The source code of \texttt{onelearn} is available at \url{https://github.com/onelearn/onelearn} and can be installed easily through the \texttt{PyPi} repository by typing \texttt{pip install onelearn} in a terminal. The documentation of \texttt{onelearn} is available at \url{https://onelearn.readthedocs.io}, which contains extra experiments and explains, among other things, how the experiments from the paper can be reproduced.
The \texttt{onelearn} package is fully implemented in Python, but is heavily accelerated thanks to \texttt{numba}\footnote{http://numba.pydata.org} and follows API conventions from \texttt{scikit-learn}, see~\cite{pedregosa2011scikit-learn}.

\subsection{Main insights about AMF}
\label{sub:numerical_insights}

First, let us provide some key observations about AMF.

\paragraph{A purely online algorithm.}

AMF is a \emph{purely online} algorithm, as illustrated in Figure~\ref{fig:omaf_iterations} on a simulated binary classification problem.
Herein, we see that the decision function of AMF evolves smoothly along the learning steps, leading to a correct AUC even in the early stages.
This is confirmed from a theoretical point of view in Section~\ref{sec:theory}, which provides regret bounds and minimax rates, but also from a computational point of view in Table~2 from Section~\ref{sec:algorithm}, where we show that at each step, both learning and prediction have $\Theta(\log n)$ complexity, where $n$ stands for the number of samples currently available. 
\begin{figure}[htbp]
  \centering
  \includegraphics[width=\textwidth]{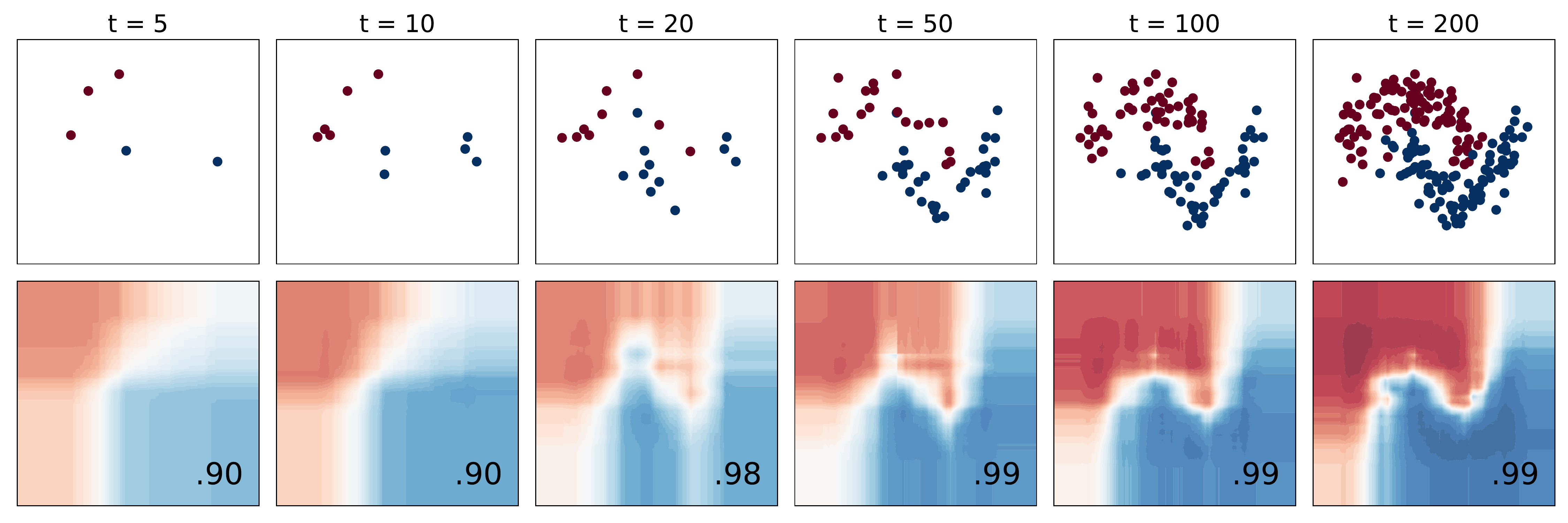}
  \caption{Evolution of the decision function of AMF along the online learning steps on a simulated binary classification problem. We observe the online property of this algorithm, which produces a smooth decision function at each iteration, and leads to a correct AUC on a test set even in the early stages (bottom right of each display).}
  \label{fig:omaf_iterations}
\end{figure}

\paragraph{AMF is adaptive.}

We know from Section~\ref{sec:theory} that a tree in AMF controls its regret with respect to the best pruning of a Mondrian partition and is consequently adaptive to the unknown smoothness of the regression function.
\begin{figure}[ht!]
  \centering
  \includegraphics[width=0.49\textwidth]{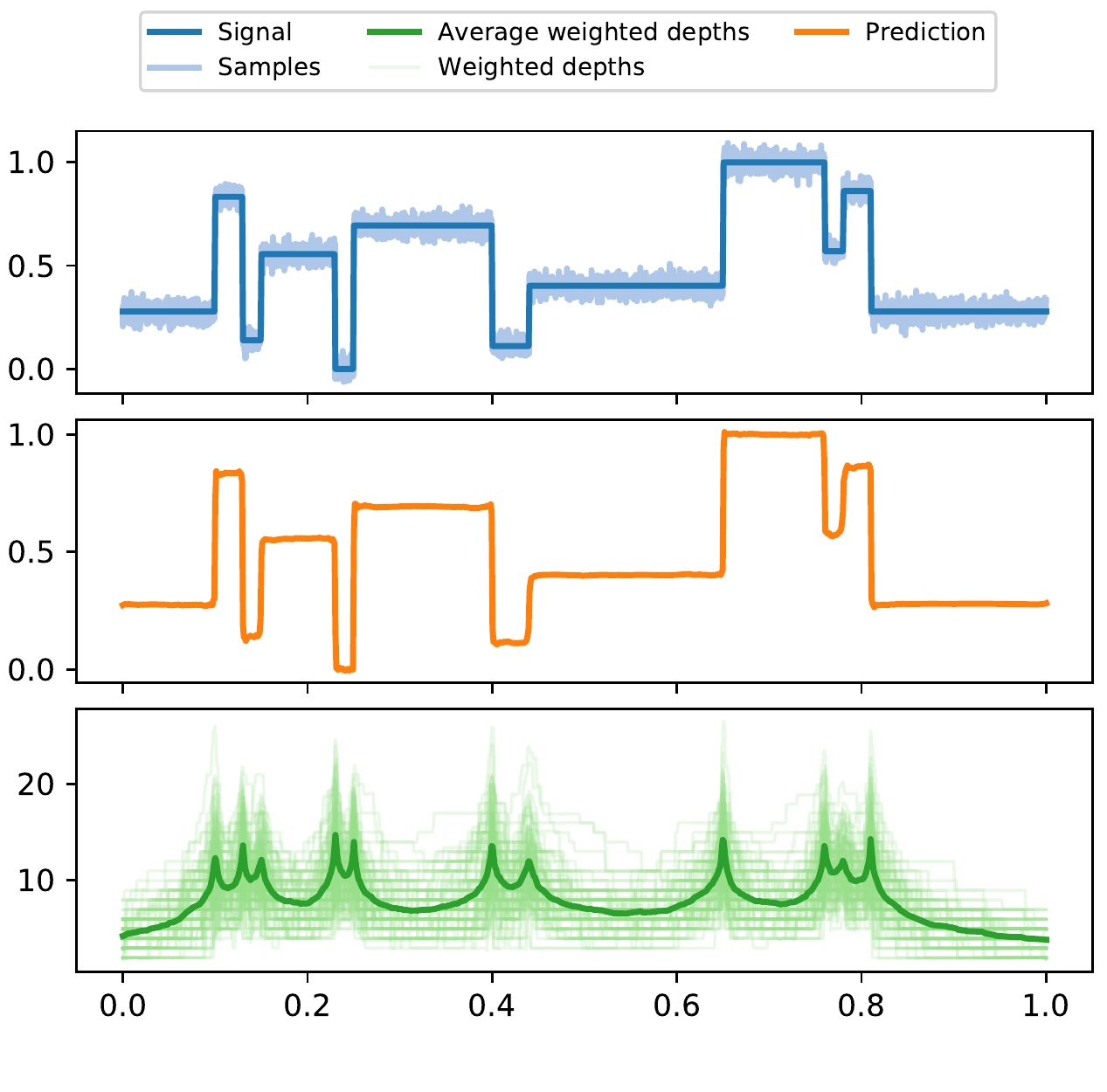}
  \includegraphics[width=0.49\textwidth]{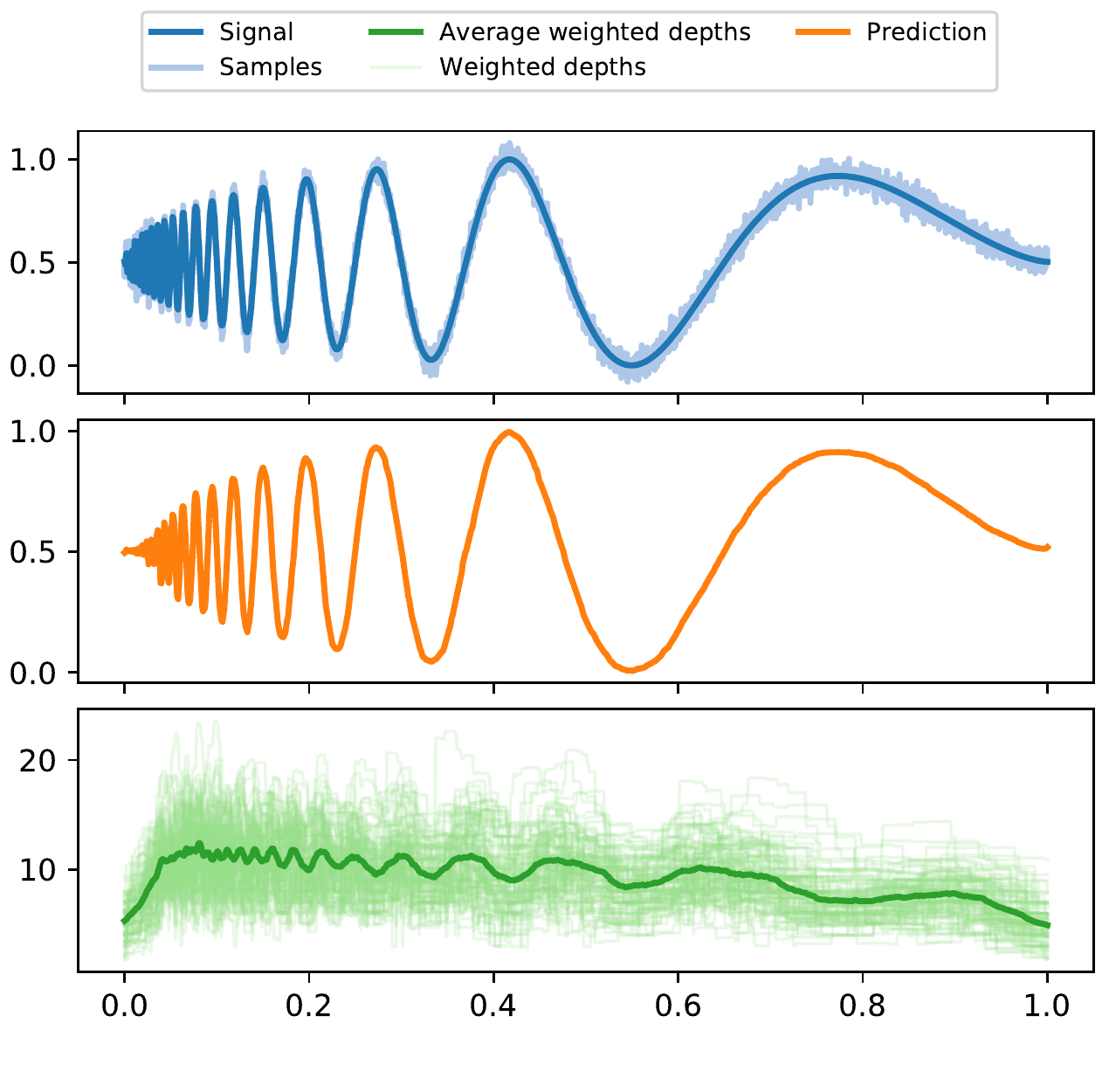}
  \caption{Two examples (left and right hand sides) of a true signal and noisy Gaussian samples (top), the reconstructed signal (middle) and the local weighted depths of 100 trees used in AMF and their average (bottom), over the interval $[0, 1]$.
  We observe that the weighted depths vary strongly over $[0, 1]$ as a function of the signal smoothness: wherever the signal is unsmooth, the weighted depths used by AMF increase and decrease where the signal is smooth.}
  \label{fig:weighted_depths}
\end{figure}
This fact is confirmed by Figure~\ref{fig:weighted_depths}, where we consider two examples of one-dimensional ($d=1$) regression problems with Gaussian noise.

In both displays, we compute the \emph{local weighted depths} denoted $\mathrm{wdepth}(x)$ of each tree in the forest as
\begin{equation*}
 \mathrm{wdepth}(x) = \sum_{\tree} w(\tree) \mathrm{depth}_\tree(x) \quad \text{ with } \quad w(\tree) = \frac{\pi (\tree) e^{-\eta L_{t-1} (\tree)} }{\sum_{\tree'} \pi (\tree') e^{-\eta L_{t-1} (\tree')}},
\end{equation*}
where the sum is over all subtrees $\tree$ of the considered tree, where the \emph{prior} $\pi$ is given by~\eqref{eq:ctw-prior}, where $x \in [0, 1]^d$ and where $\mathrm{depth}_\tree(x)$ stands for the depth of each subtree $\tree$ met along the path leading to the leaf containing $x$.
Note that the aggregation weights $w(\tree)$ are the same as in the aggregated estimator from Equation~\eqref{eq:exact-aggregation}.
When the aggregation weight $w(\tree)$ of a tree $\tree$ is large ($w(\tree) \approx 1$), it carries most of the prediction computed by~\eqref{eq:exact-aggregation}, and $\mathrm{wdepth}(x) \approx \mathrm{depth}_\tree(x)$.
In such a case, by displaying $\mathrm{wdepth}(x)$ along $x \in [0, 1]^d$, we can visualize the local complexities (measured by the weighted depth) used internally by AMF.
This leads to the displays of Figure~\ref{fig:weighted_depths}, where we can observe that AMF adapts to the local smoothness of the regression functions.
We see that the weighted depth increases at points $x \in [0, 1]$ where the signal is unsmooth.
This means that AMF tends to use deeper trees at such points, with deeper leaves, containing less samples points, leading to prediction based on narrow averages.
On the contrary, AMF decreases the weighted depth where the signal is smooth, leading to tree predictions with leaves containing more samples, hence wider averages.

\paragraph{Aggregation prevents overfitting.}

Thanks to the aggregation algorithm, each tree in AMF is adaptive and leads to smooth decision functions for classification. 
We display in Figure~\ref{fig:decisions} the decision functions of AMF, Mondrian Forest (MF), Breiman's batch random forest (RF) and batch Extra Trees (ET) on simulated datasets for binary classification (see Section~\ref{sub:comparison_datasets} for a precise description of the implementations used for each algorithm).
\begin{figure}[ht]
  \centering
  \includegraphics[width=0.95\textwidth]{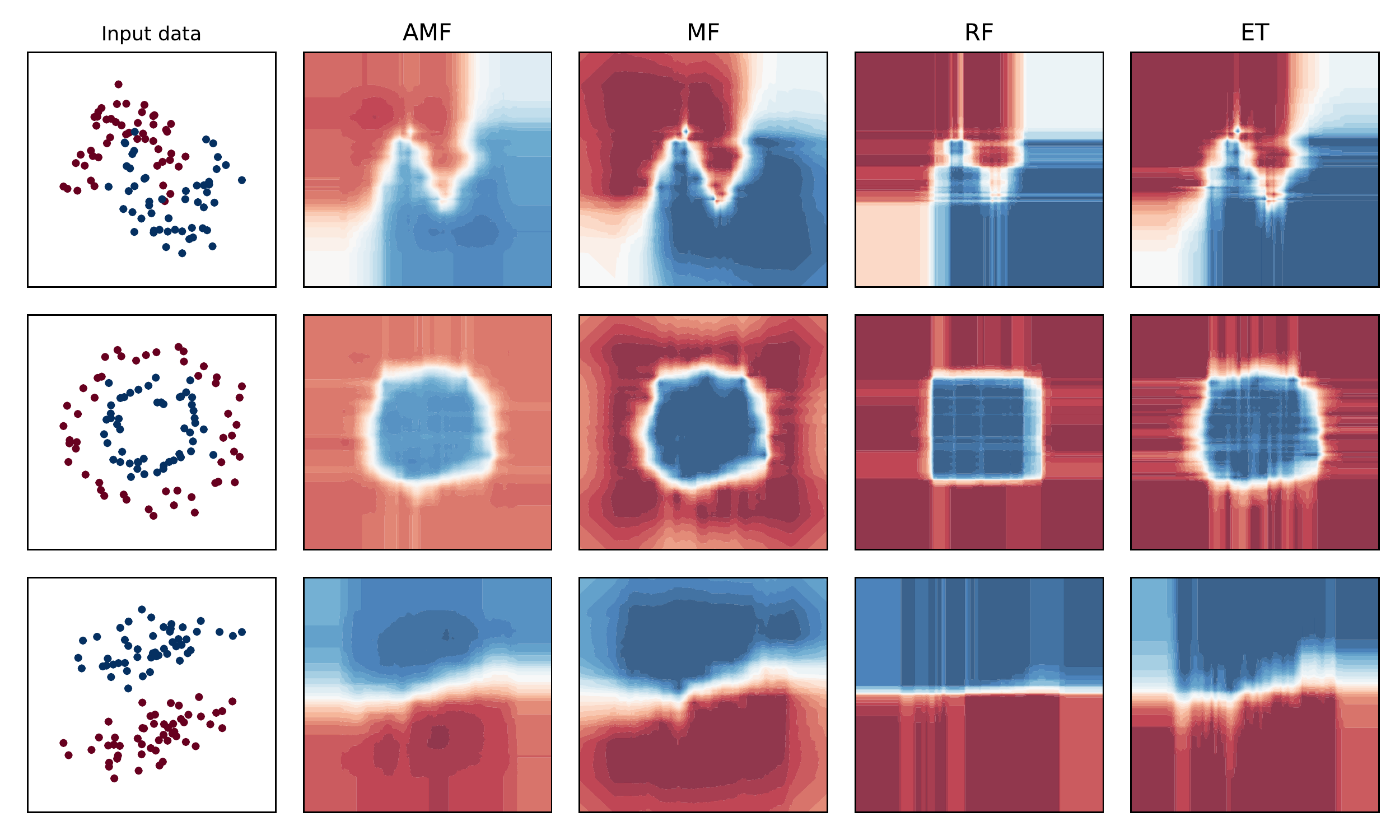}
  \caption{  Decision functions of AMF, Mondrian Forest (MF), Breiman's batch random forest (RF) and batch Extra Trees (ET), on several toy datasets for binary classification. We observe that AMF, thanks to aggregation, leads to a smooth decision function, hence with a better generalization properties. Let us stress that both AMF and MF do a \emph{single pass} on the data, while RF and ET require many passes. All algorithms use a forest containing 100 trees.}
  \label{fig:decisions}
\end{figure}
We observe that AMF produces a smooth decision function in all cases, while
all the other algorithms display rather non-smooth decision functions, which suggests that the underlying probability estimates are not well-calibrated.
Note also that, thanks to the aggregation algorithm, AMF obtains typically good performances with a small number of trees, as illustrated in Figure~\ref{fig:n_trees} from Section~\ref{sub:comparison_datasets}.

These numerical insights are confirmed in the next Section, where we propose a thorough comparison of AMF with several baselines on real datasets, and where we can observe that AMF compares favorably with respect to the considered baselines.

\subsection{Comparison of AMF with several baselines on real datasets} 
\label{sub:comparison_datasets}

We describe all the considered algorithms in Section~\ref{sub:description_of_the_considered_procedures}, including both online methods and batch methods.
A comparison of the {average losses} (assessing the online performance of algorithms) on several datasets is given in Section~\ref{sub:comparison_of_regrets} for online methods only. 
Batch and online methods are compared in Section~\ref{sub:comparisons_for_batch_learning} and an experiment comparing the sensitivity of all methods with respect to the number of trees used is given in Section~\ref{sub:sensitivity_to_the_number_of_trees}.
All these experiments are performed for multi-class classification problems, using datasets described in Section~\ref{sub:considered_datasets}.

\subsubsection{Algorithms}
\label{sub:description_of_the_considered_procedures}

In this Section, we describe precisely the procedures considered in our experiments for online and batch classification problems.

\medskip
\noindent
\emph{AMF.} We use \texttt{AMFClassifier} from the \texttt{onelearn} library, with all default parameters: we use $10$ trees, a learning rate $\eta = 1$, we don't split nodes containing only a single class of labels.

\medskip
\noindent
\emph{Dummy.} We consider a dummy baseline that only estimates the distribution of the labels (without taking into account the features) in an online manner, using \texttt{OnlineDummyClassifier} from \texttt{onelearn}.
At step $t + 1$, it simply computes the Krichevsky-Trofimov forecaster (see Equation~\eqref{eq:kt-predictor}) $\widehat y_{t+1}(k) = (n_{t}(k) + 1/2) / (t + K / 2)$ of the classes $k=1, \ldots, K$, where $n_{t}(k) = \sum_{s=1}^{t} \indic{y_s = k}$.

\medskip
\noindent
\emph{MF (Mondrian Forest).} The Mondrian Forest algorithm from~\cite{lakshminarayanan2014mondrianforests,lakshminarayanan2016mondrianuncertainty} proposed in the \texttt{scikit-garden}  library\footnote{\url{https://github.com/scikit-garden/scikit-garden}}.
We use \texttt{MondrianForestClassifier} in our experiments, with the default settings proposed with the method: 
$10$ trees are used, no depth restriction is used on the trees, all trees are trained using the entire dataset (no bootstrap).

\medskip
\noindent
\emph{SGD (Stochastic Gradient Descent).}
This is logistic regression trained with a single pass of stochastic gradient descent.
We use \verb|SGDClassifier| from the \texttt{scikit-learn} library\footnote{\url{https://scikit-learn.org}}, see~\cite{pedregosa2011scikit-learn}.
We use a constant learning rate $0.1$ and the default choice of ridge penalization with strength $0.0001$, 
since it provides good results on all the datasets.

\medskip
\noindent
\emph{RF (Random Forests).}
This is Random Forest~\citep{breiman2001randomforests} for classification.
We use the implementation available in the \texttt{scikit-learn} library, 
namely \texttt{RandomForestClassifer} from the \texttt{sklearn.ensemble} module.
This is a reference implementation, which is highly optimized and 
among the fastest implementations available in the open-source community. Details on this implementation are available in~\cite{louppe2014understanding}.
Note that this is a batch algorithm, that cannot be trained sequentially, which requires a large number of passes through the data to optimize some impurity criterion (default is the Gini index). 
We use the default parameters of the procedure, with 10 trees.

\medskip
\noindent
\emph{ET (Extra Trees).}
This is the Extra Trees algorithm~\citep{geurts2006extremely}. Once-again, we use the implementation available in the \texttt{scikit-learn} library, namely \texttt{ExtraTreesClassifier} from the \texttt{sklearn.ensemble} module. 
As for RF, it is a reference implementation from the open source community.
We use the default parameters of the procedure, with 10 trees.

\subsubsection{Considered datasets} 
\label{sub:considered_datasets}

The datasets used in this paper are from the UCI Machine Learning repository, see~\cite{Dua:2019} and are described in Table~\ref{tab:datasets_classif}.%

\begin{table}
	\caption{\label{tab:datasets_classif} \footnotesize List of datasets from the UCI Machine Learning repository considered in our experiments.}
	\centering
	\fbox{%
		\begin{tabular}{llrr}
			\textbf{dataset} &  \textbf{\#samples} &  \textbf{\#features} &  \textbf{\#classes} \\
			\midrule
			  adult &      32561 &         107 &          2 \\
			  bank &      45211 &          51 &          2 \\
			  car &       1728 &          21 &          4 \\
			  cardio &       2126 &          24 &          3 \\
			  churn &       3333 &          71 &          2 \\
			  default\_cb &      30000 &          23 &          2 \\
			  letter &      20000 &          16 &         26 \\
			  satimage &       5104 &          36 &          6 \\
			  sensorless &      58509 &          48 &         11 \\
			  spambase &       4601 &          57 &          2 \\
			\bottomrule
	\end{tabular}}
\end{table}

\subsubsection{Online learning: comparison of averaged losses} 
\label{sub:comparison_of_regrets}

We compare the curves of averaged losses over time of all the considered online algorithms.
At each round $t$, we reveal a new sample $(x_t, y_t)$ and update all algorithms using this new sample. 
Then, we ask all algorithms to give a prediction $\pred_{t+1}$ of the label $y_{t+1}$ associated to $x_{t+1}$, and compute the log-loss $\ell(\pred_{t+1}, y_{t+1})$ incurred by all algorithms.
Along the rounds $t=1, \ldots, n-1$ when the considered data has sample size $n$, we compute the average loss $\frac{1}{t-1} \sum_{s=1}^{t-1} \ell(\pred_{s+1}, y_{s+1})$.
This is what is displayed in Figure~\ref{fig:all_regrets} below, on 10 datasets for the online procedures AMF, Dummy, SGD and MF. 

\begin{figure}[htbp]
  \centering
  \includegraphics[width=0.195\textwidth]{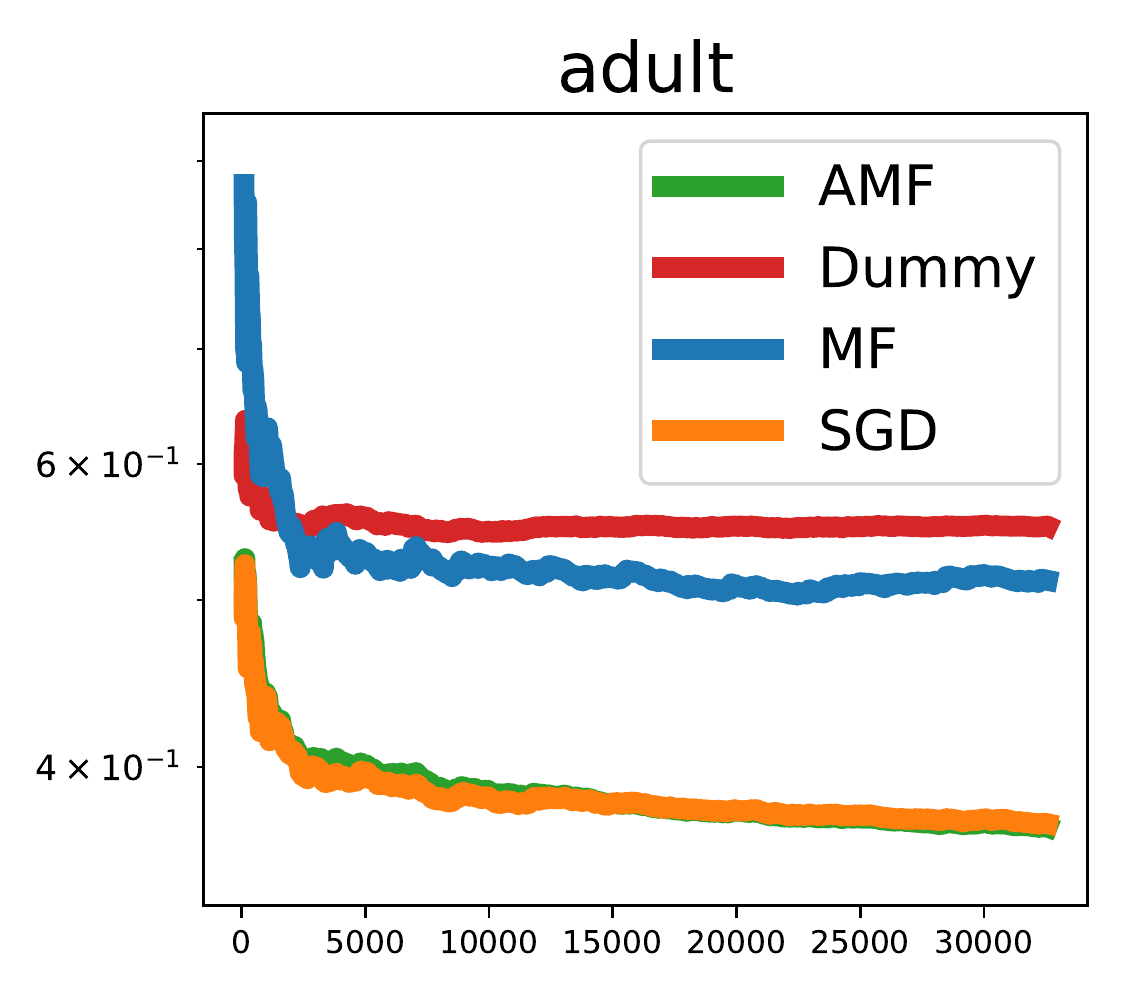}
  \includegraphics[width=0.195\textwidth]{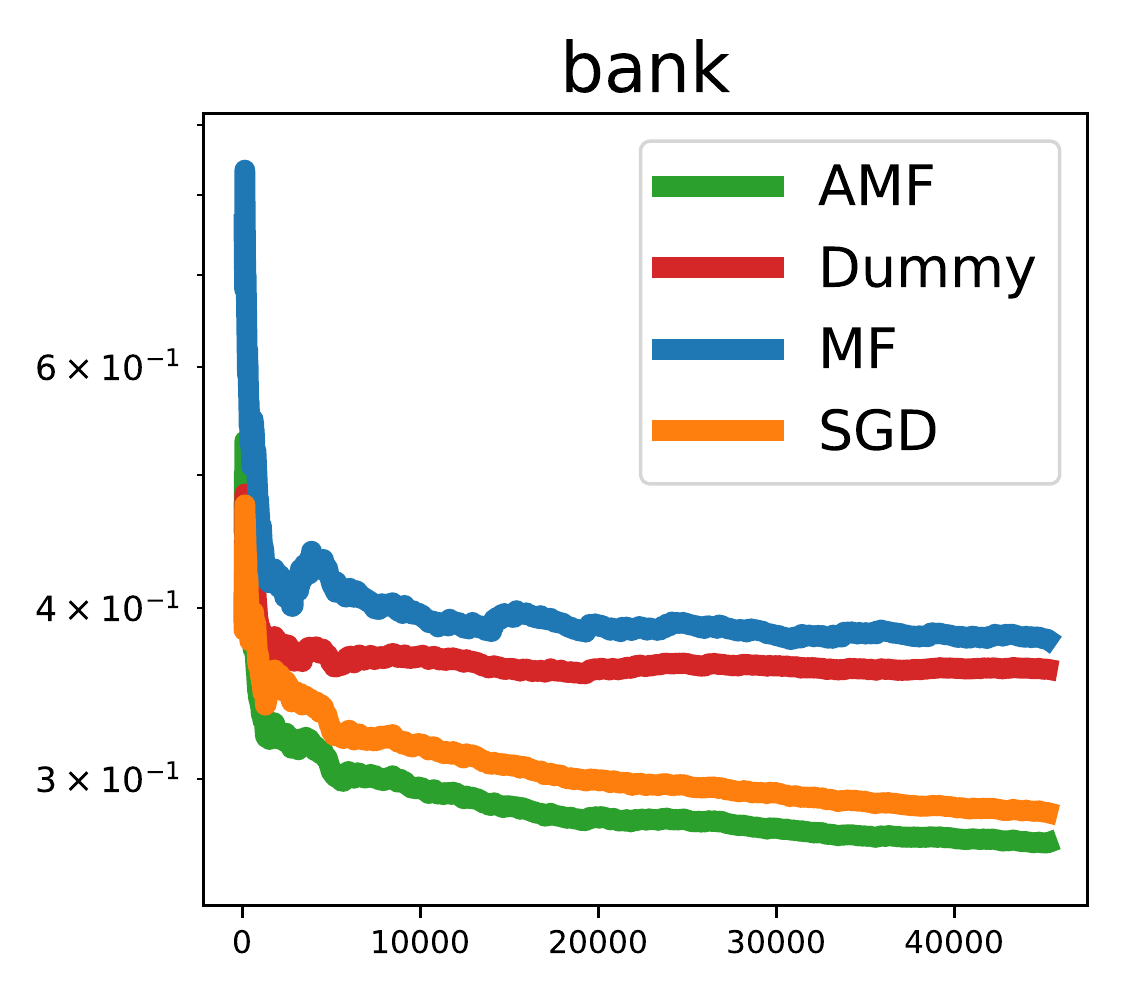}%
  \includegraphics[width=0.195\textwidth]{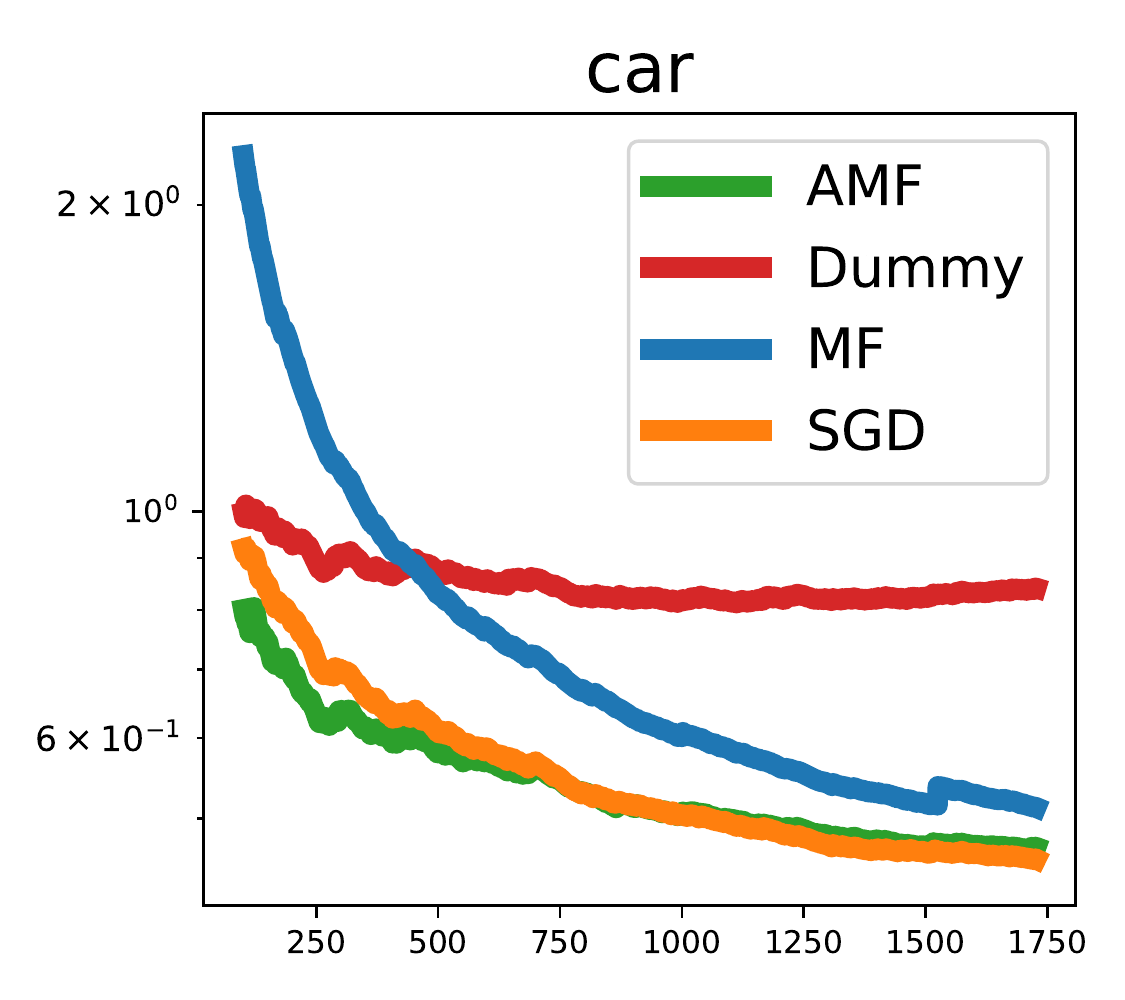}
  \includegraphics[width=0.195\textwidth]{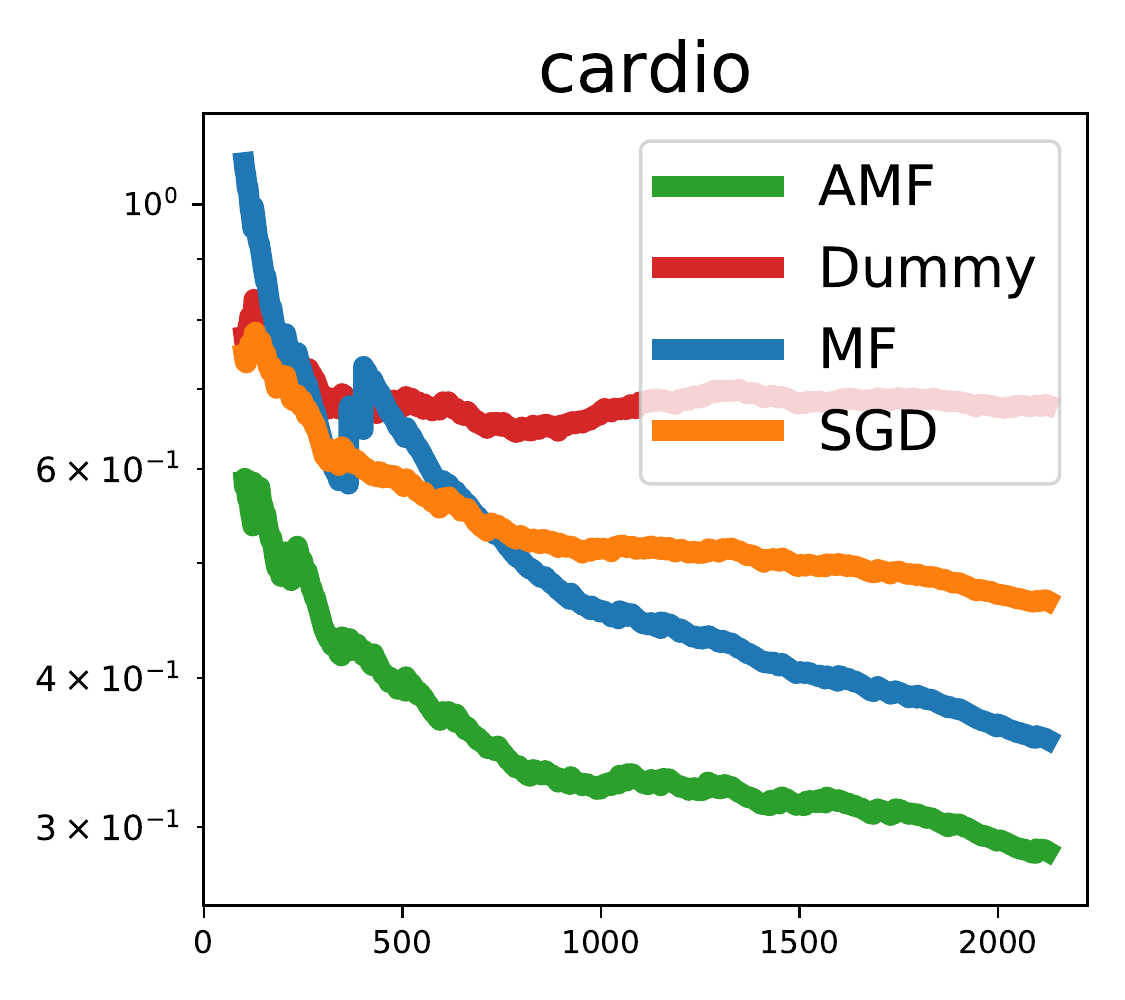}
  \includegraphics[width=0.195\textwidth]{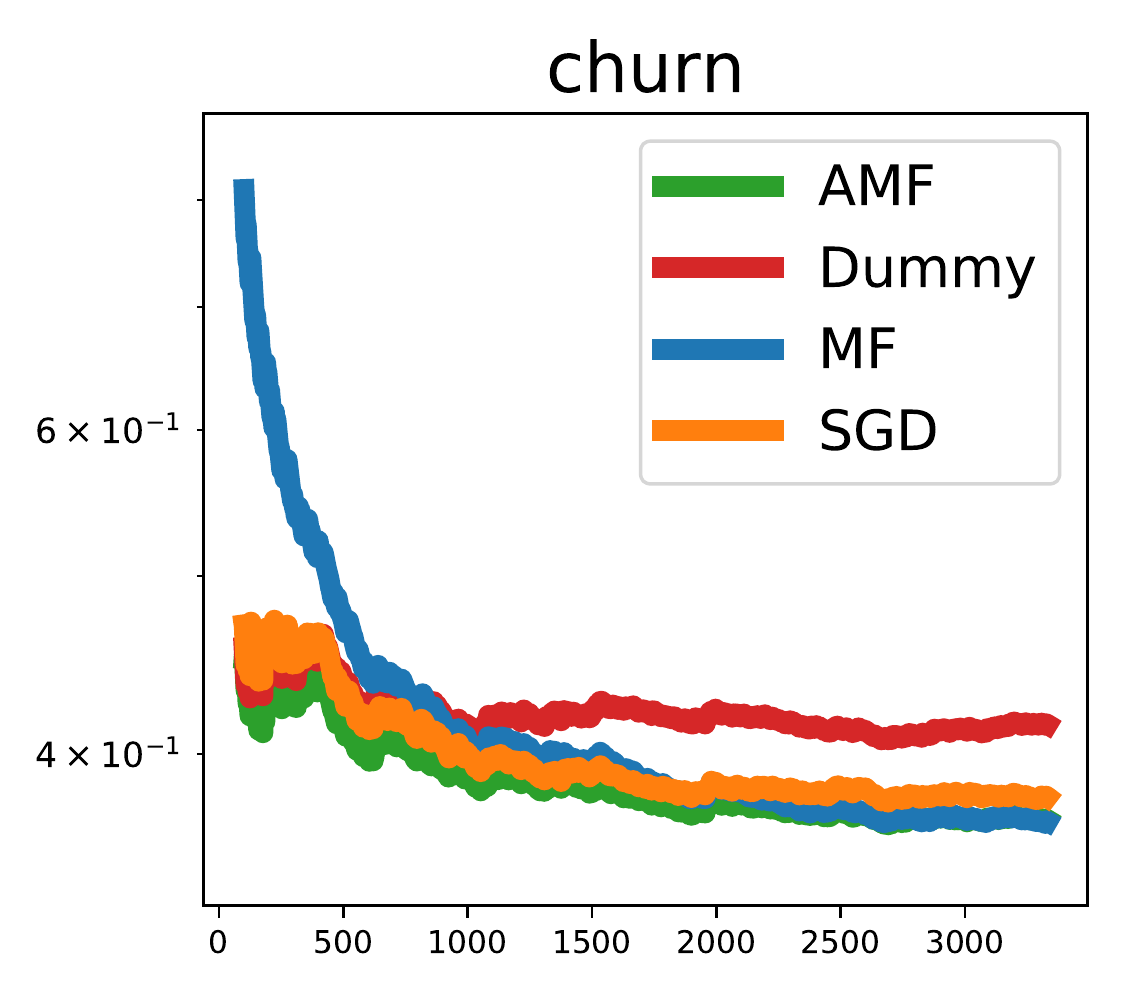}%

  \includegraphics[width=0.195\textwidth]{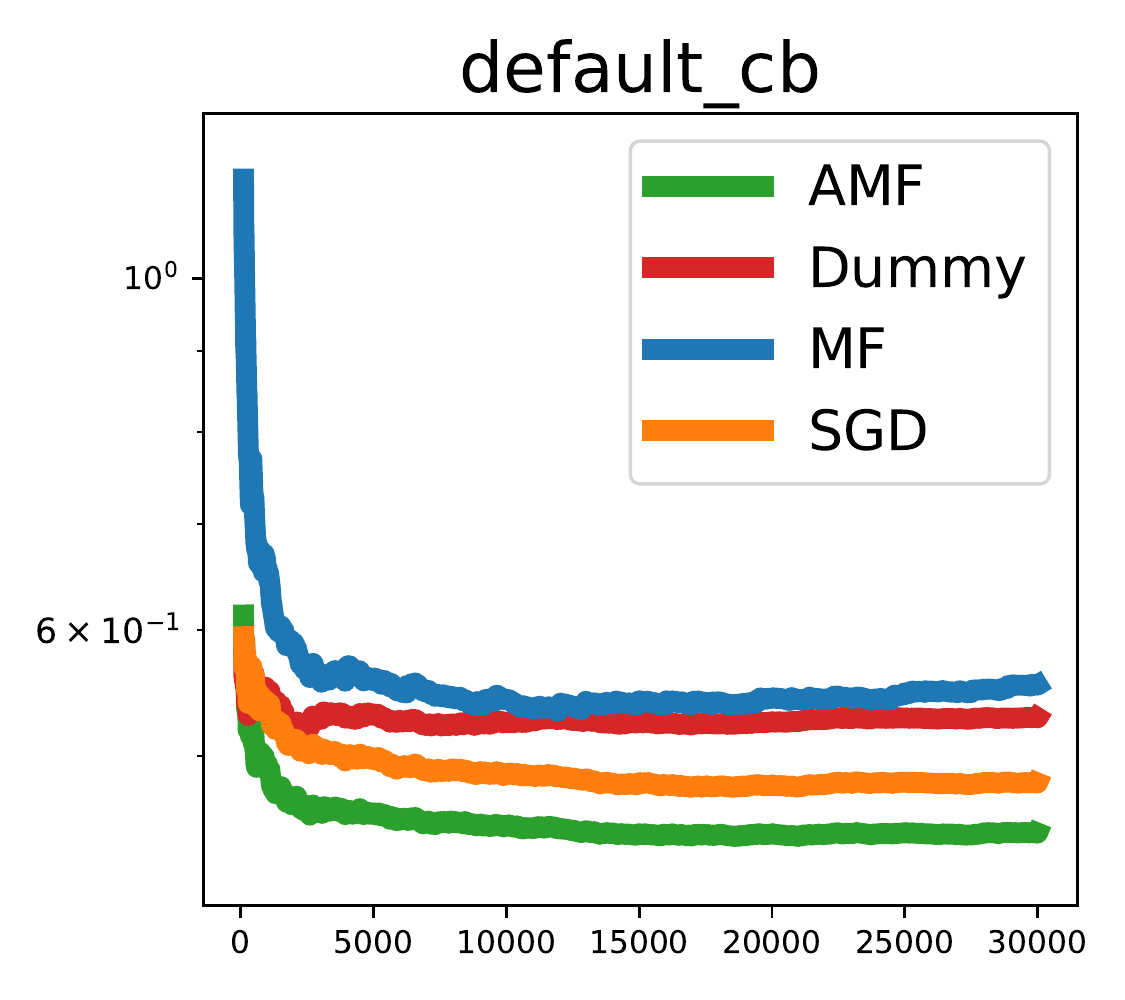}
  \includegraphics[width=0.195\textwidth]{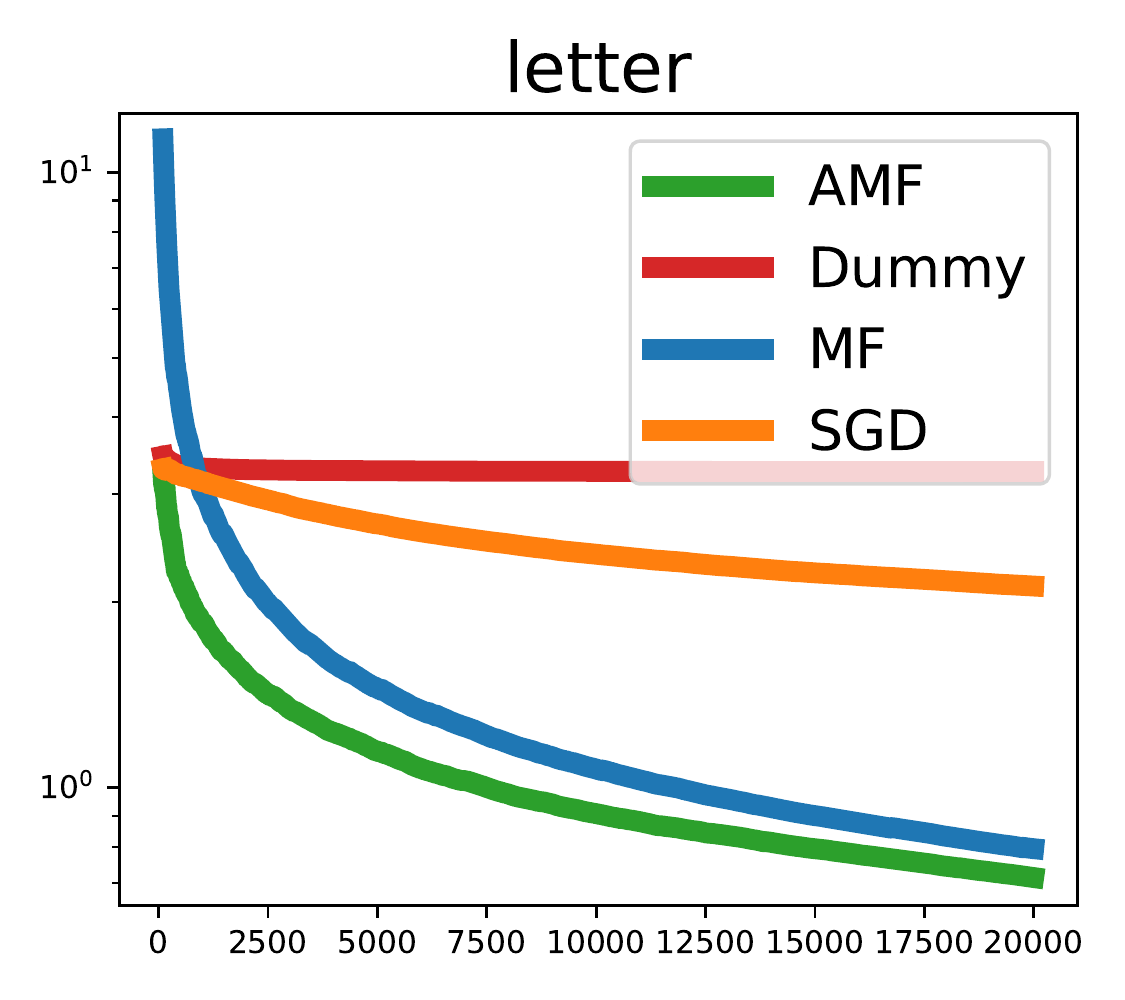}  
  \includegraphics[width=0.195\textwidth]{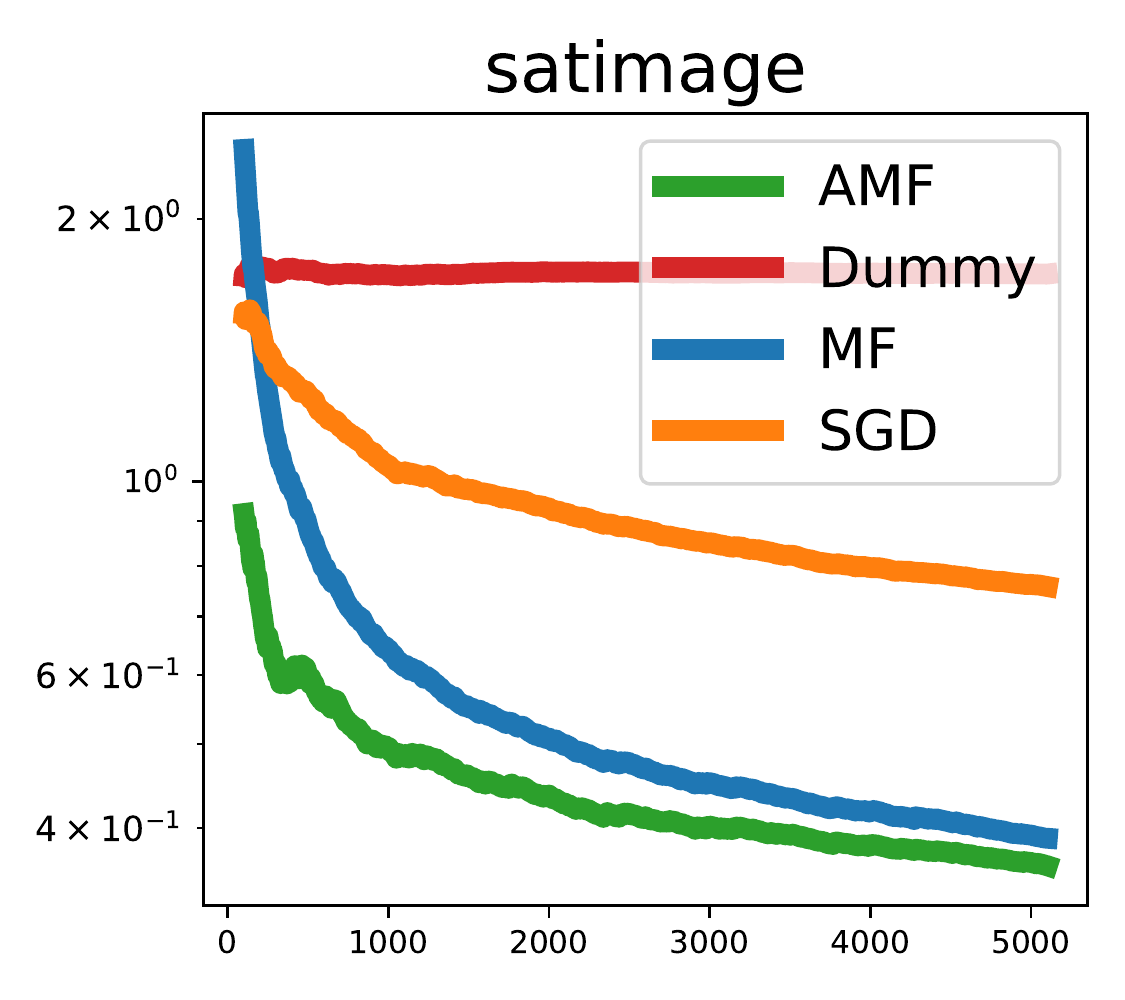}
  \includegraphics[width=0.195\textwidth]{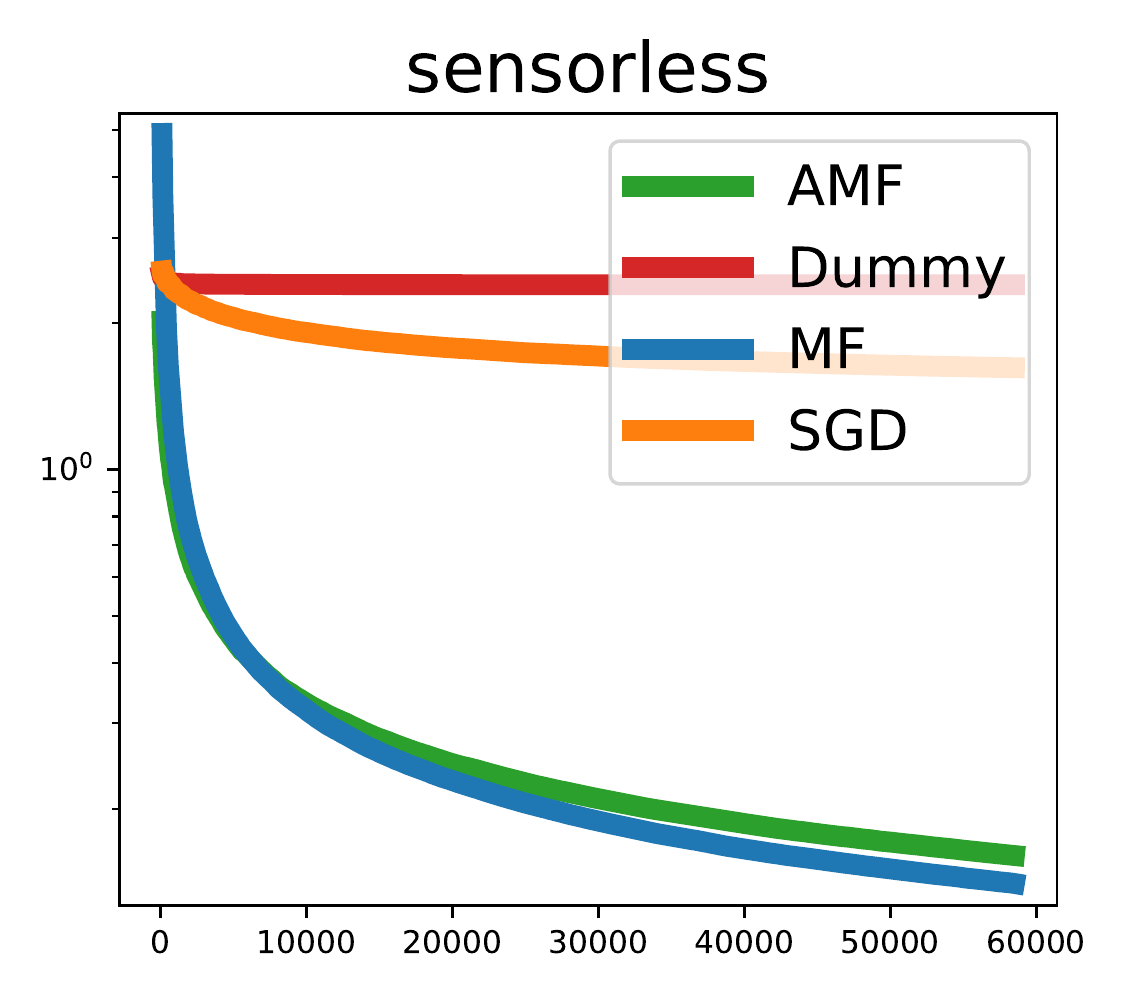}
  \includegraphics[width=0.195\textwidth]{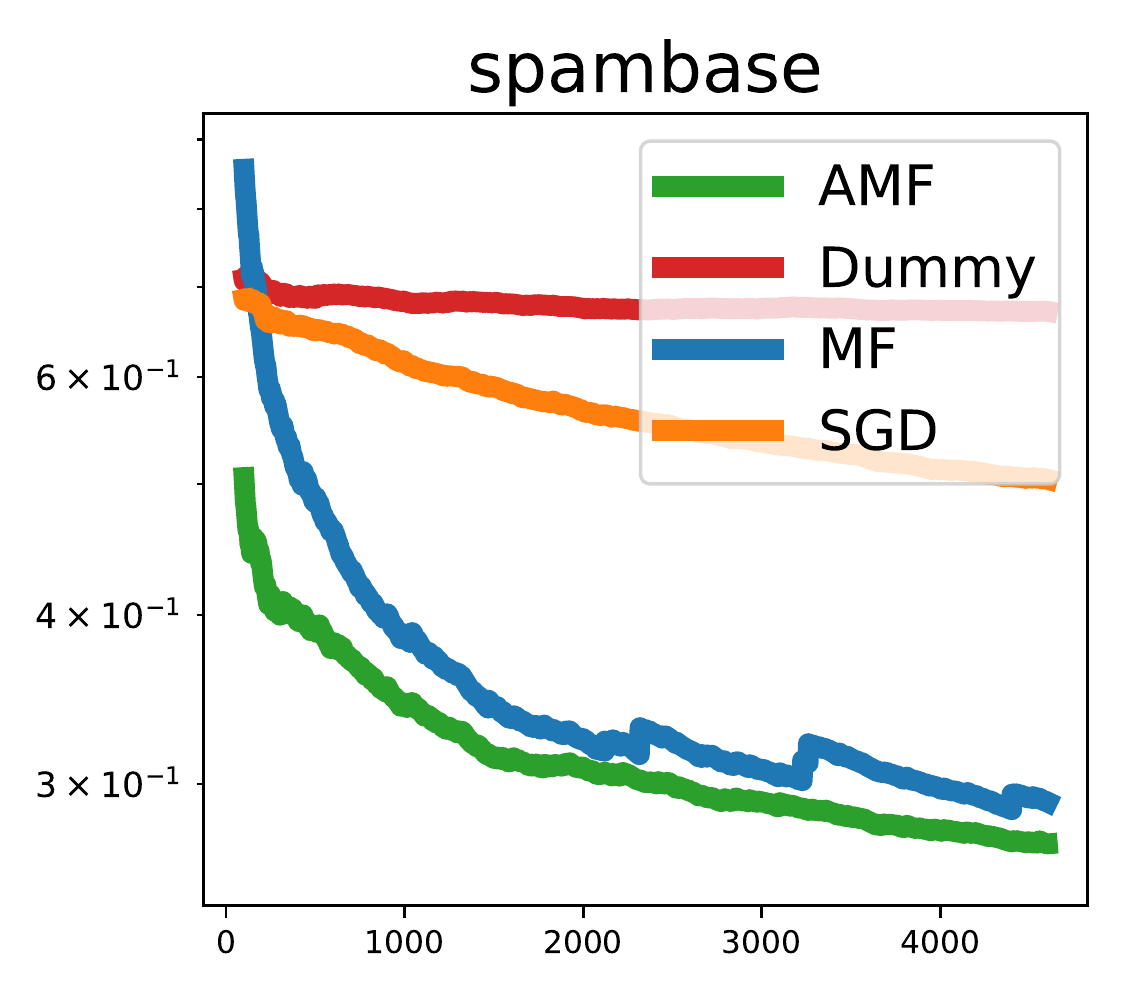}

  \caption{Average {losses} 
    of all the online algorithms considered on 10 datasets for multi-class classification. The $x$-axis corresponds to the step $t$ (number of samples revealed) and the $y$-axis is the value of average regret obtained until this step (the lower the better). 
    AMF almost always exhibits the smallest average loss on all the considered datasets.}
  \label{fig:all_regrets}
\end{figure}

On most datasets, AMF exhibits the smallest average loss, and is always competitive with respect to the considered baselines.
As a comparison, the performance of SGD and MF strongly varies depending on the dataset: the ``robustness'' of AMF comes from the aggregation algorithm it uses, which always produces a non-overfitting and smooth decision function, as illustrated in Figure~\ref{fig:decisions} above, even in the early iterations.
This is confirmed by the early values of the average losses observed in all displays in Figure~\ref{fig:all_regrets}, where we see that it is always the smallest compared to all the baselines.

\subsubsection{Online versus Batch learning} 
\label{sub:comparisons_for_batch_learning}

In this Section, we consider a ``batch'' setting, where we hold out a test dataset (containing $30\%$ of the whole data), and we consider only binary classification problems, in which all methods are assessed using the area under the ROC curve (AUC) on the test dataset.
We consider the datasets \texttt{adult}, \texttt{bank}, \texttt{default}, \texttt{spambase} and all the methods (online and batch) described in Section~\ref{sub:description_of_the_considered_procedures}.
The performances of batch methods (RF and ET) are assessed only once using the test set, since these methods are not trained in an online fashion, but rather at once.
Therefore, the test AUCs of these batch methods are displayed in Figure~\ref{fig:batch_test_aucs} as a constant horizontal line along the iterations.
Online methods (AMF, Dummy, SGD and MF) are tested every $100$ iterations: each time $100$ samples are revealed, we produce predictions on the full test dataset, and report the corresponding test AUCs in Figure~\ref{fig:batch_test_aucs}.
We observe that, as more samples are revealed, the online methods improve their test AUCs.

\begin{figure}[htbp]
  \centering
  \includegraphics[width=0.24\textwidth]{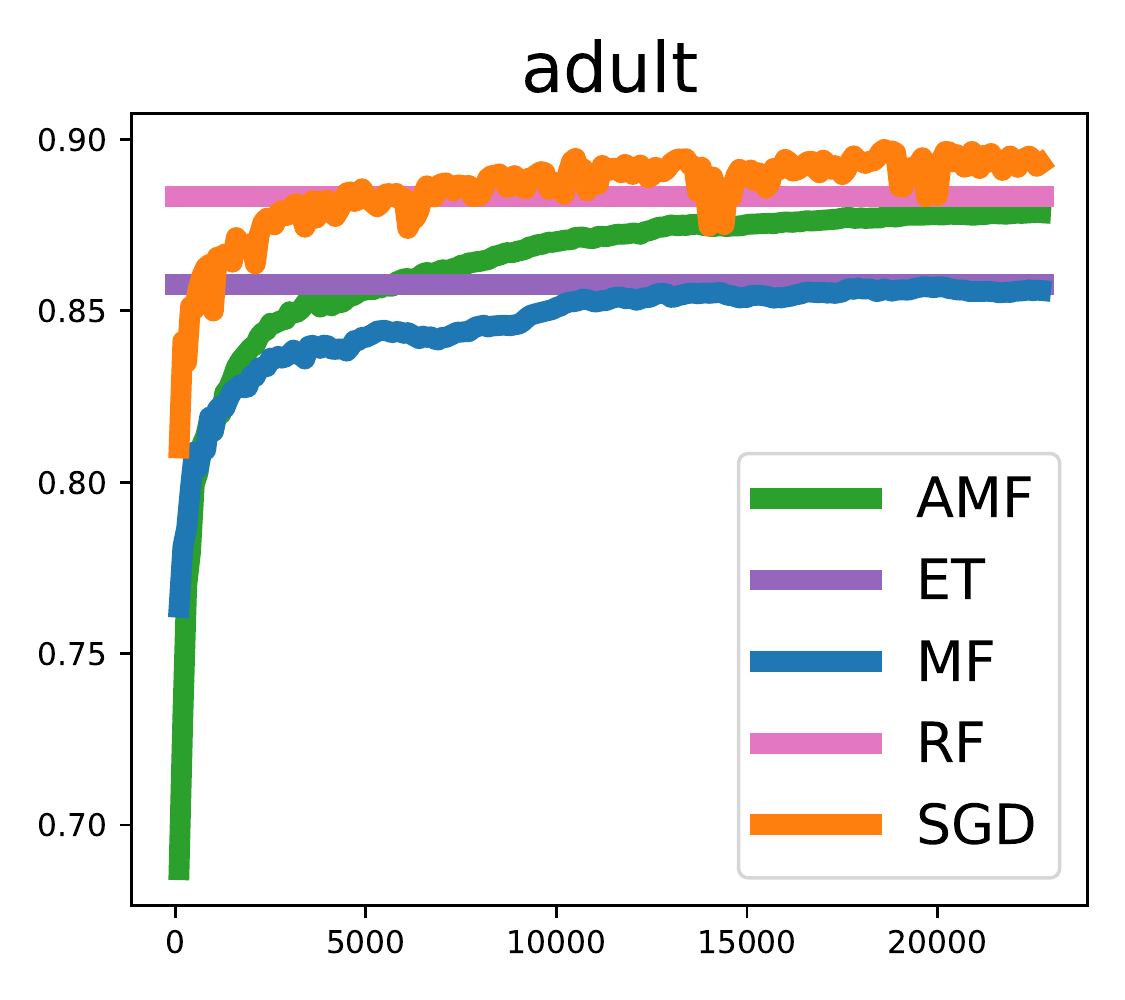}
  \includegraphics[width=0.24\textwidth]{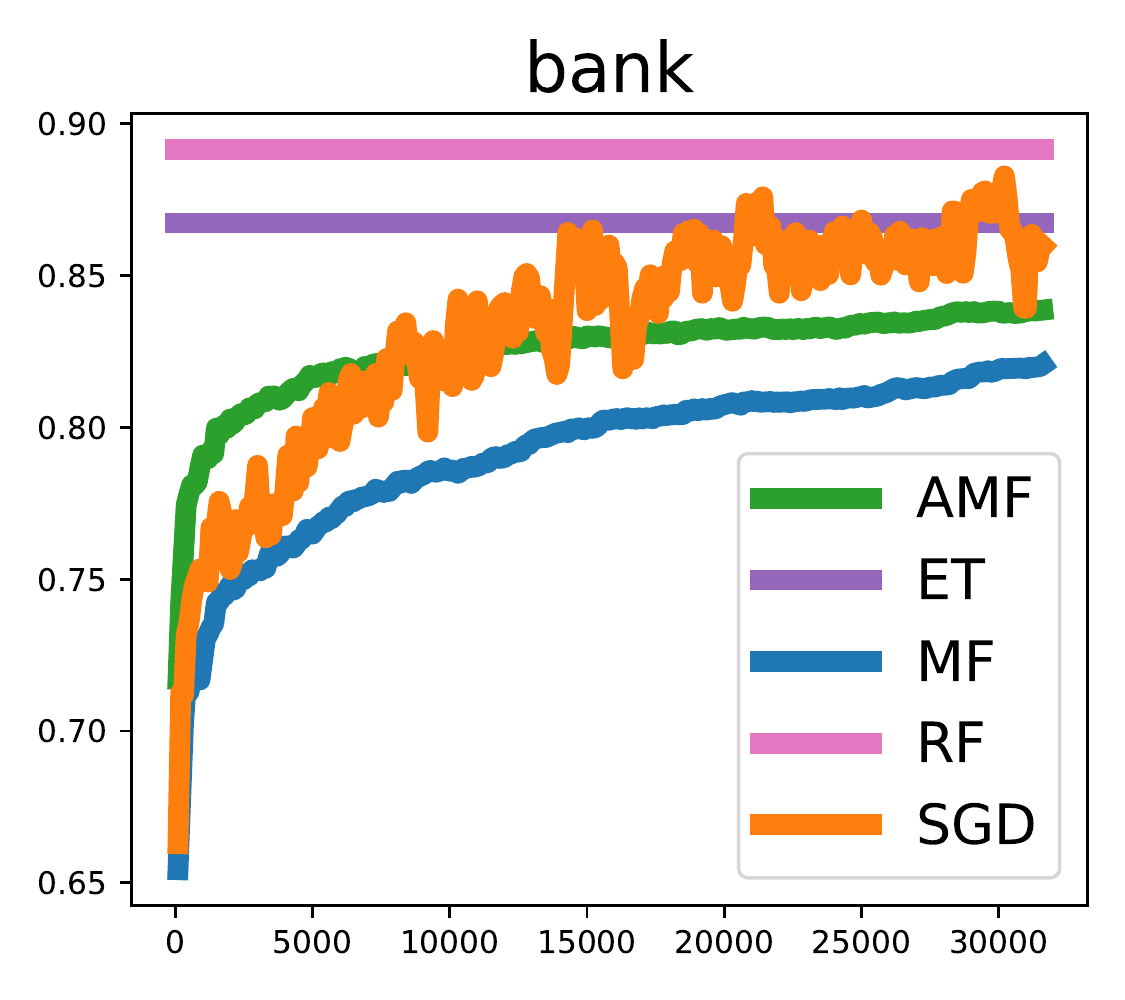}
  \includegraphics[width=0.24\textwidth]{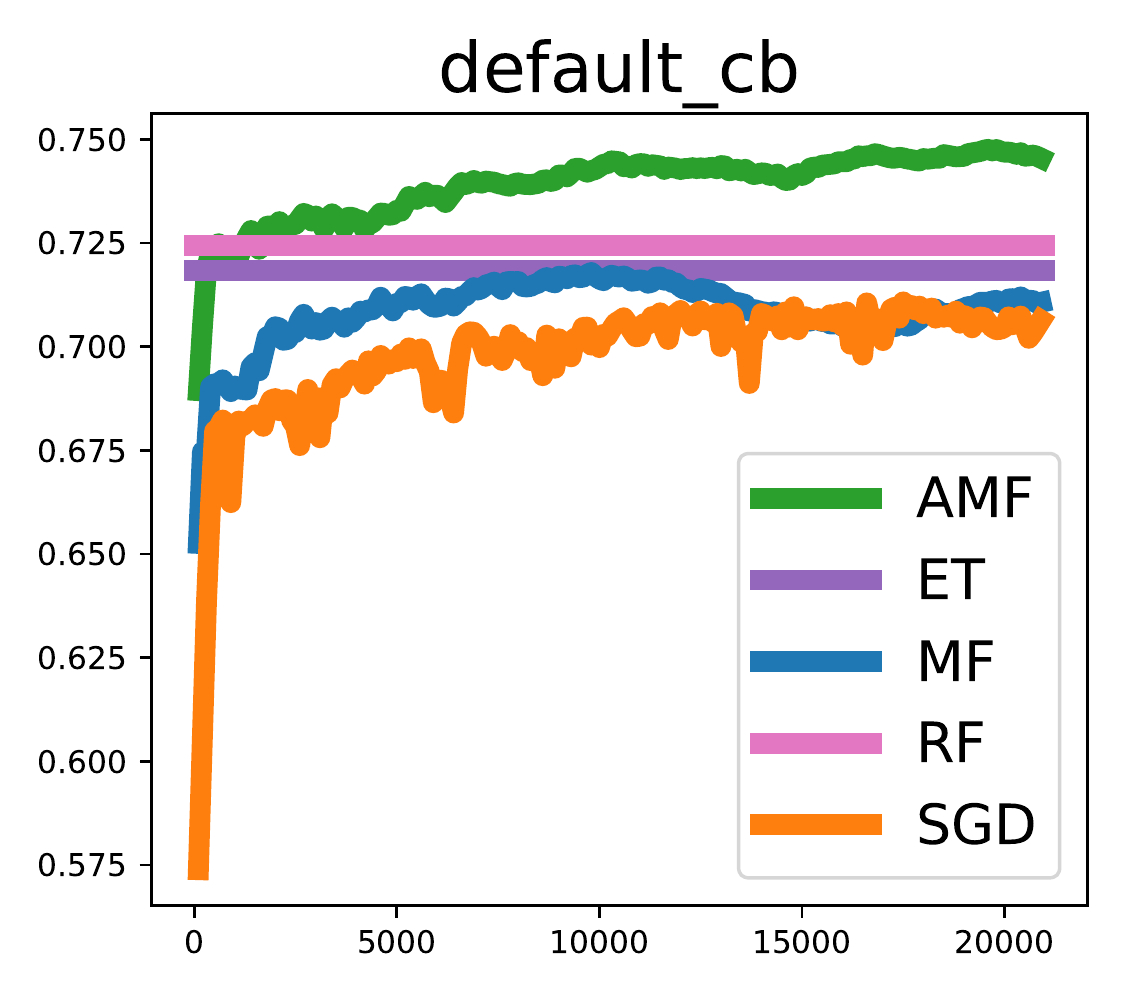}
  \includegraphics[width=0.24\textwidth]{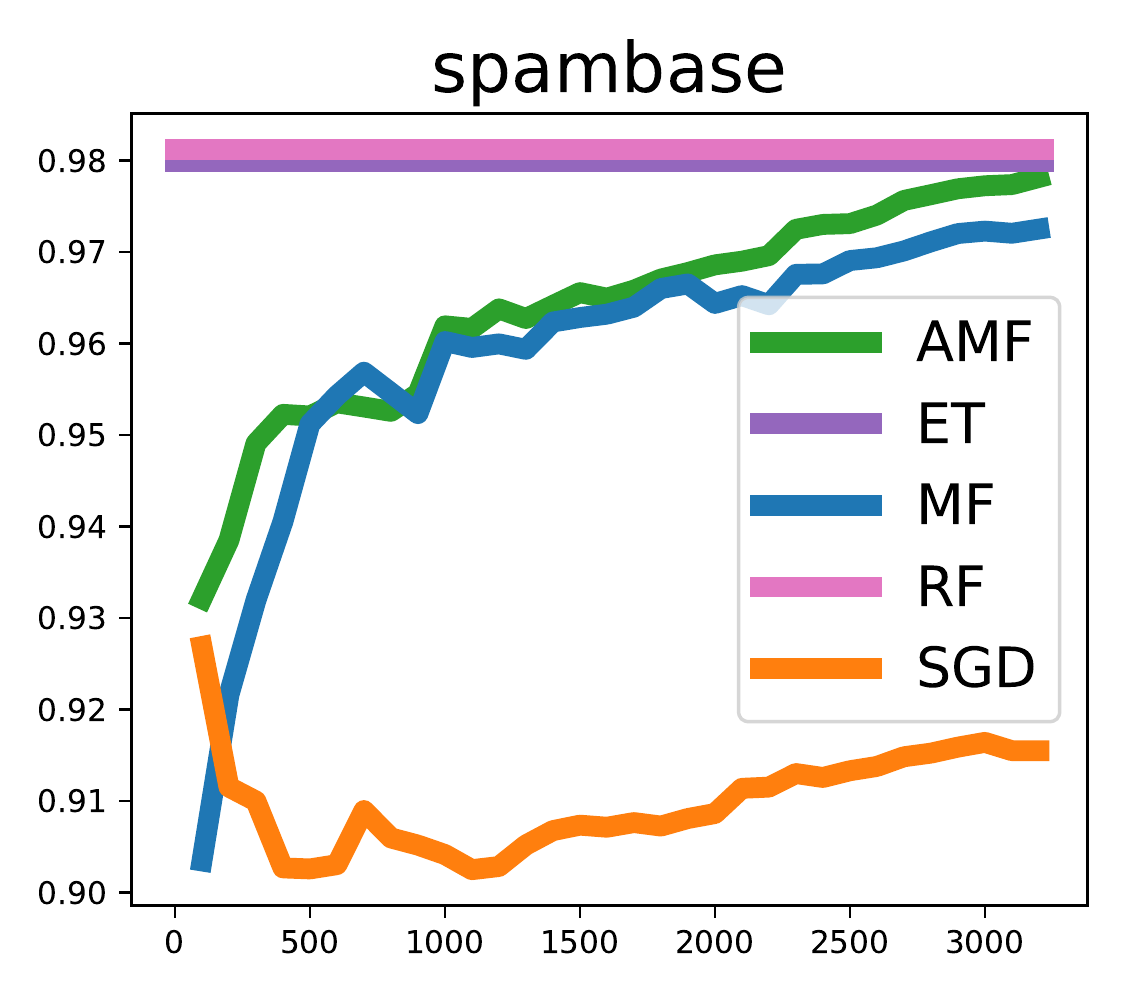}

  \caption{Area under the ROC curve (AUC) obtained on a held-out testing dataset (30\% of the whole data) obtained by batch methods (RF and ET) and online methods (SGD, MF and AMF) on four binary classification datasets. The $x$-axis corresponds to the online steps (number of samples seen) over the train dataset. AMF is very competitive and always achieve a good AUC after a few steps. 
  In the \texttt{defaultcb} dataset, AMF even improves the test AUC of RF and ET.}
  \label{fig:batch_test_aucs}
\end{figure}

We observe that the batch methods RF and ET generally perform best; this ought to be expected, since their splits are optimized using training data, while those of AMF and MF are chosen on-the-fly.
However, the performance of AMF is very competitive against all baselines.
In particular, it performs better than MF and even improves upon ET and RF on the \verb|default_cv| dataset.

\subsubsection{Sensitivity to the number of trees} 
\label{sub:sensitivity_to_the_number_of_trees}

The aim of this Section is to exhibit another positive effect of the aggregation algorithm used in AMF.
Indeed, we illustrate in Figure~\ref{fig:n_trees} below the fact that AMF can achieve good performances using less trees than MF, RF and ET.
This comes from the fact that even a single tree in AMF can be a good classifier, since the aggregation algorithm used in it (see Section~\ref{sub:agg-ctw}) aggregates all the prunings of the Mondrian tree.
This allows to avoid overfitting, even when a single tree is used, as opposed to the other tree-based methods considered here.
We consider in Figure~\ref{fig:n_trees} the same experimental setting as in Section~\ref{sub:comparisons_for_batch_learning}, and compare the test AUCs obtained on four binary classification problems for an increasing number of trees in all methods.
The test AUCs obtained by all algorithms with $1, 2, 5, 10, 20$ and $50$ trees are displayed in Figure~\ref{fig:n_trees}, where the $x$-axis corresponds to the number of trees and the $y$-axis corresponds to the test AUC.

\begin{figure}[htbp]
  \centering
  \includegraphics[width=\textwidth]{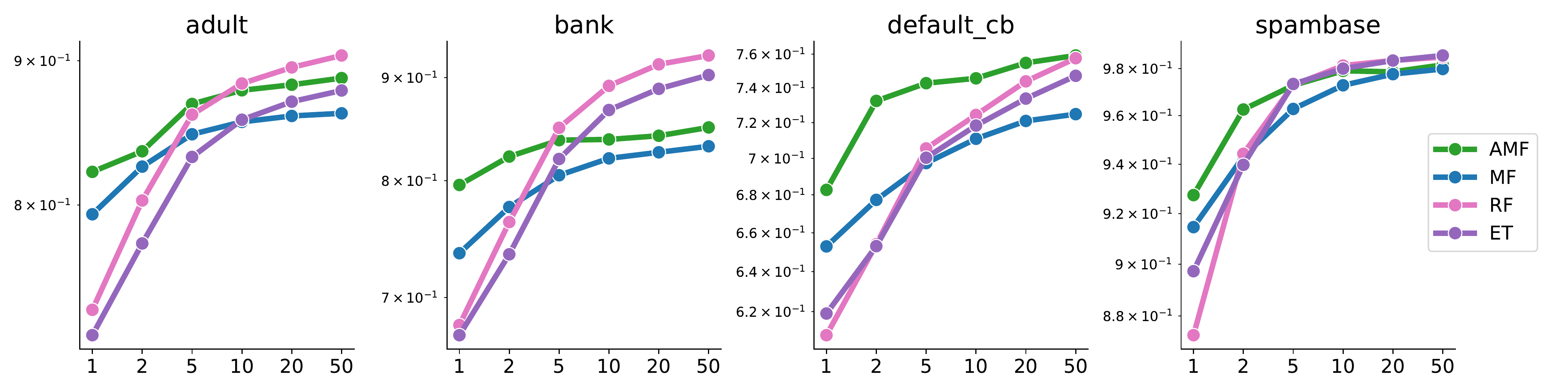}

  \caption{Area under the ROC curve (AUC) obtained on a held-out testing dataset (30\% of the whole data) obtained by AMF, MF, RF and ET as a function of the number of trees used.
    We observe that AMF is less sensitive to the number of trees used in the forest than all the baselines, and that it has good performances even when using one or two trees.}
  \label{fig:n_trees}
\end{figure}

We observe that when using one or two trees, AMF performs better than all the baselines.
The performance of RF strongly increases with an increasing number of trees, and ends up with the best performances with 50 trees.
The performance of AMF also improves when more trees are used (averaging over several realizations of the Mondrian partition certainly helps prediction), but the aggregation algorithm makes AMF somehow less sensitive to the number of trees in the forest.

\section{Conclusion}
\label{sec:conclusion}

In this paper we introduced AMF, an online random forest algorithm based on a combination of Mondrian forests and an efficient implementation of an aggregation algorithm over all the prunings of the Mondrian partition.
This algorithm is almost parameter-free, and has strong theoretical guarantees expressed in terms of regret with respect to the optimal time-pruning of the Mondrian partition, and in terms of adaptation with respect to the smoothness of the regression function.  
We illustrated on a large number of datasets the performances of AMF compared to strong baselines, where AMF appears as an interesting procedure for online learning.

A limitation of AMF, however, is that it does not perform feature selection.
It would be interesting to develop an online feature selection procedure that could indicate along which coordinates the splits should be sampled in Mondrian trees, and prove that such a procedure performs dimension reduction in some sense.
This is a challenging question in the context of online learning which deserves future  investigations.

\section{Proofs}
\label{sec:proofs}

This Section gathers the proofs of all the results of the paper, following their order of appearance, namely the proofs of Proposition~\ref{prop:algorithm-implements-aggregation}, Lemma~\ref{lem:regret-pruning}, Corollaries~\ref{cor:regret-best-pruning-log},~\ref{cor:regret-best-pruning-square} and~\ref{cor:regret-lambda-regression}, Lemma~\ref{lem:online-to-batch} and Theorems~\ref{thm:oracle-best-lambda} and~\ref{thm:minimax-adaptive}.

\subsection{Proof of Proposition~\ref{prop:algorithm-implements-aggregation}}
\label{sec:proof-proposition}

Consider a realization $\mondrian = (\tree^\mondrian, \Sigma^\mondrian) \sim \MP$ of the infinite Mondrian partition, and assume that we are at step $t \geq 1$, namely we observed $(x_1, y_1), \ldots, (x_{t-1}, y_{t-1})$ and performed the updates described in Algorithm~\ref{alg:mondrian-update} on each sample.
Given $x \in [0, 1]^d$, we want to predict the label (or its distribution) using
\begin{equation}
  \label{eq:ftx_proof}
 \widehat f_t (x) = \frac{\sum_{\tree \subset \tree^\mondrian} \pi (\tree) e^{-\eta L_{t-1} (\tree)} \pred_{\tree, t} (x)}{\sum_{\tree \subset \tree^\mondrian} \pi (\tree) 
 e^{-\eta L_{t-1} (\tree)}},
\end{equation}
where we recall that $\pi (\tree) = 2^{- | \tree |}$ with $|\tree|$ the number of nodes in $\tree$ and where we recall that the sum in~\eqref{eq:ftx_proof} is an infinite sum over all subtrees $\tree$ of $\tree^\mondrian$.

\paragraph{Reduction to a finite sum.}

Let $\globaltree$ denote the minimal subtree of $\tree^\mondrian$ that separates the elements of $\{ x_1, \dots, x_{t-1}, x \}$ (if $x = x_{t}$ then
 $\globaltree = \tree_{t+1}$).
Also, for every finite tree $\tree$, denote $\tree|_\globaltree := \tree \cap \globaltree$.
For any subtree $\tree$ of $\globaltree$, we have
\begin{equation}
  \label{eq:sum-subtree-restrict}
  \sum_{\tree' : \tree'|_\globaltree = \tree} \pi (\tree') = 2^{- \| \tree \|}
  =: \pi_\globaltree (\tree),
\end{equation}
where $\|\tree \|$ denotes the number of nodes of $\tree$ which are not leaves of $\globaltree$; note that $\pi_\globaltree$ is a probability distribution on the subtrees of $\globaltree$, since $\pi$ is a probability distribution on finite subtrees of $\{ 0, 1\}^*$.
To see why Equation~\eqref{eq:sum-subtree-restrict} is true, consider the following representation of $\pi$: let $(B_\node)_{\node \in \{ 0, 1 \}^*}$ be an \iid family of Bernoulli random variables with parameter $1 / 2$; a node $\node$ is said to be \emph{open} if $B_\node = 1$, and \emph{closed} otherwise.
Then, denote $\tree'$ the subtree of $\{ 0, 1\}^*$ all of whose interior nodes are open, and all of whose leaves are closed; clearly, $\tree' \sim \pi$.
Now, $\tree'|_\globaltree = \tree$ if and only if all interior nodes of $\tree$ are open and all leaves of $\tree$ except leaves of $\globaltree$ are closed.
By independence of the $B_\node$, this happens with probability $2^{-\| \tree \|}$.

In addition, note that if $\tree'$ is a finite subtree of $\{ 0, 1\}^*$ and $\tree =\tree'|_\globaltree$, then $\pred_{\tree', t} (x) = \pred_{\tree, t} (x)$.
Indeed, let $\node_{\tree'}(x)$ be the leaf of $\tree'$ that contains $x$; if $\node_{\tree'} (x) \in \globaltree$, then $\node_{\tree'} (x) = \node_{\tree}(x)$ and hence $\pred_{\tree', t} (x) = \pred_{\node_{\tree'}(x), t} = \pred_{\node_{\tree}(x), t} = \pred_{\tree, t}(x)$; otherwise, by definition of $\globaltree$, both $\node_{\tree'}(x)$ and $\node_\tree(x)$ only contain the $x_s$ ($s \leq t-1$) such that $x_s = x$, so that again $\pred_{\node_{\tree'}(x), t} = \pred_{\node_{\tree}(x), t}$.
Similarly, this result for $x = x_t$ also holds for $x_s$, $s \leq t-1$, so that $L_{t-1} (\tree') = L_{t-1} (\tree)$.
From the points above, it follows that
\begin{align}
  \label{eq:infinite-to-finite-aggregation}
  \wh f_t (x)
  &= \frac{\sum_{\tree \subset \tree^\mondrian} \pi (\tree) e^{-\eta L_{t-1} (\tree)} \pred_{\tree, t} (x)}{\sum_{\tree \subset \tree^\mondrian} \pi (\tree) e^{-\eta L_{t-1} (\tree)}} \nonumber \\
  &= \frac{\sum_{\tree \subset \tree^\mondrian} \sum_{\tree' : \tree'|_\globaltree = \tree} \pi (\tree') e^{-\eta L_{t-1} (\tree')} \pred_{\tree', t} (x)}{\sum_{\tree \subset \tree^\mondrian} \sum_{\tree' : \tree'|_\globaltree = \tree} \pi (\tree') e^{-\eta L_{t-1} (\tree')}} \nonumber \\
  &= \frac{\sum_{\tree \subset \tree^\mondrian} \sum_{\tree' : \tree'|_\globaltree = \tree} \pi (\tree') e^{-\eta L_{t-1} (\tree)} \pred_{\tree, t} (x)}{\sum_{\tree \subset \tree^\mondrian} \sum_{\tree' : \tree'|_\globaltree = \tree} \pi (\tree') e^{-\eta L_{t-1} (\tree)}} \nonumber \\
  &= \frac{\sum_{\tree \subset \globaltree} \ \pi_\globaltree (\tree) e^{-\eta L_{t-1} (\tree)} \pred_{\tree, t} (x)}{\sum_{\tree \subset \globaltree} \pi_\globaltree (\tree) e^{-\eta L_{t-1} (\tree)}}
    \, .
\end{align}


\paragraph{Computation for the finite tree $\globaltree$.}
The expression in Equation~\eqref{eq:infinite-to-finite-aggregation} involves finite sums, over all subtrees of $\globaltree$ (involving an exponential in the number of leaves of $\globaltree$, namely $t$, terms).
However, it can be computed efficiently because of the specific choice of the prior $\pi$.
More precisely, we will use the following lemma \cite[Lemma~1]{helmbold1997pruning} several times to efficiently compute sums of products.
Let us recall that $\nodes(\globaltree)$ stands for the set of nodes of~$\globaltree$.

\begin{lemma}
  \label{lem:ctw-sum-prod}
  Let $g: \nodes(\globaltree) \to \R$ be an arbitrary function and define $G: \nodes(\globaltree) \to \R$ as
  \begin{equation}
  \label{eq:ctw-sum-prod-def}
  G (\node) = \sum_{\tree_\node} 2^{- \| \tree_\node \|} 
  \prod_{\othernode \in \leaves(\tree_\node)} g(\othernode),
  \end{equation}
  where the sum is over all subtrees $\tree_\node$ of $\globaltree$ rooted 
  at $\node$. 
  Then\textup, $G (\node)$ can be computed recursively as follows\textup:
  \begin{equation*}
    G(\node) = 
    \begin{cases}
      g(\node) & \text{ if } \node \in \leaves(\globaltree) \\
      \frac{1}{2} g(\node) + \frac{1}{2} G(\node 0) G(\node 1) & \text{ otherwise,}
    \end{cases}
  \end{equation*}
  for each node $\node \in \nodes(\globaltree)$.
\end{lemma}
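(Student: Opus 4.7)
The plan is to prove Lemma~\ref{lem:ctw-sum-prod} by a straightforward case analysis on the structure of a subtree $\tree_\node$ rooted at $\node$, exploiting the multiplicative nature of the weight $2^{-\|\tree_\node\|}$ and the product $\prod_{\othernode \in \leaves(\tree_\node)} g(\othernode)$.

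First I would dispatch the base case: if $\node \in \leaves(\globaltree)$, then the only subtree of $\globaltree$ rooted at $\node$ is the singleton $\tree_\node = \{\node\}$. By the definition of $\|\cdot\|$ recalled just before the lemma (number of nodes of $\tree_\node$ that are \emph{not} leaves of $\globaltree$), we have $\|\{\node\}\| = 0$, and $\leaves(\{\node\}) = \{\node\}$, hence $G(\node) = 2^{0} \cdot g(\node) = g(\node)$, as claimed.

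For the inductive step, assume $\node \in \inodes(\globaltree)$ and split the subtrees $\tree_\node$ rooted at $\node$ into two disjoint families. The \emph{trivial} family consists solely of $\tree_\node = \{\node\}$, where $\node$ itself is a leaf of $\tree_\node$ but an interior node of $\globaltree$, so $\|\{\node\}\| = 1$ and this family contributes $2^{-1} g(\node) = \frac{1}{2} g(\node)$. The \emph{branching} family consists of subtrees where $\node$ is an interior node of $\tree_\node$; such a subtree decomposes uniquely as $\tree_\node = \{\node\} \cup \tree_{\node 0} \cup \tree_{\node 1}$ where $\tree_{\node 0}$, $\tree_{\node 1}$ range independently over subtrees of $\globaltree$ rooted at $\node 0$, $\node 1$ respectively. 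Under this decomposition $\|\tree_\node\| = 1 + \|\tree_{\node 0}\| + \|\tree_{\node 1}\|$ (the $+1$ accounting for the interior node $\node$, which is not a leaf of $\globaltree$), and $\leaves(\tree_\node) = \leaves(\tree_{\node 0}) \sqcup \leaves(\tree_{\node 1})$, so the product over leaves factorizes.

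Combining these observations, the branching family contributes
\begin{equation*}
\frac{1}{2} \sum_{\tree_{\node 0}} \sum_{\tree_{\node 1}} 2^{-\|\tree_{\node 0}\|} 2^{-\|\tree_{\node 1}\|} \prod_{\othernode \in \leaves(\tree_{\node 0})} g(\othernode) \prod_{\othernode \in \leaves(\tree_{\node 1})} g(\othernode) = \frac{1}{2} G(\node 0) G(\node 1),
\end{equation*}
by definition of $G$ at the two children. Adding the two contributions yields the stated recursion $G(\node) = \frac{1}{2} g(\node) + \frac{1}{2} G(\node 0) G(\node 1)$. There is no real obstacle here; the only point to handle with care is the bookkeeping of $\|\tree_\node\|$ in the definition, i.e. remembering that leaves of $\globaltree$ carry weight $0$ while other nodes carry weight $1$, which is exactly what makes the $+1$ appear for the interior case and disappear for the leaf case.
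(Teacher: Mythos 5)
Your proof is correct and complete: the base case, the splitting of subtrees rooted at an interior node $\node$ into the trivial subtree $\{\node\}$ (weight $2^{-1}$ since $\node\notin\leaves(\globaltree)$) and the branching subtrees whose weight and leaf-product factorize over the two children, is exactly the standard structural induction behind this identity. The paper does not prove the lemma itself but imports it from \citet{helmbold1997pruning}, so your argument simply supplies the elementary proof that the paper delegates to that reference.
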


Let us introduce 
\begin{equation*}
w_t (\tree) = \pi_\globaltree (\tree) \exp (- \eta \cumloss_{t-1} (\tree)),
\end{equation*}
so that Equation~\eqref{eq:ftx_proof} writes
\begin{equation}
\label{eq:ewa-tree}
\wh f_t(x) = \frac{\sum_{\tree \subset \globaltree} w_{t} (\tree) \pred_{\tree, t} (x)}{\sum_{\tree \subset \globaltree} w_{t} (\tree)},
\end{equation}
where the sums hold over all subtrees $\tree$ of $\globaltree$.
We will show how to efficiently compute and update the numerator and denominator in Equation~\eqref{eq:ewa-tree}.
Note that $w_t (\tree)$ may be written as
\begin{equation}
\label{eq:weight-tree-decomp}
w_t (\tree) = 2^{-\|\tree\|} \prod_{\node \in \leaves(\tree)} w_{\node, t}
\end{equation}
with $w_{\node, t} = \exp(- \eta \cumloss_{\node, t-1})$, where $\cumloss_{\node, t} := \sum_{s \leq t \pp X_t \in \cell_\node} \ell (\prednode_{\node, s}, y_s)$.

\paragraph{Denominator of Equation~\eqref{eq:ewa-tree}.}
For each node $\node \in \nodes (\globaltree)$ and every $t\geq 1$, denote
\begin{equation}
\label{eq:avg-weights-def}
\wbar_{\node, t} = \sum_{\tree_\node} 2^{-\|\tree_\node\|} \prod_{\node' \in \leaves (\tree_\node)}
w_{\node', t}
\end{equation}
so that~\eqref{eq:weight-tree-decomp} entails
\begin{equation}
\label{eq:wbar-root}
\wbar_{\root, t} = \sum_{\tree} w_{t} (\tree) \, .
\end{equation}
Using Equation~\eqref{eq:avg-weights-def}, the weights $\wbar_{\node, t}$ can be computed recursively using Lemma~\ref{lem:ctw-sum-prod}.
We denote by $\pathpoint(x_t)$ the path from $\root$ to $\node_\globaltree(x_t)$ (from the root to the leaf containing $x_t$).
Note that, by definition of $w_{\node, t}$, if $\node \not \in \pathpoint(x_t)$ (namely $x_t \not \in \cell_\node$), we have $w_{\node, t+1} = w_{\node, t}$.
In addition, if $\node \not \in \pathpoint(x_t)$, so are all its descendants, so that (by induction, and using the above recursive formula) $\wbar_{\node, t+1} = \wbar_{\node, t}$.
In other words, \emph{only the nodes of $\pathpoint(x_t)$ have updated weights}.

As a result, at each round $t \geq 1$, after seeing $(x_t, y_t) \in [0, 1]^d \times \Y$, the weights $w_{\node, t}$ and $\wbar_{\node, t}$ are updated for $\node \in \pathpoint(x_t)$ as follows (note that they are all initialized at $w_{\node, 1} = \wbar_{\node, 1} =1$):
\begin{itemize}
  \item for every $\node \not \in \pathpoint(x_t)$, $w_{\node, t+1} = w_{\node, t}$ and $\wbar_{\node, t+1} = \wbar_{\node, t}$;
  \item for every $\node \in \pathpoint(x_t)$, $w_{\node, t+1} = w_{\node, t} \exp (- \eta \ell (\pred_{\node, t}, y_t))$; 
  \item for every $\node \in \pathpoint(x_t)$, we have
  \begin{equation*}
    \wbar_{\node, t+1} =
    \begin{cases}
      w_{\node, t+1} & \text{ if } \node \in \leaves (\globaltree) \;\;\; \text{ (namely } \node = \node_\globaltree (x_t) ), \\
      \frac{1}{2} w_{\node, t+1} + \frac{1}{2} \wbar_{\node 0, t+1} \wbar_{\node 1, t+1} &\text{ otherwise}.
    \end{cases}
  \end{equation*}
\end{itemize}
The weights $w_{\node, t},\wbar_{\node, t}$ as well as the predictions $\pred_{\node, t}$ are updated recursively in an ``upwards'' traversal of $\pathpoint(x_t)$ in $\globaltree$ (from $\node_\globaltree (x_t)$ to $\root$), as indicated in Algorithm~\ref{alg:mondrian-update}.

Note that when updating the structure of the tree, the weights $w_{\node, t+1}, \wbar_{\node, t+1}$ and predictions $\pred_{\node, t+1}$ for the newly created nodes in $\tree_{t+1} \setminus \tree_t$ (which are offsprings of $\node_{\tree_t}(x_t)$ created from the splits necessary to separate $x_t$ from the other point $x_s \in C_{\node_{\tree_t}(x_t)}$) can be set depending on whether these nodes contain $x_s$ or $x_t$.
This does not affect the values of $w_{\node, t}$ and $\pred_{\node, t}$ at other nodes, but only the values of $\wbar_{\node, t}$ for $\node \in \pathpoint(x_t)$ that are computed in the upwards recursion.

\paragraph{Numerator of Equation~\eqref{eq:ewa-tree}.}

The numerator of Equation~\eqref{eq:ewa-tree} can be computed in the same fashion as the denominator.
Let $w'_{\node, t} = w_{\node, t} \prednode_{\node, t}$ if $\node \in \pathpoint(x)$, and $w'_{\node, t} = w_{\node, t}$ otherwise.
Additionally, let
\begin{equation*}
\wpred_{\node, t}
= \sum_{\tree_\node} 2^{-\|\tree_\node\|}
\prod_{\node' \in \leaves (\tree_\node)} w'_{\node', t} \, .
\end{equation*}
Note that we have
\begin{equation}
\label{eq:wpred-root}
\wpred_{\root, t}
= \sum_{\tree} 2^{-\|\tree\|}
\prod_{\node' \in \leaves (\tree)} w'_{\node', t}
= \sum_{\tree} w_t (\tree) \prednode_{\node_{\tree} (x), t}
= \sum_{\tree} w_t (\tree) \pred_{t} (\tree) \, .
\end{equation}
Lemma~\ref{lem:ctw-sum-prod} with $g(\node) = w'_{\node, t}$ (so that $G(\node) = \wpred_{\node, t}$) enables to recursively compute $\wpred_{\node, t}$ from $w'_{\node, t}$.
First, note that $w'_{\node, t} = w_{\node, t}$ for every $\node \not \in \pathpoint(x)$.
Since every descendant $\node'$ of $\node$ is also outside of $\pathpoint(x)$, it follows by induction that $\wpred_{\node, t} = \wbar_{\node, t}$ for every $\node \not \in \pathpoint(x)$.
It then remains to show how to compute $\wpred_{\node, t}$ for $\node \in \pathpoint(x)$.
This is done again recursively, starting from the leaf $\node_\globaltree (x)$ up to the root $\root$:
\begin{equation*}
  \wpred_{\node, t} =
  \begin{cases}
      w_{\node, t} \pred_{\node, t} & \text{ if } \node = \node_\globaltree (x) \\
      \frac{1}{2} w_{\node, t} \pred_{\node, t} + \frac{1}{2} \wbar_{\node (1-a), t} \wpred_{\node a, t} &\text{ otherwise, where } a \in \{ 0, 1 \} \text{ is such that } \node a \in \pathpoint(x)
  \end{cases}
\end{equation*}
Finally, we define 
\begin{equation*}
  \wt y_{\node, t} (x) = \frac{\wpred_{\node, t}}{\wbar_{\node, t}}
\end{equation*}
for each node $\node \in \globaltree$.
It follows from Equations~\eqref{eq:ewa-tree},~\eqref{eq:wbar-root} and~\eqref{eq:wpred-root} that $\wh f_t (x) = \wt y_{\root, t} (x)$.
Additionally, the recursive expression for $\wbar_{\node, t}$ and $\wpred_{\node, t}$ imply that $\wt y_{\node, t}$ can be computed recursively as well, in the upwards traversal from $\node_\globaltree (x)$ to $\root$: we set
\begin{equation*}
  \wt y_{\node, t} (x) = \pred_{\node, t}
\end{equation*}
for $\node = \node_\globaltree (x)$, otherwise we set
\begin{equation*}
  \wt y_{\node, t} (x)
  = \frac{1}{2} \frac{w_{\node, t}}{\wbar_{\node, t}} \pred_{\node, t} + \frac{1}{2} \frac{\wbar_{\node a, t} \wbar_{\node (1-a), t}}{\wbar_{\node, t}} \wt y_{\node a, t} (x)
  = \frac{1}{2} \frac{w_{\node, t}}{\wbar_{\node, t}} \pred_{\node, t} + \left( 1 - \frac{1}{2} \frac{w_{\node, t}}{\wbar_{\node, t}} \right) \wt y_{\node a, t} (x),
\end{equation*}
where $a \in \{ 0, 1 \}$ is such that $\node a \in \pathpoint(x)$.
The recursions constructed above are precisely the ones describing AMF in Algorithms~\ref{alg:mondrian-update} and~\ref{alg:predict}, so that this concludes the proof of Proposition~\ref{prop:algorithm-implements-aggregation}. $\hfill \square$

\subsection{Proofs of Lemma~\ref{lem:regret-pruning}, Corollaries~\ref{cor:regret-best-pruning-log},~\ref{cor:regret-best-pruning-square},~\ref{cor:regret-lambda-regression}, Lemma~\ref{lem:online-to-batch}, Theorems~\ref{thm:oracle-best-lambda},~\ref{thm:minimax-adaptive} and Proposition~\ref{prop:mondrian_depth_bound}}

We start with some well-known lemmas that are used to bound the regret: Lemma~\ref{lem:exp-regret} controls the regret with respect to each tree forecaster, while Lemmas~\ref{lem:regret-kt-log} and~\ref{lem:regret-average-square} bound the regret of each tree forecaster with respect to the optimal labeling of its leaves.

\begin{lemma}[\citealp{vovk1998mixability}]
  \label{lem:exp-regret}
  Let $\Experts$ be a countable set of experts and $\prior = (\prior_i)_{i\in \Experts}$ be a probability measure on $\Experts$.
  Assume that $\ell$ is $\eta$-\emph{exp-concave}.
  For every $t \geq 1$\textup, let $y_t \in \Y$\textup, $\pred_{i,t} \in \predspace$ be the prediction of expert $i \in \Experts$ and $L_{i,t} = \sum_{s=1}^t \ell (\pred_{i,s}, y_s)$ be its cumulative loss.
  Consider the predictions defined as
  \begin{equation}
    \label{eq:exponential-weights}
    \pred_t = \frac{\sum_{i \in \Experts} \pi_i \, e^{- \eta L_{i, t-1}} \pred_{i,t} }{\sum_{i \in \Experts} \pi_i \, e^{-\eta L_{i, t-1}}}.
  \end{equation}
  Then\textup, irrespective of the values of $y_t \in \Y$ and $\pred_{i,t} \in \predspace,$ we have the following regret bound
  \begin{equation}
    \label{eq:exp-regret}
    \sum_{t=1}^n \ell (\pred_t, y_t) - \sum_{t=1}^n \ell (\pred_{i,t}, y_t)
    \leq \frac{1}{\eta} \log \frac{1}{\prior_i}
  \end{equation}
  for each $i \in \Experts$ and $n \geq 1$.
\end{lemma}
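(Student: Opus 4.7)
The plan is to apply the classical potential-function argument of~\cite{vovk1998mixability}. Define the log-partition potential
\[
\Phi_t = \frac{1}{\eta} \log \sum_{i \in \Experts} \prior_i e^{-\eta L_{i,t}}
\]
for $t \geq 0$, with the convention $L_{i,0} = 0$, so that $\Phi_0 = \frac{1}{\eta} \log \sum_i \prior_i = 0$ because $\prior$ is a probability. The scheme is: $(i)$ establish the per-round inequality $\ell(\pred_t, y_t) \leq \Phi_{t-1} - \Phi_t$; $(ii)$ telescope over $t = 1, \dots, n$ to obtain $\sum_{t=1}^n \ell(\pred_t, y_t) \leq \Phi_0 - \Phi_n = -\Phi_n$; $(iii)$ lower bound the sum defining $\Phi_n$ by the single term $\prior_i e^{-\eta L_{i,n}}$ for any fixed $i \in \Experts$, yielding $-\Phi_n \leq L_{i,n} + \frac{1}{\eta} \log \prior_i^{-1}$, which is exactly~\eqref{eq:exp-regret}.

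Step $(i)$ is the heart of the argument and is where $\eta$-exp-concavity is used. Introduce the normalized posterior weights
\[
q_{i,t} = \frac{\prior_i e^{-\eta L_{i,t-1}}}{\sum_{j \in \Experts} \prior_j e^{-\eta L_{j,t-1}}},
\]
so that~\eqref{eq:exponential-weights} rewrites as the convex combination $\pred_t = \sum_i q_{i,t} \pred_{i,t}$. Since $u \mapsto \exp(-\eta \ell(u, y_t))$ is concave on $\predspace$ by hypothesis, Jensen's inequality gives
\[
\exp(-\eta \ell(\pred_t, y_t)) \geq \sum_{i \in \Experts} q_{i,t} \exp(-\eta \ell(\pred_{i,t}, y_t)) = \frac{\sum_{i} \prior_i e^{-\eta L_{i,t}}}{\sum_{j} \prior_j e^{-\eta L_{j,t-1}}},
\]
where the equality uses $L_{i,t} = L_{i,t-1} + \ell(\pred_{i,t}, y_t)$. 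Taking $-\frac{1}{\eta} \log$ of both sides yields exactly $\ell(\pred_t, y_t) \leq \Phi_{t-1} - \Phi_t$, which closes step $(i)$.

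There is no genuine obstacle: once the potential is written down, the proof reduces to one application of Jensen's inequality followed by telescoping. The conceptual key is that $\eta$-exp-concavity is tailored precisely so that the concavity inequality goes in the direction needed to turn the instantaneous loss of the aggregated forecaster into a potential drop; without this hypothesis one would only get a bound for merely convex losses with an extra $O(\sqrt{n})$ term. A minor point for countable $\Experts$ is that $\pred_t$ is defined as an (at most countable) convex combination in $\predspace$, which is implicit in the statement of~\eqref{eq:exponential-weights}, and the convergence of the series inside $\Phi_t$ causes no issue since the argument only manipulates ratios of partial sums.
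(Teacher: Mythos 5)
Your proof is correct and is exactly the classical potential/mixability argument that the cited source \citep{vovk1998mixability} (and \citealp{cesabianchi2006PLG}) uses; the paper itself states this lemma without proof, simply citing Vovk. The one point worth being explicit about, which you already flag, is that applying Jensen to a countably infinite convex combination requires $\predspace$ convex and the usual mild regularity of the concave map $u \mapsto \exp(-\eta\,\ell(u,y))$, which holds in the paper's two instantiations (quadratic loss on $[-B,B]$ and log-loss on the simplex).
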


\begin{lemma}[\citealp{tjalkens1993sequential}]
  \label{lem:regret-kt-log}
  Let $\ell$ be the logarithmic loss on the finite set $\Y,$ and let $y_t \in \Y$ for every $t \geq 1$.
  The \emph{Krichevsky-Trofimov (KT)}  forecaster\textup, which predicts
  \begin{equation}
    \label{eq:kt}
    \pred_{t} ( y ) = \frac{n_{t-1} (y) + 1/2}{(t - 1) + |\Y |/2}
    \, ,
  \end{equation}
  with $n_{t-1} (y) = | \{ 1 \leq s \leq t -1 : y_s = y \} |,$ 
  satisfies the following regret bound with respect to the class $\probas(\Y)$ of constant experts \textup(which always predict the same probability distribution on $\Y$\textup)\textup: 
  \begin{equation}
    \label{eq:regret-kt}
    \sum_{t=1}^n \ell (\pred_t, y_t) - \inf_{p \in \probas (\Y)} \sum_{t=1}^n \ell (p, y_t)
    \leq\frac{|\Y| - 1}{2} \log (4n)
  \end{equation}
  for each $n \geq 1.$
\end{lemma}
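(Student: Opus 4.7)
The plan is to exploit the well-known Bayesian interpretation of the Krichevsky-Trofimov forecaster: it is the posterior predictive distribution on $\Y$ arising from a Dirichlet$(1/2, \dots, 1/2)$ prior on $\probas(\Y)$ over the multinomial model, as noted in the remark following Equation~\eqref{eq:kt-predictor}. Under this reading, the cumulative loss of the KT forecaster telescopes cleanly: setting $K = |\Y|$, the identity
\begin{equation*}
  \prod_{t=1}^{n} \pred_t(y_t) \;=\; \int_{\probas(\Y)} \prod_{t=1}^n p(y_t) \, d \pi_{\mathrm{J}}(p) \;=\; \frac{\Gamma(K/2)}{\Gamma(n + K/2)} \prod_{y \in \Y} \frac{\Gamma(n_n(y) + 1/2)}{\Gamma(1/2)}
\end{equation*}
follows directly from Dirichlet--multinomial conjugacy, where $\pi_{\mathrm{J}}$ is the Jeffreys prior. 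Hence $\sum_{t=1}^n \ell(\pred_t, y_t)$ equals the negative logarithm of this marginal likelihood.

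Next, I would observe that the constant-expert oracle in~\eqref{eq:regret-kt} admits the closed form $\inf_{p \in \probas(\Y)} \sum_{t=1}^n \ell(p, y_t) = - \sum_{y \in \Y} n_n(y) \log \bigl( n_n(y)/n \bigr)$, attained at the empirical distribution (with the usual convention $0 \log 0 = 0$). Subtracting this from the KT cumulative loss reduces the regret bound to the purely analytic inequality
\begin{equation*}
  \log \frac{\Gamma(n + K/2)}{\Gamma(K/2)} \,+\, \sum_{y \in \Y} \log \frac{\Gamma(1/2) \, (n_n(y)/n)^{n_n(y)}}{\Gamma(n_n(y) + 1/2)} \;\leq\; \frac{K-1}{2} \log(4 n).
\end{equation*}

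The remaining work is thus a deterministic estimate on ratios of Gamma functions. I would separate it into two pieces: a lower bound for each $\Gamma(n_n(y)+1/2)/\Gamma(1/2)$ in terms of $(n_n(y)/e)^{n_n(y)}\sqrt{n_n(y)}$ via Stirling, and an upper bound for $\Gamma(n+K/2)/\Gamma(K/2)$ in terms of $n^{n+(K-1)/2}$. The $(n_n(y)/n)^{n_n(y)}$ factors combine with the Stirling remainders to yield, after using $\sum_y n_n(y) = n$ and $\sum_y \sqrt{n_n(y)/n} \leq \sqrt{K}$ (Cauchy--Schwarz), an expression of the form $\tfrac{K-1}{2}\log n + O(\log K)$; the constant factor $\log 4$ absorbs the $\Gamma(1/2) = \sqrt{\pi}$ terms.

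The main obstacle is the sharp bookkeeping in the Stirling step: several sub-cases must be handled depending on whether $n_n(y) = 0$ (where the empirical maximum-likelihood factor vanishes but the KT Gamma term still contributes) and whether $n_n(y)$ is small relative to $n$. A clean path is to first prove the bound in the degenerate case where some $y$ has $n_n(y) = 0$ by reducing to a smaller alphabet, then use the elementary inequality $\Gamma(m+1/2) \geq \sqrt{\pi}\, m!\, 4^{-m} \binom{2m}{m}$ and Stirling's refined bounds on $\binom{2m}{m}$ to keep the constants tight enough to yield the advertised $\tfrac{|\Y|-1}{2}\log(4n)$ with no hidden additive term.
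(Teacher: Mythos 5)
The paper offers no proof of Lemma~\ref{lem:regret-kt-log}: it is imported as a known result from \citet{tjalkens1993sequential}, so there is no internal argument to compare yours against. Your skeleton is the standard one from that literature, and its two structural steps are correct: the telescoping identity $\prod_{t=1}^n\pred_t(y_t)=\frac{\Gamma(K/2)}{\Gamma(n+K/2)}\prod_{y\in\Y}\frac{\Gamma(n_n(y)+1/2)}{\Gamma(1/2)}$ (with $K=|\Y|$; this is exactly the Jeffreys--Dirichlet marginal likelihood), and the identification of the oracle loss with $-\sum_y n_n(y)\log(n_n(y)/n)$, attained at the empirical distribution. This correctly reduces~\eqref{eq:regret-kt} to the Gamma-ratio inequality you display.

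The gap is that this reduction is the easy half, and the half you defer to ``sharp bookkeeping'' is where all of the difficulty sits. The bound holds with \emph{equality} at $n=1$, $K=2$ (the KT forecaster predicts uniformly, so the regret is $\log 2=\tfrac12\log 4$), so there is zero additive slack to give away, and the estimates you sketch do not visibly close. Concretely, the lower bound $\Gamma(m+1/2)/\Gamma(1/2)\ge\sqrt{\pi/2}\,(m/e)^m$ obtained from $\binom{2m}{m}4^{-m}\ge\tfrac{1}{2\sqrt m}$ plus Stirling, combined with a Stirling upper bound on $\log\Gamma(n+K/2)-\log\Gamma(K/2)$, leaves an additive remainder of order $K$ (coming from $n\log(1+K/(2n))\le K/2$) that the budget $\tfrac{K-1}{2}\log 4=(K-1)\log 2$ cannot absorb; and the Cauchy--Schwarz step $\sum_y\sqrt{n_n(y)/n}\le\sqrt K$ is never attached to a specific term of the inequality. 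The proofs that do achieve the constant (e.g.\ the binary case $P_{\mathrm{KT}}\ge\tfrac{1}{2\sqrt n}\prod_y(n_n(y)/n)^{n_n(y)}$ in \citealp{willems1995context-basic}) proceed by induction on $n$ rather than by a one-shot Stirling estimate, precisely because the inequality is tight. Finally, your proposed reduction of the $n_n(y)=0$ case to a smaller alphabet is not valid as stated: the KT denominator $(t-1)+|\Y|/2$ depends on the nominal alphabet size, so restricting to the support changes the forecaster and its loss. Fortunately no special treatment is needed there, since symbols with $n_n(y)=0$ contribute exactly $0$ to both sums in your displayed inequality.
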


\begin{lemma}[\citealp{cesabianchi2006PLG}, p.~43]
  \label{lem:regret-average-square}
  Consider the square loss $\ell (\pred, y) = (\pred - y)^2$ on $\Y = \predspace = [-B, B],$ with $B >0$. 
  For every $t \geq 1,$ let $y_t \in [-B, B]$.
  Consider the strategy defined by $\pred_1 = 0$, and for each $t \geq 2,$
  \begin{equation}
    \label{eq:average-online}
    \pred_t = \frac{1}{t-1} \sum_{s=1}^{t-1} y_s \, .
  \end{equation}
  The regret of this strategy with respect to the class of constant experts \textup(which always predict some $b\in [-B, B]$\textup) is upper bounded as follows\textup:
  \begin{equation}
    \label{eq:regret-average-square}
    \sum_{t=1}^n \ell (\pred_t, y_t) - \inf_{b \in [-B, B]} \sum_{t=1}^n \ell (b, y_t)
    \leq 8 B^2 (1+ \log n)
  \end{equation}
  for each $n \geq 1.$
\end{lemma}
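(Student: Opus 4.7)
The plan is to prove this classical regret bound for the running empirical mean under square loss via a telescoping identity on the cumulative loss and Steiner's parallel-axis identity. I will first identify the best constant in hindsight, then derive an exact expression for the regret, and finally upper-bound it using the boundedness of $y_t$ and $\mu_{t-1}$.

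First, I would observe that the minimizer of $b \mapsto F_n(b) := \sum_{t=1}^n (y_t - b)^2$ over $\R$ (and hence over $[-B,B]$, since it lies in $[-B,B]$) is $b^* = \mu_n$, where $\mu_t := \frac{1}{t}\sum_{s=1}^t y_s$ for $t \geq 1$ and $\mu_0 := 0 = \pred_1$. By definition, $\pred_t = \mu_{t-1}$ for all $t \geq 1$. So the regret to bound becomes
\begin{equation*}
R_n = \sum_{t=1}^n (\mu_{t-1} - y_t)^2 - F_n(\mu_n).
\end{equation*}
Let me define $F_t(b) := \sum_{s=1}^t (y_s - b)^2$ for $t \geq 1$ and $F_0 \equiv 0$.

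Next, the key step is a telescoping identity. By Steiner's identity, $F_t(\mu_{t-1}) = F_t(\mu_t) + t(\mu_{t-1} - \mu_t)^2$, while $F_t(\mu_{t-1}) - F_{t-1}(\mu_{t-1}) = (y_t - \mu_{t-1})^2$. Combining and summing the resulting expression $(y_t - \mu_{t-1})^2 = \bigl[F_t(\mu_t) - F_{t-1}(\mu_{t-1})\bigr] + t(\mu_{t-1} - \mu_t)^2$ over $t = 1,\ldots,n$ makes the bracket telescope (using $F_0 \equiv 0$), producing
\begin{equation*}
\sum_{t=1}^n (\mu_{t-1} - y_t)^2 = F_n(\mu_n) + \sum_{t=1}^n t(\mu_t - \mu_{t-1})^2,
\end{equation*}
so that $R_n = \sum_{t=1}^n t(\mu_t - \mu_{t-1})^2$.

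Then I would use the elementary recursion $\mu_t - \mu_{t-1} = (y_t - \mu_{t-1})/t$, which gives $t(\mu_t - \mu_{t-1})^2 = (y_t - \mu_{t-1})^2 / t$, hence
\begin{equation*}
R_n = \sum_{t=1}^n \frac{(y_t - \mu_{t-1})^2}{t}.
\end{equation*}
Since $\mu_{t-1}$ is either $0$ or a convex combination of values in $[-B,B]$, it lies in $[-B,B]$; thus $(y_t - \mu_{t-1})^2 \leq 4B^2$. Finally, using $\sum_{t=1}^n 1/t \leq 1 + \log n$ yields $R_n \leq 4B^2(1 + \log n)$, which is in fact sharper than the stated bound $8B^2(1+\log n)$.

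There is no serious obstacle here; the only mild subtlety is to correctly handle the $t=1$ case (where $\pred_1 = 0 = \mu_0$ is an average of the empty set, so $F_0 \equiv 0$ is the right convention) and to ensure $\mu_{t-1} \in [-B,B]$ so that the crude bound $(y_t - \mu_{t-1})^2 \leq 4B^2$ is valid throughout.
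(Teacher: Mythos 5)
Your proof is correct, and in fact sharper than the stated bound: the exact identity $R_n = \sum_{t=1}^n (y_t - \mu_{t-1})^2/t$ obtained from Steiner's identity plus telescoping is valid (I checked the $t=1$ edge case with $F_0 \equiv 0$ and $\mu_0 = 0$, and the recursion $\mu_t - \mu_{t-1} = (y_t - \mu_{t-1})/t$), and the final bound $4B^2(1+\log n)$ implies the claimed $8B^2(1+\log n)$. Note, however, that the paper does not prove this lemma at all: it is imported as a known result with a pointer to Cesa-Bianchi and Lugosi (2006, p.~43). The argument in that reference is the standard ``follow-the-leader versus be-the-leader'' one: the regret of the greedy forecaster is bounded by $\sum_{t=1}^n \bigl( \ell(\pred_t, y_t) - \ell(\pred_{t+1}, y_t) \bigr)$, and each term is controlled via $|\pred_t - \pred_{t+1}| \leq 2B/t$ together with the Lipschitz constant $4B$ of the square loss on $[-B,B]$, which is exactly where the factor $8B^2/t$ (and hence the constant $8$) comes from. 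Your route is genuinely different and arguably cleaner for this specific forecaster: it exploits the exact bias--variance decomposition of the square loss to get the regret as an identity rather than an inequality, at the price of being specific to the empirical-mean predictor, whereas the be-the-leader argument generalizes to other loss functions and comparator classes. Either way, your proof establishes the lemma as stated.
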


\begin{proof}[Lemma~\ref{lem:regret-pruning}]
  This follows from Proposition~\ref{prop:algorithm-implements-aggregation} and Lemma~\ref{lem:exp-regret}.  
\end{proof}

\begin{proof}[Corollary~\ref{cor:regret-best-pruning-log}]
  Since the logarithmic loss is $1$-exp-concave, Lemma~\ref{lem:regret-pruning} implies
  \begin{equation}
    \label{eq:proof-regret-log-1}
    \sum_{t=1}^n \ell (\wh f_t (x_t) , y_t)    
    - \sum_{t=1}^n \ell (\pred_{\tree, t} (x_t), y_t)
    \leq |\tree| \log 2
  \end{equation}
  for every subtree $\tree$.
  It now remains to bound the regret of the tree forecaster $\tree$ with respect to the optimal labeling of its leaves.
  By Lemma~\ref{lem:regret-kt-log}, for every leaf $\node$ of $\tree$,
  \begin{equation*}
    \sum_{1\leq t \leq n \pp x_t \in \cell_\node} \ell (\pred_{\tree, t} (x_t), y_t)
    - \inf_{p_\node \in \probas(\Y)} \sum_{1\leq t \leq n \pp x_t \in \cell_\node} \ell (p_\node, y_t)
    \leq \frac{|\Y| - 1}{2} \log (4 N_{\node, n})
  \end{equation*}
  where $N_{\node, n} = | \{ 1\leq t \leq n : x_t \in \cell_\node \} |$ (assuming that $N_{\node, n} \geq 1$).
  Summing the above inequality over the leaves $\node$ of $\tree$ such that $N_{\node, n} \geq 1$
  yields
  \begin{equation}
    \label{eq:proof-regret-log-3}
    \sum_{t=1}^n \ell (\pred_{\tree, t} (x_t), y_t)
    - \inf_{g_\tree} \sum_{t=1}^n \ell (g_\tree (x_t), y_t)
    \leq \frac{|\Y| - 1}{2} \sum_{\node \in \leaves (\tree) \pp N_{\node, n} \geq 1} \log (4 N_{\node, n})
  \end{equation}
  where $g_\tree$ is any function constant on the leaves of $\tree$.
  Now, letting $L = | \{ \node \in \leaves(\tree) :  N_{\node, n} \geq 1 \} | \leq |\leaves(\tree)| = \frac{|\tree| + 1}{2}$, we have by concavity of the $\log$ 
  \begin{align*}
    \sum_{\node \in \leaves (\tree) \pp N_{\node, n} \geq 1} \log (4 N_{\node, n})
    &\leq L \log \bigg( \frac{\sum_{\node \in \leaves (\tree) \pp N_{\node, n} \geq 1} 4 N_{\node, n}}{L} \bigg) \\
    & = L \log \Big( \frac{4n}{L} \Big) \leq \frac{|\tree|+1}{2} \log (4n) \, .
  \end{align*}
  Plugging this in~\eqref{eq:proof-regret-log-3} and combining with Equation~\eqref{eq:proof-regret-log-1} leads to the desired bound~\eqref{eq:regret-best-pruning-log}.
\end{proof}

\begin{proof}[Corollary~\ref{cor:regret-best-pruning-square}]
  The proof proceeds similarly to that of Corollary~\ref{cor:regret-best-pruning-log}, by combining Lemmas~\ref{lem:regret-pruning} and~\ref{lem:regret-average-square} and using the fact that the square loss is $\eta = 1/ ( 8 B^2 )$-exp-concave on $[-B, B]$.
\end{proof}

\begin{proof}[Corollary~\ref{cor:regret-lambda-regression}]
  First, we reason conditionally on the Mondrian process $\mondrian$.
  By applying Corollary~\ref{cor:regret-best-pruning-square} to $\tree = \mondrian_\lambda$, we obtain, since the number of nodes of $\mondrian_\lambda$ is $2 | \leaves(\mondrian_\lambda) | - 1$:
  \begin{equation}    
    \label{eq:proof-regret-lambda}    
    \sum_{t=1}^n \ell (\wh f_t (x_t) , y_t)
    - \inf_g \sum_{t=1}^n \ell (g (x_t), y_t)
    \leq 8 B^2 | \leaves (\mondrian_\lambda)| \log n \, ,
  \end{equation}
  where the infimum spans over all functions $g : [0, 1]^d \to \predspace$ which are constant on the cells of $\mondrian_\lambda$.
  Corollary~\ref{cor:regret-lambda-regression} follows by taking the expectation over $\mondrian$ and using the fact that  $\mondrian_\lambda \sim \MP (\lambda)$ implies $\E [ |\leaves (\mondrian_\lambda) | ] = (1+\lambda)^d$ \citep[Corollary~1]{mourtada2018mondrian}.
\end{proof}

\begin{proof}[Lemma~\ref{lem:online-to-batch}]
  For every $t=1, \dots, n$, $\wh f_t$ is $\F_{t-1} := \sigma(x_1, y_1, \dots, x_{t-1}, y_{t-1})$-measurable and since $(x_t, y_t)$ is independent of $\F_t$:
  \begin{equation*}
    \E [ \ell ( \wh f_t (x_t), y_t ) ]
    = \E [ \E [ \ell ( \wh f_t (x_t) , y_t ) \cond \F_{t-1} ]  ]
    = \E [ R (\wh f_t) ]
    \, ,
  \end{equation*}
  so that, for every $g \in \G$,
  \begin{equation*}
    \frac{1}{n} \E \Big[ \sum_{t=1}^n \big( \ell (\wh f_t (x_t), y_t) - \ell (g(x_t), y_t) \big) \Big] = \frac{1}{n} \sum_{t=1}^n \E [ R (\wh f_t) ] - R(g) = \E [ R (\widetilde f_n ) ] - R(g) \, .
  \end{equation*}
\end{proof}

\begin{proof}[Theorem~\ref{thm:oracle-best-lambda}]
  This is a direct consequence of Lemma~\ref{lem:online-to-batch} and Corollary~\ref{cor:regret-best-pruning-square}.
\end{proof}

\begin{proof}[Theorem~\ref{thm:minimax-adaptive}]  
  Recall that the sequence $(x_1, y_1), \ldots, (x_n, y_n)$ is i.i.d and distributed as a generic pair $(x, y) \in [0, 1]^d \times \Y$.
  Since $f^*(\cdot) = \E [y \cond x = \cdot]$, we have 
  \begin{equation}
    \label{eq:proof-minimax-regression}
    R (f) = \E [ (f(x) - f^*(x))^2 ] + R(f^*)
  \end{equation}
  for every function $f : [0, 1]^d \to \R$.
  Now, let $\lambda > 0$ be arbitrary.
  Consider the estimator $\widetilde f_n$ defined in Lemma~\ref{lem:online-to-batch}, and the function $h_\lambda^*$ constant on the cells of a random partition $\mondrian_\lambda \sim \MP (\lambda)$, with optimal predictions on the leaves given by $h^*_\lambda(u) = \E [y | x \in \cell_\node]$ for $u \in \cell_\node$, for every leaf $\node$ of $\mondrian_\lambda$.
  Since $R(\widetilde f_n) - R(f^*) = R(\widetilde f_n) - R(h^*_\lambda) + R(h^*_\lambda) - R(f^*)$, Equation~\eqref{eq:proof-minimax-regression} gives, after taking the expectation over the random sampling of the Mondrian process $\mondrian_\lambda$,
  \begin{equation}
    \label{eq:proof-minimax-regression-2}
    \E [ (\widetilde f_n (x) - f^* (x))^2 ]
    = \E [ R (\widetilde f_n) ] - \E [ R (h^*_\lambda) ] + \E [ (h^*_\lambda (x) - f^* (x))^2 ]
    \, .
  \end{equation}
  Let $D_\lambda (u)$ denote the diameter of the cell $\cell_\lambda(u)$ of $u \in [0, 1]^d$ in the Mondrian partition $\mondrian_\lambda$ used to define~$h^*_\lambda$.
  Assume that $f^*$ is $\beta$-Holder with constant $L > 0$, namely $| f^*(u) - f^*(v) | \leq L |u - v|^\beta$ for any $u, v \in [0, 1]^d$.
  Since $h_\lambda^*(u) = \E [f^*(x) | x \in \cell_\lambda (u)]$, we have $| h^*_\lambda(u) 
  - f^*(u) | \leq L D_\lambda (u)^\beta$, so that
  \begin{equation}
    \label{eq:proof-minimax-regression-3}
    \E [ (h^*_\lambda (x) - f^* (x))^2 ] \leq L^2 \E [ D_\lambda (x)^{2\beta} ] \, .
  \end{equation}
  Now, since $u \mapsto u^\beta$ is concave,   
  \begin{equation}
    \label{eq:proof-minimax-regression-4}
    \E [D_\lambda (x)^{2\beta}]
    \leq \E [D_\lambda (x)^{2}]^\beta    
    \leq \Big( \frac{4 d}{\lambda^2} \Big)^\beta
  \end{equation}
  where the last inequality comes from Proposition~1 in~\citet{mourtada2018mondrian}.
  Integrating with the distribution of $x$ and using~\eqref{eq:proof-minimax-regression-3} gives $\E [ (h^*_\lambda (x) - f^* (x))^2 ] \leq (4 d)^{\beta} L^2 /\lambda^{2\beta}$.
  In addition, Theorem~\ref{thm:oracle-best-lambda} gives
  $\E [ R(\widetilde f_n) ] - \E [ R(h^*_\lambda) ] \leq 8 B^2 (1+\lambda)^d {(\log n)}/{n} $.
  Combining these inequalities with~\eqref{eq:proof-minimax-regression-2} leads to
  \begin{equation}
    \label{eq:proof-minimax-regression-5}
    \E [ (\widetilde f_n (x) - f^* (x))^2 ]
    \leq \frac{(4 d)^{\beta} L^2}{\lambda^{2\beta}} + \frac{8 B^2 (1+\lambda)^d \log n}{n}
    \, .
  \end{equation}
  Note that the bound~\eqref{eq:proof-minimax-regression-5} holds for every value of $\lambda >0$.
  In particular, for $\lambda \asymp (n / \log n)^{1/(d + 2 \beta)}$, it yields the
   $O\big( ( \log (n) / n)^{2 \beta / (d+2\beta)} \big)$ bound on the estimation risk of Theorem~\ref{thm:minimax-adaptive}.
\end{proof}

\begin{proof}[Proposition~\ref{prop:mondrian_depth_bound}]
  First, we reason conditionally on the realization of an infinite Mondrian partition $\mondrian$, by considering the randomness with respect to the sampling of the feature points $x_1, \dots, x_n, x$.
  For every depth $j \geq 0$, denote by $N_j$ the number of points among $x_1, \dots, x_n$ that belong to the cell of depth $j$ of the unrestricted Mondrian partition containing $x$, and $\node_j \in \{ 0, 1\}^j$ the corresponding node.
  In addition, for every $\node \in \{ 0, 1\}^*$, denote $p_\node = \P ( x \in C_{\node 0} \cond x \in C_\node )$.  
  In addition, for $j \geq 0$, since conditionally on $C_{\node_j}, N_j$, the points $x$ and $\{ x_i : x_i \in C_{\node_j} \}$ are distributed \iid following the conditional distribution of $x$ given $\{ x \in C_{\node_j} \}$:
  \begin{align}    
    \E [N_{j+1} \cond C_{\node_j}, N_j, \mondrian] 
    &= \P (\node_{j+1} = \node_j 0 \cond \node_j) \times N_j \P (x_1 \in C_{\node_j 0} \cond x_1 \in C_{\node_j}) \nonumber \\
      &\quad + \P (\node_{j+1} = \node_j 1 \cond \node_j) \times N_j \P (x_1 \in C_{\node_j 1} \cond x_1 \in C_{\node_j}) \nonumber \\
    &= N_j \big( p_{\node_j}^2 + (1 - p_{\node_j})^2 \big) \nonumber \\
    &= N_j \big( 1 - 2 p_{\node_j} (1 - p_{\node_j}) \big) \label{eq:proof-depth-mondrian-rec}
      \, .
  \end{align}
  Now, note that $p_{\node_j} (1 - p_{\node_j})$ is determined by $C_{\node_j}$ and its split in $\mondrian$, while $N_j$ is determined by $C_{\node_j}$ and $x_1, \dots, x_n$.
  Now, let $U_j$ be the ratio of the volume of $C_{\node_{j} 0}$ by that of $C_{\node_j}$; by construction of the Mondrian process, $U_j \sim \uniformdist ([0, 1])$ conditionally on $C_{\node_j}$.
  In addition, the assumption~\eqref{eq:ratio-density} implies (by integrating over the coordinate of the split, fixing the other coordinates) that $p_{\node_j} \geq M^{-1} U_j$, $1 - p_{\node_j} \geq M^{-1} (1 - U_j)$.
  It follows that
  \begin{equation*}
    p_{\node_j} (1 - p_{\node_j})
    \geq \frac{1}{2} \{ p_{\node_j} \wedge (1 - p_{\node_j}) \}
    \geq \frac{1}{2 M} \{ U_j \wedge (1 - U_j) \}
  \end{equation*}
  so that
  \begin{equation*}
    \E [ p_{\node_j} (1 - p_{\node_j}) \cond C_{\node_j}]
    \geq \frac{1}{2 M} \E [ U_j \wedge (1 - U_j) \cond C_{\node_j} ]
    = \frac{1}{4 M}
    \, .
  \end{equation*}
  Using the fact that $N_j$ and $p_{\node_j}$ are independent conditionally on $C_{\node_j}$, it follows from~\eqref{eq:proof-depth-mondrian-rec} that 
  \begin{equation*}
    \E [N_{j+1} \cond C_{\node_j}]
    = \E [N_j \cond C_{\node_j}] \big( 1 - 2 \E [ p_{\node_j} (1 - p_{\node_j})] \big)
    \leq \Big( 1 - \frac{1}{2M} \Big) \E [N_j \cond C_{\node_j}] .
  \end{equation*}
  By induction on $k \geq 0$, using the fact that by definition $N_0 = n$,
  \begin{equation}
    \label{eq:proof-depth-mondrian-1}
    \E [N_k ]
    \leq n \Big( 1 - \frac{1}{2M} \Big)^k
    \, .
  \end{equation}
  Now, note that if $N_k = 0$, then the depth $D_n^\mondrian(x)$ of $x$ in the Mondrian partition $\mondrian$ restricted to $x_1, \dots, x_n,x$ is at most $k$.
  Thus, inequality~\eqref{eq:proof-depth-mondrian-1} implies:
  \begin{align}
    \E [ D_n^\mondrian (x) ]
    &= \sum_{k \geq 1} \P (D_n^\mondrian (x) \geq k) \nonumber \\
    &\leq \sum_{k \geq 1} \P (N_k \geq 1) \nonumber \\
    &\leq \sum_{k \geq 1} \E [N_k ] \wedge 1 \nonumber \\
    &\leq \sum_{k \geq 1} \left\{ n \left( 1 - \frac{1}{2M} \right)^k \right\} \wedge 1
      \label{eq:proof-depth-mondrian-2}
  \end{align}
  Now, let $k_0$ be the smallest $k \geq 1$ such that $n ( 1 - {1}/(2M) )^{k_0} \leq 1$.
  We have 
  \begin{equation*}
    k_0 = \Big\lceil \frac{\log n}{\log \{ (2M)/(2M - 1) \} } \Big\rceil,
  \end{equation*}
  so that $k_0 - 1 \leq \log (n) / \log \{ (2M)/(2M - 1) \}$.
  Hence, inequality~\eqref{eq:proof-depth-mondrian-2} becomes:
  \begin{align*}
    \E [ D_n^\mondrian (x) ]
    &\leq (k_0 - 1) + \sum_{k \geq 0} \underbrace{n \Big( 1 - \frac{1}{2M} \Big)^{k_0}}_{\leq 1} \Big( 1 - \frac{1}{2M} \Big)^k \\
    &\leq \frac{\log n}{\log [ (2M)/(2M-1) ]} + 2M
  \end{align*}
  which establishes Proposition~\ref{prop:mondrian_depth_bound}.
\end{proof}


\end{document}